\documentclass[default,iicol]{sn-jnl}

\jyear{2022}%




\raggedbottom

\usepackage{dirtytalk}
\usepackage{enumitem}
\usepackage[utf8]{inputenc} 
\usepackage[T1]{fontenc}    
\usepackage{hyperref}       
\usepackage{url}            
\usepackage{booktabs}       
\usepackage{amsfonts}       
\usepackage{nicefrac}       
\usepackage{microtype}      
\usepackage{xcolor}         
\usepackage{makecell, multirow}
\usepackage{algorithm}
\usepackage{amsthm}
\usepackage{algpseudocode}
\usepackage{wrapfig}
\usepackage{caption}
\usepackage{subcaption}

\usepackage{amsmath,amsfonts,bm}
\usepackage{ amssymb }









\def\eqref#1{equation~\ref{#1}}









\def\1{\bm{1}}









\def\vc{{\bm{c}}}
\def\vd{{\bm{d}}}
\def\ve{{\bm{e}}}

\def\vs{{\bm{s}}}

\def\vv{{\bm{v}}}

\def\vx{{\bm{x}}}

\def\vsigma{{\bm{\sigma}}}
\def\vlambda{{\bm{\lambda}}}



\def\mG{{\bm{G}}}
\def\mH{{\bm{H}}}
\def\mI{{\bm{I}}}
\def\mJ{{\bm{J}}}

\DeclareMathAlphabet{\mathsfit}{\encodingdefault}{\sfdefault}{m}{sl}
\SetMathAlphabet{\mathsfit}{bold}{\encodingdefault}{\sfdefault}{bx}{n}
















\newcommand{ \paren }[1]{ \ensuremath {\left(  #1 \right)} }
\newcommand{\norm}[2]{\left\| #1 \right\|_{#2}}
\newcommand{\abs}[1]{\left| #1 \right|}

\usepackage{mathtools}
\newcommand{\defeq}{\vcentcolon=}

\usepackage{graphicx,amsfonts,amscd,amssymb,bm,url,color,latexsym,bbm,amsthm,amsmath}
\usepackage{physics}
\allowdisplaybreaks
\usepackage[capitalize,nameinlink]{cleveref}

\newtheorem{theorem}{Theorem}[section]
\newtheorem{lemma}[theorem]{Lemma}

\renewcommand{\mathbf}{\boldsymbol}

\newcommand{\mb}{\mathbf}
\newcommand{\mc}{\mathcal}

\newcommand{\bb}{\mathbb}

\newcommand{\setJu}[1]{\left\{ #1 \right\}}

\newcommand{\reals}{\bb R}

\newcommand{\eps}{\varepsilon}
\newcommand{\RJU}{\reals}

\newcommand{\indicator}{\indic}

\newcommand{\brac}{\bqty}
\newcommand{\Brac}{\Bqty}



\DeclareMathOperator{\conv}{conv}

\DeclareMathOperator{\sign}{sign}

\DeclareMathOperator{\st}{s.t.}

\DeclareMathOperator*{\argmin}{arg\,min}



\newcommand{\wh}{\widehat}

\newcommand{\TJU}{\intercal}

\newcommand{\indic}[1]{\mathbbm 1\left\{#1\right\}} 


\newcommand{\formulation}{Form.\,}

\newcommand{\pygranso}{\texttt{PyGRANSO}}
\newcommand{\change}{\textcolor{black}}

\begin{document}

\title[ ]{Optimization and Optimizers for Adversarial Robustness}

\author*[1]{\fnm{Hengyue} \sur{Liang}}\email{liang656@umn.edu}

\author[2]{\fnm{Buyun} \sur{Liang}}\email{liang664@umn.edu}

\author[2]{\fnm{Le} \sur{Peng}}\email{peng0347@umn.edu}

\author[3]{\fnm{Ying} \sur{Cui}}\email{yingcui@umn.edu}
\author[4]{\fnm{Tim} \sur{Mitchell}}\email{tim.mitchell@qc.cuny.edu}

\author*[2]{\fnm{Ju} \sur{Sun}}\email{jusun@umn.edu}

\affil[1]{\orgdiv{Department of Electrical \& Computer Engineering}, \orgname{University of Minnesota}
}
\affil[2]{\orgdiv{Department of Computer Science \& Engineering}, \orgname{University of Minnesota}
}
\affil[3]{\orgdiv{Department of Industrial \& Systems Engineering}, \orgname{University of Minnesota}
}
\affil[4]{\orgdiv{Department of Computer Science}, \orgname{Queens College, City University of New York}
}


\abstract{Empirical robustness evaluation (RE) of deep learning models against adversarial perturbations entails solving nontrivial constrained optimization problems. Existing numerical algorithms that are commonly used to solve them in practice predominantly rely on projected gradient, and mostly handle perturbations modeled by the $\ell_1$, $\ell_2$ and $\ell_\infty$ distances. In this paper, we introduce a novel algorithmic framework that blends a general-purpose constrained-optimization solver~\pygranso~\textbf{w}ith \textbf{C}onstraint-\textbf{F}olding (PWCF), which can add more reliability and generality to the state-of-the-art RE packages, e.g., \texttt{AutoAttack}. Regarding \emph{reliability}, PWCF provides solutions with stationarity measures and feasibility tests to assess the solution quality. 
For \emph{generality}, PWCF can handle perturbation models that are typically inaccessible to the existing projected gradient methods; the main requirement is the distance metric to be almost everywhere differentiable. Taking advantage of PWCF and other existing numerical algorithms, we further explore the distinct patterns in the solutions found for solving these optimization problems using various combinations of losses, perturbation models, and optimization algorithms. We then discuss the implications of these patterns on the current robustness evaluation and adversarial training.}

\keywords{deep learning, deep neural networks, adversarial robustness, adversarial attack, adversarial training, minimal distortion radius, robustness radius, constrained optimization, sparsity}

\maketitle

\section{Introduction}
\label{Sec:introduction}
\begin{table}[tb]
    \centering
    \begin{tabular}{l c}
        \hline
        \multicolumn{2}{c}{\textbf{Frequently Used Abbreviations}} \\
        \hline
        APGD & Auto-PGD \\
        AR & Adversarial robustness \\
        AT & Adversarial training \\
        CE & Cross entropy \\
        DNN  & Deep neural network \\
        FAB & Fast adaptive boundary (attack) \\
        LPA & Lagrange Perceptual Attack \\ 
        MaxIter & Maximum number of iteration allowed\\
        NLOPT & Nonlinear optimization \\
        PAT & Perceptual adversarial attack \\
        PD & Perceptual distance \\
        PGD & Projected gradient descent \\
        PWCF & \pygranso~with constraint-folding \\
        RE  & Robustness evaluation \\
        \hline
    \end{tabular}
\end{table}
In visual recognition, deep neural networks (DNNs) are not robust against perturbations that are easily discounted by human perception---either adversarially constructed or naturally occurring~\citep{szegedy2013intriguing,GoodfellowEtAl2015Explaining,hendrycks2018benchmarking,EngstromEtAl2019Exploring,xiao2018spatially,WongEtAl2019Wasserstein,LaidlawFeizi2019Functional,HosseiniPoovendran2018Semantic,BhattadEtAl2019Big}.
\change{A popular way to formalize robustness is by \emph{adversarial loss}, attempting to find the worst-case perturbation(s) of images within a prescribed radius,} by solving the following constrained optimization problem~\cite{HuangEtAl2015Learning, madry2017towards}:
\begin{align}
\label{eq:robust_loss}
   \begin{split}
       \max_{\mb x'}\; & \ell\paren{\mb y, f_{\mb \theta}(\mb x')}\\
   \st \; d\paren{\mb x, \mb x'}& \le \eps\; ,  \quad  \mb x' \in [0, 1]^n   \; \text{,}
   \end{split}
\end{align}
\change{which we will refer to as \emph{max-loss form} in the following context.
Here, $\mb x \in \mathcal{X}$ is a test image from the set $\mathcal{X}$}, $f_{\mb \theta}$ is a DNN parameterized by $\mb \theta$, 
$\mb x'$ is a perturbed version of $\mb x$ with an allowable perturbation radius $\eps$ measured by the distance metric $d$, and $\mb x' \in [0, 1]$ ensures that $\mb x'$ is a valid image ($n$: the number of pixels).
\change{Another popular formalism of robustness is by \emph{robustness radius}, defined as the minimal level of perturbation(s) so that $f_{\mb \theta}(\mb x)$ and $f_{\mb \theta}(\mb x')$ can lead to different predictions, which was first introduced in~\cite{szegedy2013intriguing}, even earlier than max-loss form (\ref{eq:robust_loss})}: 
\begin{align}  
\label{eq:min_distort}
\begin{split}
& \min_{\mb x'}\;  d\paren{\mb x, \mb x'} \\
\st \;  \max_{i \ne y} f_{\mb \theta}^i & (\mb x') \ge f_{\mb \theta}^y (\mb x')\; , \; \mb x' \in [0, 1]^n ~ \text{,} 
\end{split}
\end{align}
which we will refer to as \emph{min-radius form} in the following context.
Here, $y$ is the true label of $\mb x$ and $f_{\mb \theta}^i$ represents the $i^{th}$ logit output of the DNN model.
Early work assumes that the distance metric $d$ in max-loss form and min-radius form is the $\ell_p$ distance, where $p= 1, 2, \infty$ are popular choices~\cite{madry2017towards,GoodfellowEtAl2015Explaining}. Recent work has also modeled non-trivial transformations using metrics other than the $\ell_p$ distances~\cite{hendrycks2018benchmarking,EngstromEtAl2019Exploring,xiao2018spatially,WongEtAl2019Wasserstein,LaidlawFeizi2019Functional,HosseiniPoovendran2018Semantic,BhattadEtAl2019Big,laidlaw2021perceptual} to capture visually realistic perturbations and generate adversarial examples with more varieties.

As for applications, max-loss form is usually associated with an attacker in the adversarial setup, where the attacker tries to find an input that is visually similar to $\mb x$ but can fool $f_{\mb \theta}$ to predict incorrectly, as the solutions to (\ref{eq:robust_loss}) lead to worst-case perturbations to alter the prediction from the desired label.
Thus, it is popular to perform robust evaluation (RE) using max-loss form: generating perturbations over a validation dataset and reporting the classification accuracy (termed \emph{robust accuracy}) using the perturbed samples. 
Max-loss form also motivates \emph{adversarial training} (AT) 
to try to achieve adversarial robustness (AR)~\cite{GoodfellowEtAl2015Explaining,HuangEtAl2015Learning,madry2017towards}: 
\begin{align} 
\label{eq:minmax_obj}
    \min_{\mb \theta} \; \bb E_{\paren{\mb x, \mb y} \sim \mc D} \max_{\mb x' \in \Delta(\mb x)} \ell\paren{\mb y, f_{\mb \theta}(\mb x')}
\end{align} where $\mc D$ is the data distribution, and $\Delta(\mb x) = \{\mb x' \in [0, 1]^n: d(\mb x, \mb x') \le \eps\}$, in contrast to the classical supervised learning:
\begin{equation} 
\min_{\mb \theta} \bb E_{\paren{\mb x, \mb y} \sim \mc D}\; \ell\paren{\mb y, f_{\mb \theta}(\mb x)} \; \text{.}
\end{equation}
As for min-radius form, it is also a popular choice to calculate robust accuracy in RE\footnote{One can also perform AT using (\ref{eq:min_distort}) via bi-level optimization; see, e.g., \cite{ZhangEtAl2021Revisiting}.}~\cite{CroceHein2020Minimally, croce2020reliable, PintorEtAl2021Fast}, as solving (\ref{eq:min_distort}) will also produce samples $\mb x'$ to fool the model. But more importantly, solving min-radius form can be used to estimate the sample-wise robustness radius---a quantity that can be used to measure the robustness for every input. However, it is common that existing methods for solving (\ref{eq:min_distort}) emphasize the role of finding $\mb x'$ but overlook the importance of the robustness radius, e.g.,\cite{CroceHein2020Minimally, PintorEtAl2021Fast}.

Despite being popular, solving max-loss form and min-radius form is not easy when DNNs are involved. For max-loss form, the objective is non-concave for typical choices of loss $\ell$ and model $f_{\mb \theta}$; for non-$\ell_p$ metrics $d$, $\mb x'$ often belongs to a complicated non-convex set. In practice, there are two major lines of algorithms: 
\textbf{1) direct numerical maximization} that takes (sub-)differentiable $\ell$ and $f_{\mb \theta}$, and tries direct maximization: for example, using gradient-based methods~\cite{madry2017towards,croce2020reliable}. This often produces sub-optimal solutions and may lead to overoptimistic RE;
\textbf{2) upper-bound maximization} that constructs tractable upper bounds for the margin loss:
\begin{equation}
\label{eq: margin loss}
    \ell_{\mathrm{ML}} = \max_{i \ne y} f_{\mb \theta}^i (\mb x') - f_{\mb \theta}^y (\mb x') ~ \text{,}
\end{equation} 
and then optimizes against the upper bounds~\cite{SinghEtAl2018Fast,SinghEtAl2018Boosting,SinghEtAl2019abstract,SalmanEtAl2019Convex,DathathriEtAl2020Enabling,MuellerEtAl2022PRIMA}. Improving the tightness of the upper bound while maintaining tractability remains an active area of research. It is also worth mentioning that, since $\ell_{\mathrm{ML}} > 0$ implies attack success, upper-bound maximization can also be used for RE by providing an underestimate of robust accuracy~\cite{SinghEtAl2018Fast,SinghEtAl2018Boosting,SinghEtAl2019abstract,SalmanEtAl2019Convex,DathathriEtAl2020Enabling,MuellerEtAl2022PRIMA,WongKolter2017Provable,RaghunathanEtAl2018Certified,WongEtAl2018Scaling,DvijothamEtAl2018Training,LeeEtAl2021Towards}.
As for min-radius form, it can be solved exactly by mixed-integer programming for small-scale cases with certain restrictions on $f_{\mb \theta}$ and some choices of $d$~\cite{TjengEtAl2017Evaluating,KatzEtAl2017Reluplex,BunelEtAl2020Branch}. For general $f_{\mb \theta}$ and some $d$, lower bounds to the robustness radius can be calculated~\cite{WengEtAl2018Towards,ZhangEtAl2018Efficient,WengEtAl2018Evaluating,LyuEtAl2020Fastened}. Whereas in practice, min-radius form is heuristically solved by gradient-based methods or iterative linearization~\cite{szegedy2013intriguing,MoosaviDezfooliEtAl2015DeepFool,Hein2017,CarliniWagner2016Towards,rony2019decoupling,CroceHein2020Minimally,PintorEtAl2021Fast} to obtain upper bounds of the robustness radius; see \cref{sec:background}.

Currently, applying direct numerical methods to solve max-loss form and min-radius form is the most popular strategy for RE in practice, e.g.,~\cite{croce2021robustbench}. However, there are two major limitations:
\begin{itemize}[leftmargin=*]
    \item \textbf{Lack of reliability:} current direct numerical methods are able to obtain solutions but do not assess the solution quality. Therefore, the reliability of the solutions is questionable. \cref{Fig:APGD-FAB-Terminate-Iter} shows that the default stopping point (determined by the maximum allowed number of iterations (MaxIter)) used in \texttt{AutoAttack}~\cite{croce2020reliable} mainly leads to premature termination of the optimization process. In fact, it is challenging to terminate the optimization process by setting the MaxIter, since the actual number of iterations needed can vary from sample to sample.
    
    \item \textbf{Lack of generality:} existing numerical methods are applied mostly to problems where $d$ is $\ell_1$, $\ell_2$ or $\ell_\infty$ distance\footnote{The only exception we have found is~\cite{croce2019sparse}, where perturbations constrained by the $\ell_0$ distance are considered. However, this perturbation model has not been adapted to current empirical RE.} but cannot handle other distance metrics. In fact, the popular RE benchmark, \texttt{robustbench}~\cite{croce2021robustbench}, only has $\ell_2$ and $\ell_\infty$ leaderboards, and the most studied adversarial attack model is the $\ell_\infty$ attack~\cite{bai2021recent}.
\end{itemize}
It is important to address these two limitations, \change{as reliability relates to how much we can trust the evaluation, and the generality of handling more distance metrics can enable RE with more diverse perturbation models for broader applications.}

\begin{figure*}[!tb]
\centering
\begingroup 
\setlength{\tabcolsep}{1pt}
\renewcommand{\arraystretch}{0.8}
\begin{tabular}{c c c c c c}
\centering
{}
&\multicolumn{2}{c}{\textbf{CIFAR-10}}
&{ }
&\multicolumn{2}{c}{\textbf{ImageNet-100}}
\\
\cline{2-3}\cline{5-6}
\vspace{-1em}
\\
{}
&\includegraphics[width=0.24\textwidth]{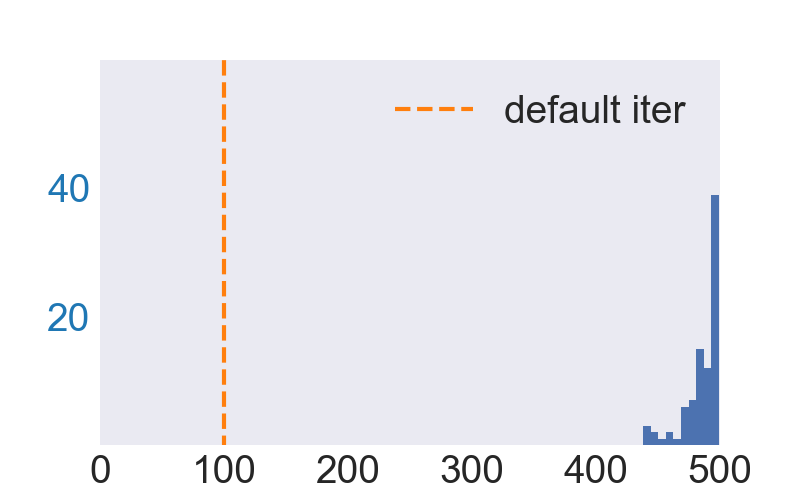}
&\includegraphics[width=0.24\textwidth]{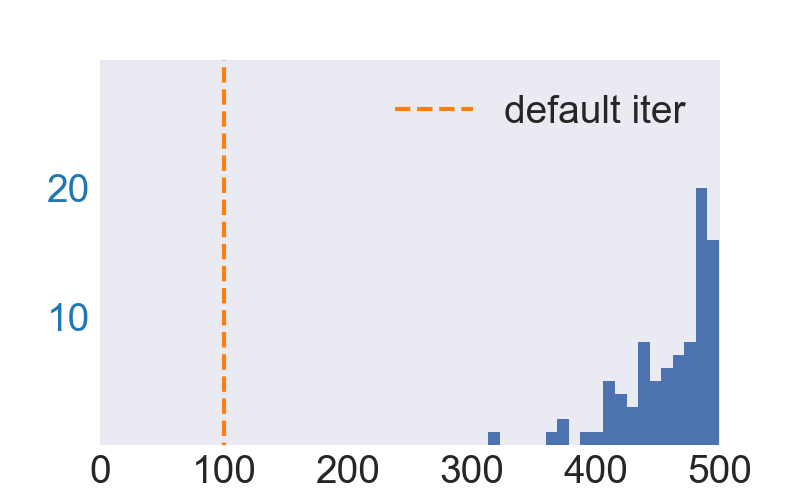}
&{ }
&\includegraphics[width=0.24\textwidth]{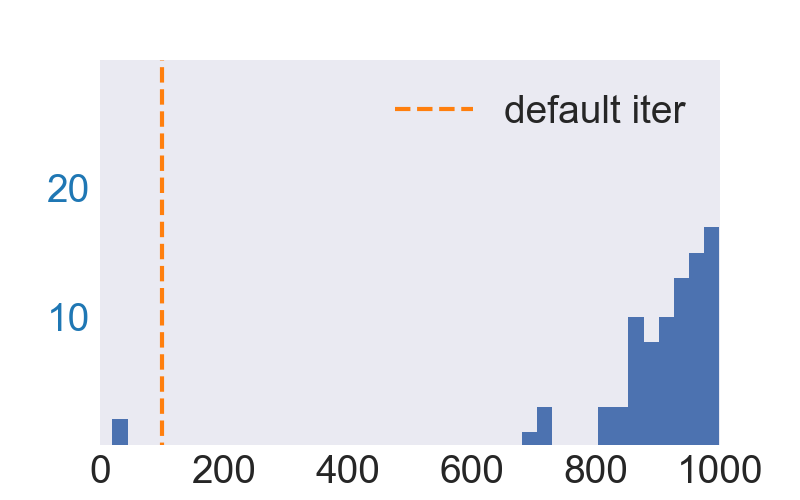}
&\includegraphics[width=0.24\textwidth]{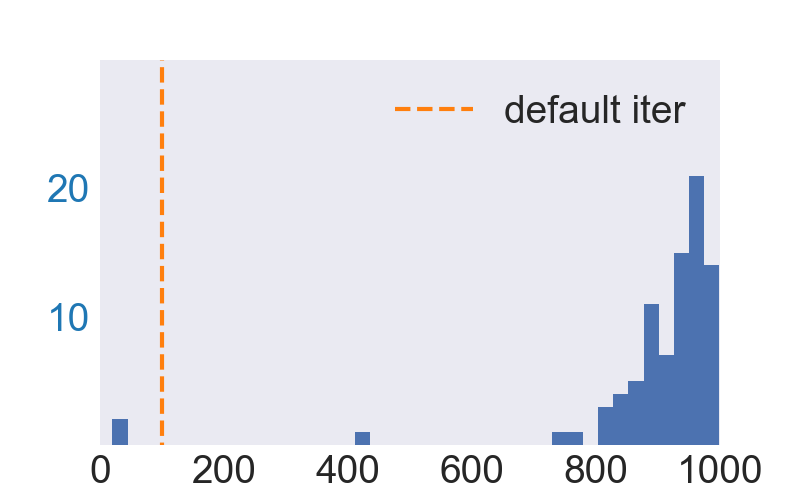}
\\
{}
&\small{APGD - $\ell_2$} 
&\small{APGD - $\ell_\infty$}
&{ }
&\small{APGD - $\ell_2$} 
&\small{APGD - $\ell_\infty$}
\\
{}
&\includegraphics[width=0.24\textwidth]{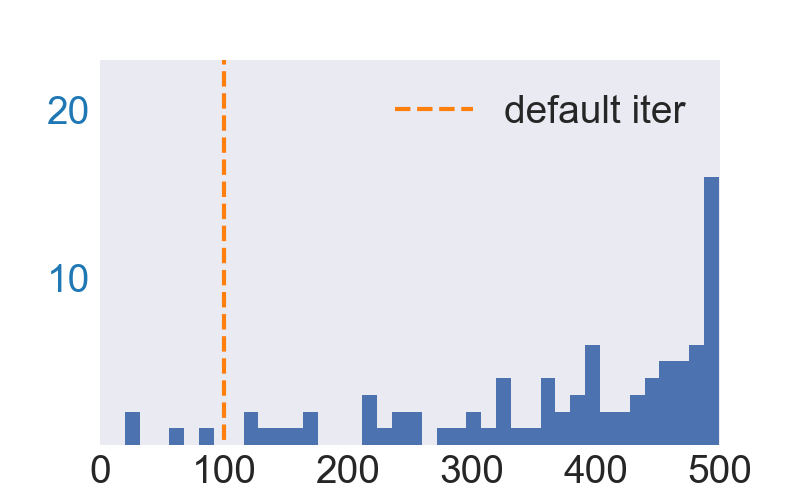}
&\includegraphics[width=0.24\textwidth]{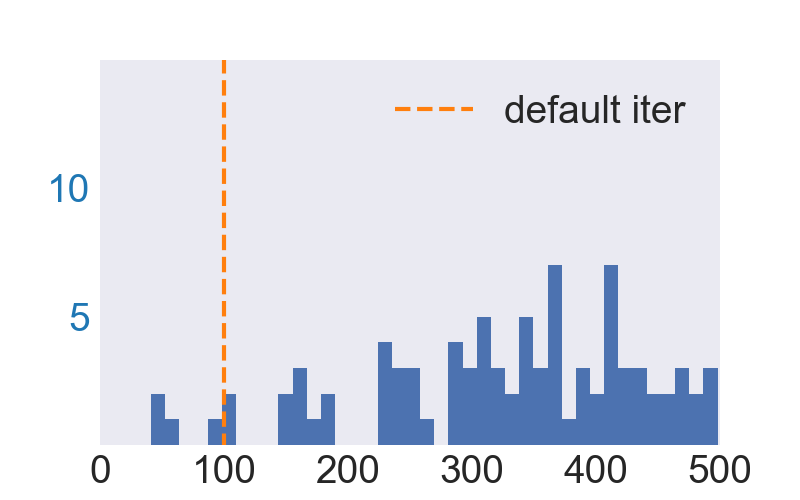}
&{ }
&\includegraphics[width=0.24\textwidth]{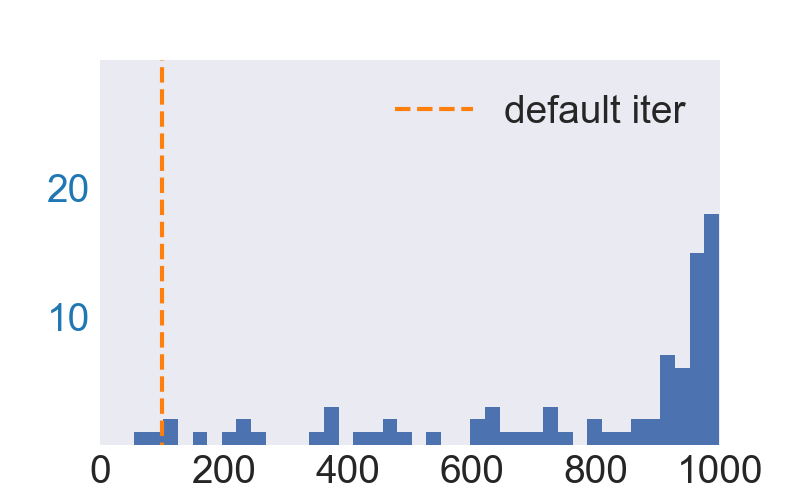}
&\includegraphics[width=0.24\textwidth]{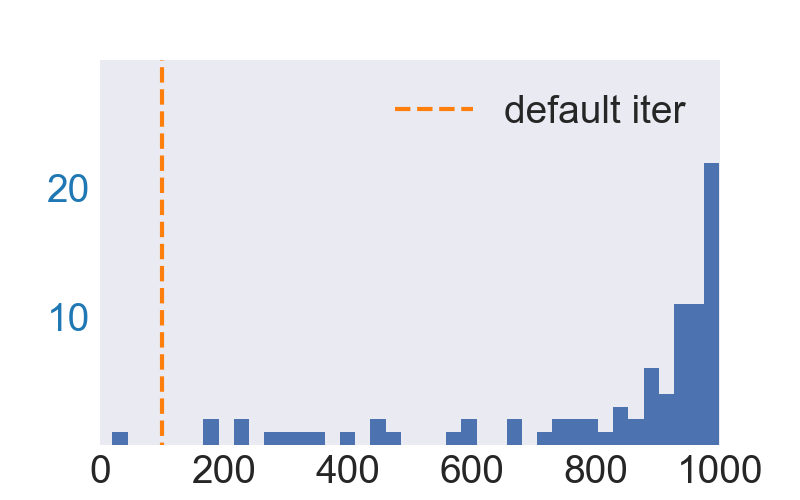}
\\
{}
&\small{FAB - $\ell_2$} 
&\small{FAB - $\ell_\infty$}
&{ }
&\small{FAB - $\ell_2$} 
&\small{FAB - $\ell_\infty$} 
\end{tabular}
\endgroup 
\caption{Histogram of the number of iterations where APGD and FAB (two numerical optimization methods in \texttt{AutoAttack} to solve max-loss form and min-radius form, respectively, which will be introduced in \cref{sec:background}) find the best objective values for $88$ images from CIFAR-10 and $85$ images from ImageNet-100. The dashed \textcolor{orange}{orange} line in each figure represents the default MaxIter used in \texttt{AutoAttack}. In the above experiments, we set the MaxIter to be $500$ for CIFAR-10 and $1000$ for ImageNet-100. We can conclude that 1) the best stopping points for APGD and FAB vary from sample to sample; 2) the default MaxIter used in \texttt{AutoAttack} lead to premature termination; 3) the MaxIter we set here can still be insufficient.} 
\label{Fig:APGD-FAB-Terminate-Iter}
\end{figure*}
\begin{figure*}[!tb]
\vspace{-1em}
\centering
\begingroup 
\setlength{\tabcolsep}{1pt}
\renewcommand{\arraystretch}{0.8}
\begin{tabular}{c c c c c c}
\centering
{}
&\multicolumn{2}{c}{\textbf{CIFAR-10}}
&{ }
&\multicolumn{2}{c}{\textbf{ImageNet-100}}
\\
\cline{2-3}\cline{5-6}
\vspace{-1em}
\\
{}
&\includegraphics[width=0.24\textwidth]{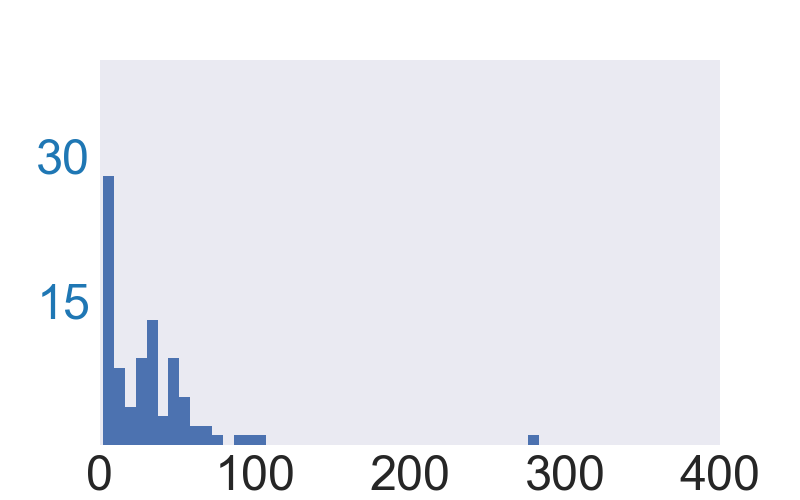}
&\includegraphics[width=0.24\textwidth]{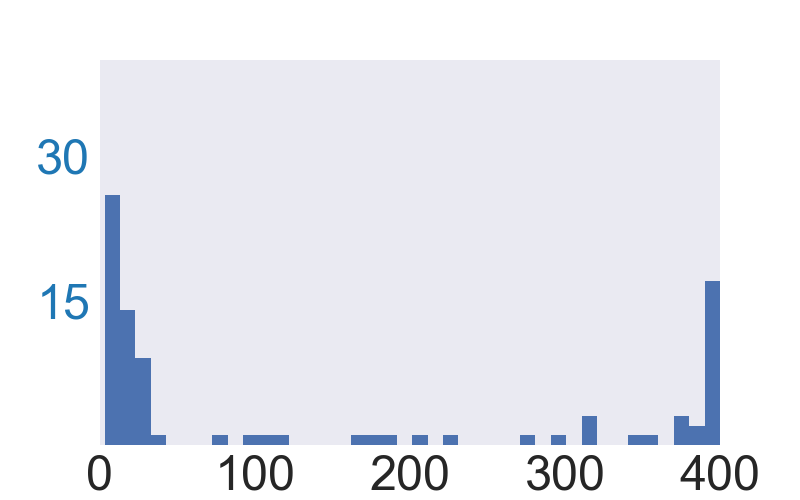}
&{ }
&\includegraphics[width=0.24\textwidth]{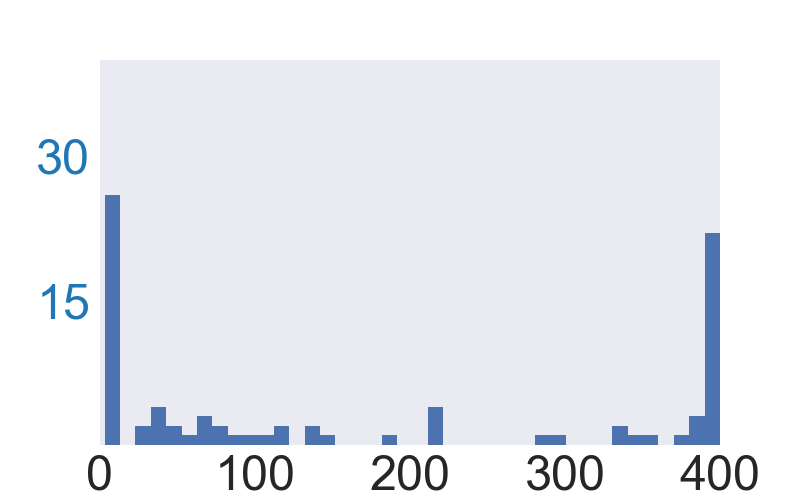}
&\includegraphics[width=0.24\textwidth]{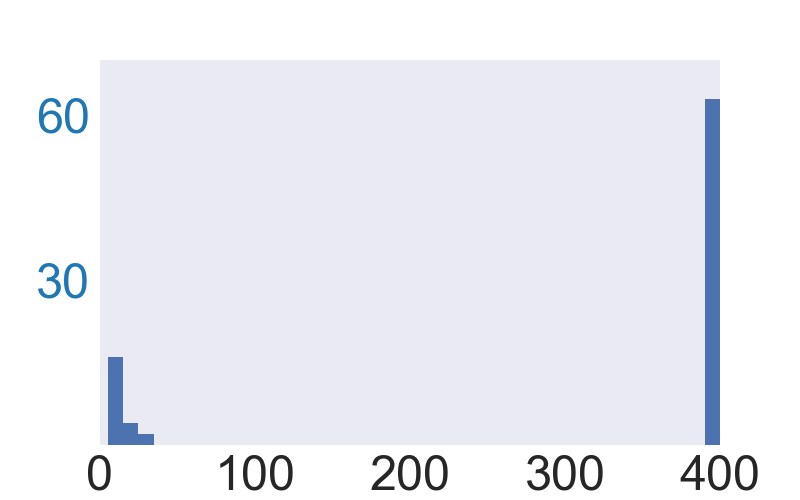}
\\
{}
&\small{\textbf{(a)} PWCF - $\ell_2$} 
&\small{\textbf{(b)} PWCF - $\ell_\infty$}
&{ }
&\small{\textbf{(c)} PWCF - $\ell_2$} 
&\small{\textbf{(d)} PWCF - $\ell_\infty$}
\\
&\includegraphics[width=0.24\textwidth]{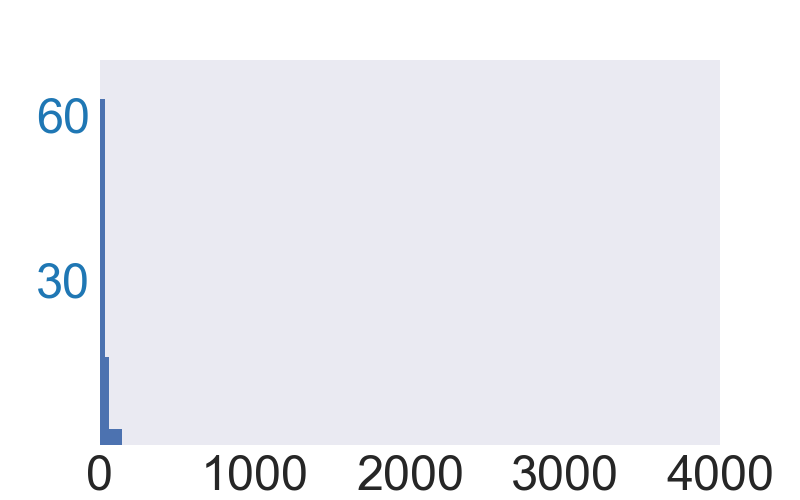}
&\includegraphics[width=0.24\textwidth]{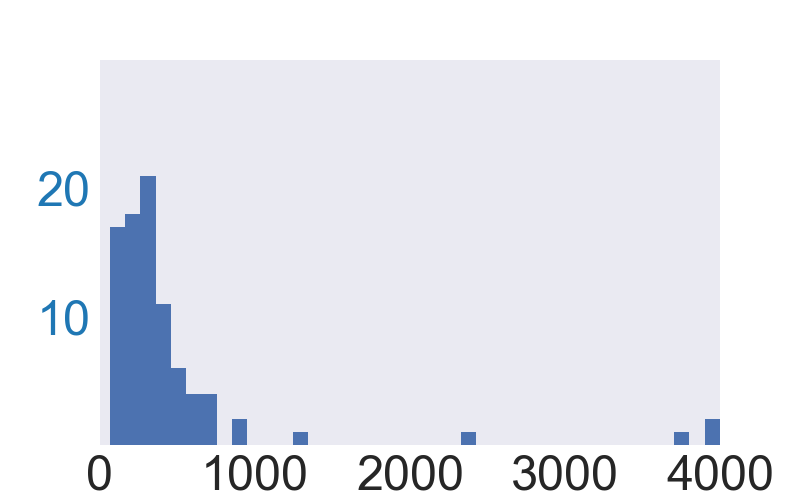}
&{ }
&\includegraphics[width=0.24\textwidth]{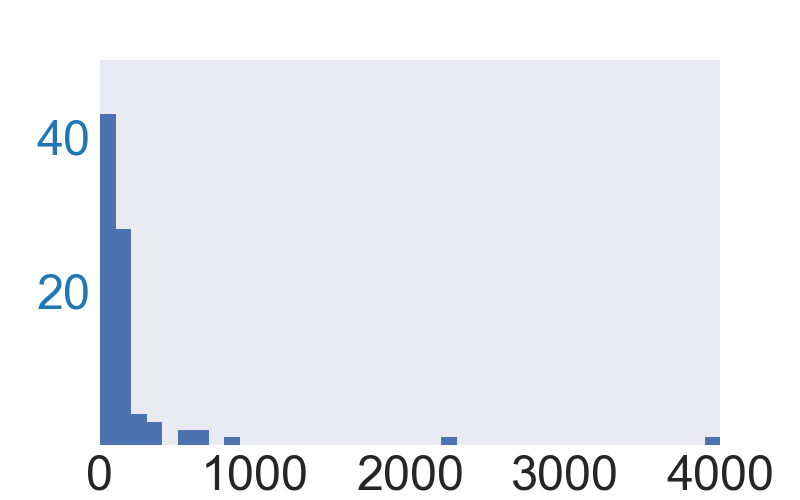}
&\includegraphics[width=0.24\textwidth]{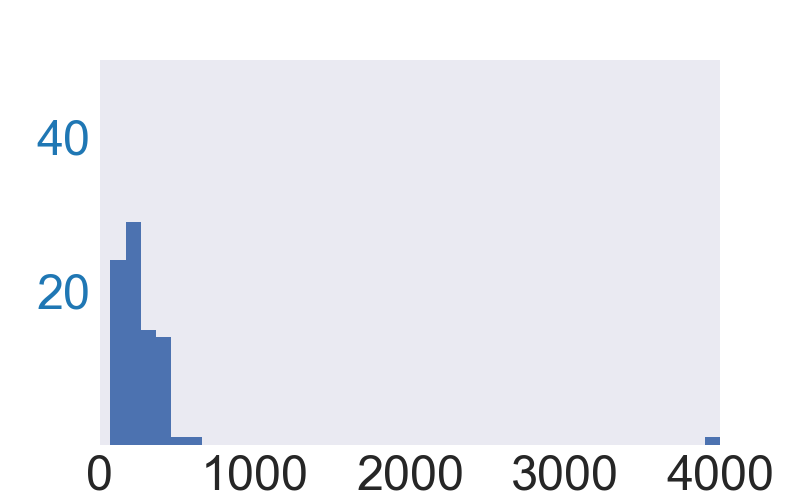}
\\
{}
&\small{\textbf{(e)} PWCF - $\ell_2$} 
&\small{\textbf{(f)} PWCF - $\ell_\infty$}
&{ }
&\small{\textbf{(g)} PWCF - $\ell_2$} 
&\small{\textbf{(h)} PWCF - $\ell_\infty$}
\end{tabular}
\endgroup 
\caption{Histograms of PWCF's termination iteration to solve max-loss form (\textbf{(a)}-\textbf{(d)}), and min-radius form (\textbf{(e)}-\textbf{(h)}) for $88$ images from CIFAR-10 and $100$ images from ImageNet-100. We set both the stationarity and the constraint violation tolerance (components of \pygranso's stopping criterion) as $10^{-2}$. We also set the MaxIter as $400$ for (\textbf{(a)}-\textbf{(d)}) and $4000$ for (\textbf{(e)}-\textbf{(h)}). Optimization processes terminated by reaching MaxIter indicate that the corresponding solutions do not meet the stationarity or feasibility tolerance. For these premature solutions, we can decide if further optimization is needed by assessing the final stationarity, constraint violation values, and the objective values.} 
\label{Fig: PWC-Max-Terminate-Iter}
\end{figure*}

\vspace{1em}
\noindent \textbf{Our contributions}  \quad
In this paper, we focus on the numerical optimization of max-loss form and min-radius form.\footnote{Both formulations have their targeted versions: i.e., replacing $[\max f_{\mb \theta}^i (\mb x')]$ by $f_{\mb \theta}^i (\mb x')$ in (\ref{eq:min_distort}); using $\ell_{\mathrm{ML}}$ in (\ref{eq:robust_loss}). The targeted versions are indeed simpler than the untargeted (\ref{eq:min_distort}) and (\ref{eq:robust_loss}). Thus, in this paper, we will focus only on untargeted ones.} First, we provide a general solver that can handle both formulations with almost everywhere differentiable distance metrics $d$, which also has a principled stopping criterion to assess the reliability of the solution.
\begin{enumerate}[leftmargin=*]
    \item{We adapt the constrained optimization solver~\texttt{PyGRANSO}~\cite{curtis2017bfgs,BuyunLiangSun2021NCVX}~\textbf{w}ith~\textbf{C}onstraint-\textbf{F}olding (PWCF), which is crucial for boosting the speed and solution quality of \pygranso\, in solving the max-loss form and min-radius form problems; see \cref{pygranso general tricks}.}
    
    \item{\pygranso~is equipped with a rigorous line-search rule and stopping criterion. \change{As a result, the reliability of PWCF's solution can be assessed by the constraint violations and stationarity estimate, and users can determine if further optimization is needed; see \cref{Fig: PWC-Max-Terminate-Iter} and \cref{subsec: PWCF techniques demo}.}}
    
    \item{We show that PWCF not only can perform comparably to state-of-the-art RE packages, such as \texttt{AutoAttack}, on solving max-loss form and min-radius form with $\ell_1$, $\ell_2$, and $\ell_\infty$ distances and provide diverse solutions, but also can handle distance metrics $d$ other than the popular but limited $\ell_1$, $\ell_2$, and $\ell_\infty$}---beyond the reach of existing numerical methods; see \cref{Sec: experiments and results}.
\end{enumerate}
We also investigate the sparsity patterns of the solutions found by solving max-loss form and min-radius form using different combinations of solvers, losses, and distance metrics, and discuss the implications:
\begin{enumerate}[leftmargin=*]
    \item Different combinations of distance metrics $d$, losses $\ell$, and optimization solvers can result in different sparsity patterns in the solutions found. In terms of computing robust accuracy, combining solutions with all possible patterns is important to obtain reliable and accurate results; see \cref{sec:pattern_theory} and \cref{subsec: diversity matters for robust accuracy}. 
    \item \change{The robust accuracy at a preset perturbation level $\eps$ used in current RE is often a bad metric to measure robustness. Instead, solving the min-radius form is more beneficial for RE, as the sample-wise robustness radius contains richer information; see \cref{subsec: robust accuracy is bad metric} and \cref{subsec: min radius is better RE metric}.}
    \item{Due to the pattern difference in solving max-loss form by different combinations of solvers, losses, and distance metrics, the common practice of AT with projected gradient descent (PGD) method on a single distance metric may not be able to achieve generalizable AR---DNNs that are adversarially trained may only be robust to the patterns they have seen during training; see \cref{subsec: difficult in achieving AR}.}
\end{enumerate}
\change{Although previous works, e.g., \cite{CarliniEtAl2019Evaluating, croce2020reliable, gilmer2018motivating}, have mentioned the necessity of involving diverse solvers to achieve more reliable RE, our paper is, to the best of our knowledge, the first to quantify the meaning of diversity in terms of the sparsity patterns. \cite{liang2022optimization} is a preliminary version of this work, which only contains the contents related to solving max-loss form numerically by PWCF. This paper expands~\cite{liang2022optimization} with results on solving min-radius form by PWCF, extensive analysis on the solution patterns, and discussions on the implications.}

Preliminary versions of this paper have been published in \cite{liang2022optimization,liang2023implications}. 

\section{Technical background}
\label{sec:background}
\subsection{Numerical optimization for max-loss form}
\label{
subsec: numeric method for robust loss}
Max-loss form (\ref{eq:robust_loss}) is popularly solved by PGD method. The basic update reads:
\begin{align}
    \mb x'_{new} = \mc P_{\Delta(\mb x)} \paren{\mb x'_{old} + \gamma \nabla \ell(\mb x'_{old})}
\end{align} 
where $\mc P_{\Delta(\mb x)}$ is the projection operator to the feasible set $\Delta(\mb x)$ and $\gamma$ is the step size. When $\Delta(\mb x) = \{\mb x' \in [0, 1]^n: \norm{\mb x' - \mb x}_p \le \eps\}$ with $p = 1, \infty$, $\mc P_{\Delta(\mb x)}$ takes simple forms. For $p=2$, sequential projection---first onto the box and then the norm ball, at least finds a feasible solution (see our clarification of these projections in Appendix \ref{Sec:app_projection}). Therefore, PGD is viable for these cases. For other general $\ell_p$ metrics and non-$\ell_p$ metrics, analytical projectors mostly cannot be derived and existing PGD methods do not apply. Previous work has shown that the solution quality of PGD is sensitive to hyperparameter tuning, e.g., step-size schedule and iteration budget~\cite{mosbach2018logit,CarliniEtAl2019Evaluating,croce2020reliable}. PGD variants, such as AGPD methods where `A' stands for Auto(matic)~\cite{croce2020reliable}, try to make the tuning process automatic by combining a heuristic adaptive step-size schedule and momentum acceleration under a fixed iteration budget and are built into the popular \texttt{AutoAttack} package\footnote{Package website: \url{https://github.com/fra31/auto-attack}.}. 
\texttt{AutoAttack} also includes non-PGD methods, e.g., Square Attack~\cite{andriushchenko2020square}, which is based on gradient-free random search. Square Attack does not have as effective attack performance as APGDs (see \cref{tab: granso_l1_acc} in later sections) but is included as \cite{croce2020reliable} states that the diversity in the evaluation algorithms is important to achieve reliable numerical RE.

\subsection{Numerical optimization of min-loss-form}
\label{subsec: numeric method for min_distort}
The difficulty in solving min-loss-form (\ref{eq:min_distort}) lies in dealing with the highly nonlinear constraint $\max_{i \ne y} f_{\mb \theta}^i (\mb x') \ge f_{\mb \theta}^y (\mb x')$. There are two lines of ideas to circumvent it: \textbf{1) penalty methods} turn the constraint into a penalty term and add it to the objective~\cite{szegedy2013intriguing,CarliniWagner2016Towards}. The remaining box-constrained problems can then be handled by optimization methods such as L-BFGS~\cite{wright1999numerical} or PGD. One caveat of penalty methods is that they do not guarantee to return feasible solutions to the original problem; \textbf{2) iterative linearization} linearizes the constraint at each step, leading to simple solutions to the projection (onto the intersection of the linearized decision boundary and the $[0, 1]^n$ box) for particular choices of $d$ (i.e., $\ell_1$, $\ell_2$ and $\ell_\infty$ distances); see, e.g. \cite{MoosaviDezfooliEtAl2015DeepFool,CroceHein2020Minimally}. Again, for general metrics $d$, this projection does not have a closed-form solution. In \cite{rony2019decoupling,PintorEtAl2021Fast}, the problem is reformulated as follows:
\begin{align}
\label{eq: min reform}
\begin{split}
    & \min_{\mb x', t} \, t \\
     \st \;  \max_{i \ne y} & f_{\mb \theta}^i (\mb x') \ge f_{\mb \theta}^y (\mb x')\\
     d(\mb x, & ~ \mb x')  \le t\,  ,\; \mb x' \in [0, 1]^n
    \end{split}
\end{align} 
so that the perturbation (determined by $\mb x'$) and the radius (determined by $t$) are decoupled. Then penalty methods and iterative linearization are combined to solve (\ref{eq: min reform}). The popular fast adaptive boundary (FAB) attack~\cite{CroceHein2020Minimally} included in \texttt{AutoAttack} belongs to iterative linearization family. %

\subsection{\texttt{PyGRANSO} for constrained optimization}
\label{subsec: pygranso for NO}
General \textbf{n}on\textbf{l}inear \textbf{opt}imization (NLOPT) problems take the following form~\cite{wright1999numerical,Bertsekas2016Nonlinear}: 
\begin{align}  
\label{eq:NO_form}
\begin{split}
        & \min_{\mb x}\; g(\mb x) \\
    \st \; c_i&(\mb x) \le 0, \; \forall\;  i \in\mc I\ \\
    h_j&(\mb x) = 0, \; \forall\; j \in \mc E
\end{split}
\end{align}
where $g(\cdot)$ is the continuous objective function; $c_i$'s and $h_j$'s are continuous inequality and equality constraints, respectively.
As instances of NLOPT, both max-loss form and min-radius form in principle can be solved by general-purpose NLOPT solvers such as \texttt{Knitro}~\cite{pillo2006large}, \texttt{IPOPT}~\cite{wachter2006implementation}, and \texttt{GENO}~\cite{laue2019geno,LaueEtAl2022Optimization}. However, there are two caveats: \textbf{1)} these solvers only handle continuously differentiable $g$, $c_i$'s and $h_j$'s, while non-differentiable ones are common in max-loss form and min-radius form, e.g., when $d$ is the $\ell_\infty$ distance, or $f_{\mb \theta}$ uses non-differentiable activations; \textbf{2)} most of them require gradients to be provided by the user, while they are impractical to derive when DNNs are involved. Although gradient derivation can be addressed by combining these solvers with existing auto-differentiation (AD) packages, requiring all components to be continuously differentiable is the major bottleneck that limits these solvers from solving max-loss form and min-radius form reliably.

\pygranso~\cite{BuyunLiangSun2021NCVX}\footnote{Package webpage: \url{https://ncvx.org/}} is a recent \texttt{PyTorch}-port of the powerful MATLAB package \texttt{GRANSO}~\cite{curtis2017bfgs} that can handle general NLOPT problems of form (\ref{eq:NO_form}) with non-differentiable $g$, $c_i$'s, and $h_j$'s. \pygranso~only requires them to be \emph{almost everywhere differentiable}~\cite{Clarke1990Optimization,BagirovEtAl2014Introduction,CuiPang2021Modern}, which covers a much wider range than almost all forms of (\ref{eq:robust_loss}) and (\ref{eq:min_distort}) proposed so far in the literature. \texttt{GRANSO} employs a quasi-Newton sequential quadratic programming (BFGS-SQP) method to solve (\ref{eq:NO_form}), and features a rigorous adaptive step-size rule through a line search and a principled stopping criterion inspired by gradient sampling~\cite{burke2020gradient}; see a sketch of the algorithm in Appendix \ref{Sec:granso_summary} and \cite{curtis2017bfgs} for details. \pygranso\,equips \texttt{GRANSO} with, among other new capabilities, AD and GPU acceleration powered by \texttt{PyTorch}---both are crucial for deep learning problems. \change{The stopping criterion is controlled by the tolerances for total constraint violation and stationarity assessment---the former determines how strict the constraints are enforced, while the latter estimates how close the solution is to a stationary point.} Thus, the solution quality can be transparently controlled.

\subsection{Min-max optimization for adversarial training}
\label{subsec: tecnnical background for min-max}
For a finite training set, AT (\ref{eq:minmax_obj}) becomes:
\begin{align}  
\label{eq:eq:minmax_obj_finite} 
& \min_{\mb \theta} \frac{1}{N} \sum_{i=1}^N  \max_{\mb x'_i \in \Delta(\mb x_i)} \ell\paren{\mb y_i, f_{\mb \theta}(\mb x'_i)} \\
\Longleftrightarrow & \min_{\mb \theta} \max_{\mb x'_i \in \Delta(\mb x_i)\; \forall i}  \frac{1}{N} \sum_{i=1}^N \ell\paren{\mb y_i, f_{\mb \theta}(\mb x'_i)} ~ \text{,}
\label{eq:eq:minmax_obj_finite_eq_form} 
\end{align} 
i.e., in the form of 
\begin{align}
    \min_{\mb \theta} \max_{\mb x'_i \in \Delta(\mb x_i)\; , \forall i} \phi(\mb \theta, \{\mb x_i\}) ~ \text{.}
\label{eq: min_max_practice_form}
\end{align}
(\ref{eq: min_max_practice_form}) is often solved by iterating between the (separable) inner maximization and a subgradient update for the outer minimization. The latter takes a subgradient of $h(\mb \theta) = \max_{\mb x'_i \in \Delta(\mb x_i)\; \forall i} \phi(\mb \theta, \{\mb x_i\})$ from the subdifferential $\partial_{\mb \theta}\phi(\mb \theta, \{\mb x_i^*\})$ ($\mb x_i^*$ are maximizers for the inner maximization), justified by the celebrated Danskin's theorem~\cite{Danskin1967Theory,BernhardRapaport1995theorem,RazaviyaynEtAl2020Non}. However, if the numerical solutions to the inner maximization are substantially suboptimal\footnote{On the other hand, \cite{madry2017towards} argues using numerical evidence that maximization landscapes are typically benign in practice and gradient-based methods often find solutions with function values close to the global optimal. }, the subgradient to update the outer minimization can be misleading; see our discussion in \cref{sec:danskin_minmax}. In practice, people generally do not strive to find the optimal solutions to the inner maximization but prematurely terminate the optimization process, e.g., \change{by a small MaxIter.}

\section{A generic solver for max-loss form and min-radius form: \pygranso~With Constraint-Folding (PWCF)}
\label{Sec:pygranso}
Although~\pygranso~can handle max-loss form and min-radius form, naive deployment of \pygranso~can suffer from slow convergence and low-quality solutions. To address this, we introduce \pygranso~\textbf{w}ith \textbf{C}onstraint-\textbf{F}olding (PWCF) to substantially speed up the optimization process and improve the solution quality. 

\subsection{General techniques}
\label{pygranso general tricks}
\begin{figure}[!tb]
\centering
\begingroup 
\setlength{\tabcolsep}{1pt}
\renewcommand{\arraystretch}{0.8}
\begin{tabular}{cc}
\centering
\includegraphics[width=0.24\textwidth]{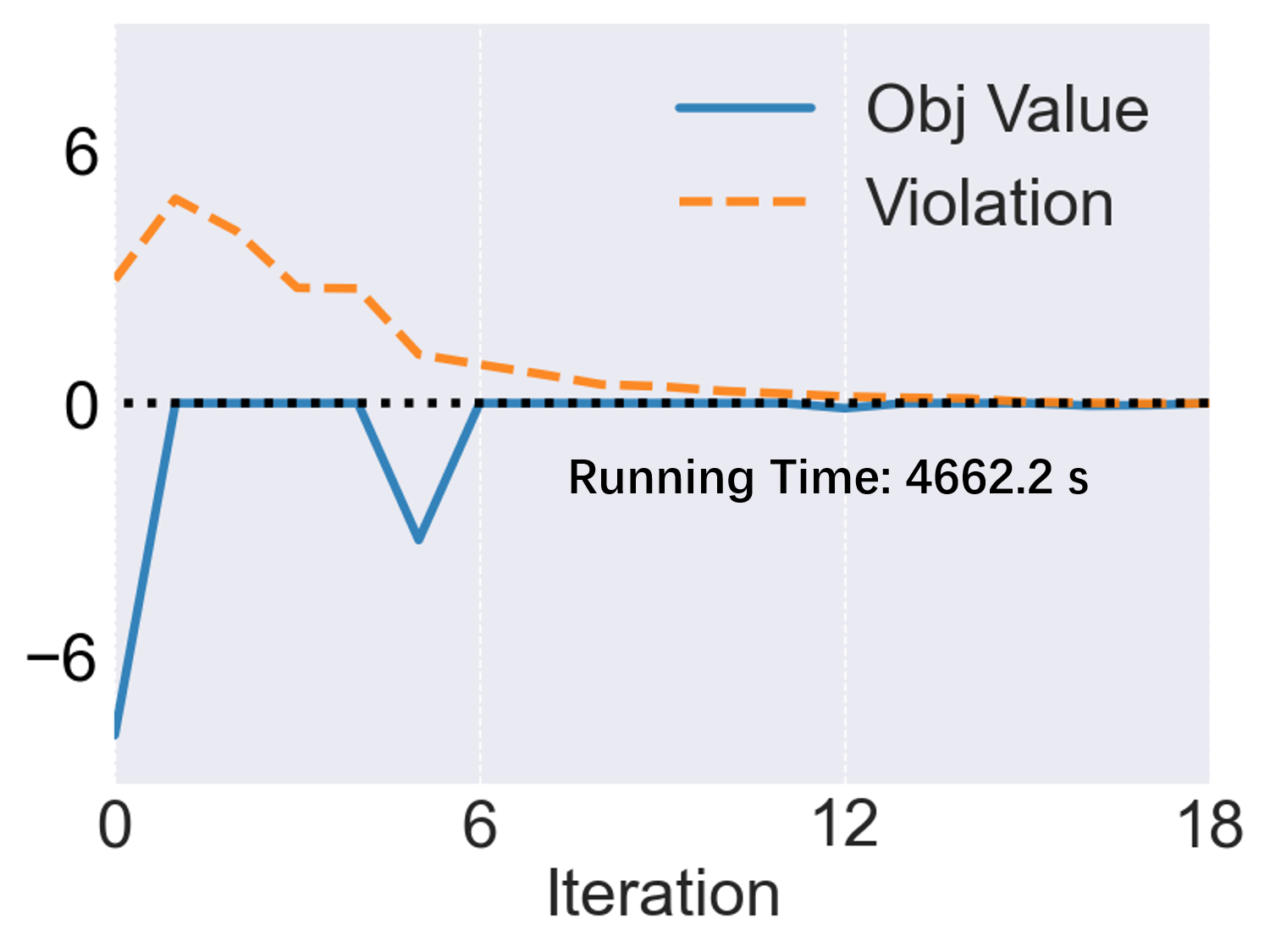}
&\includegraphics[width=0.24\textwidth]{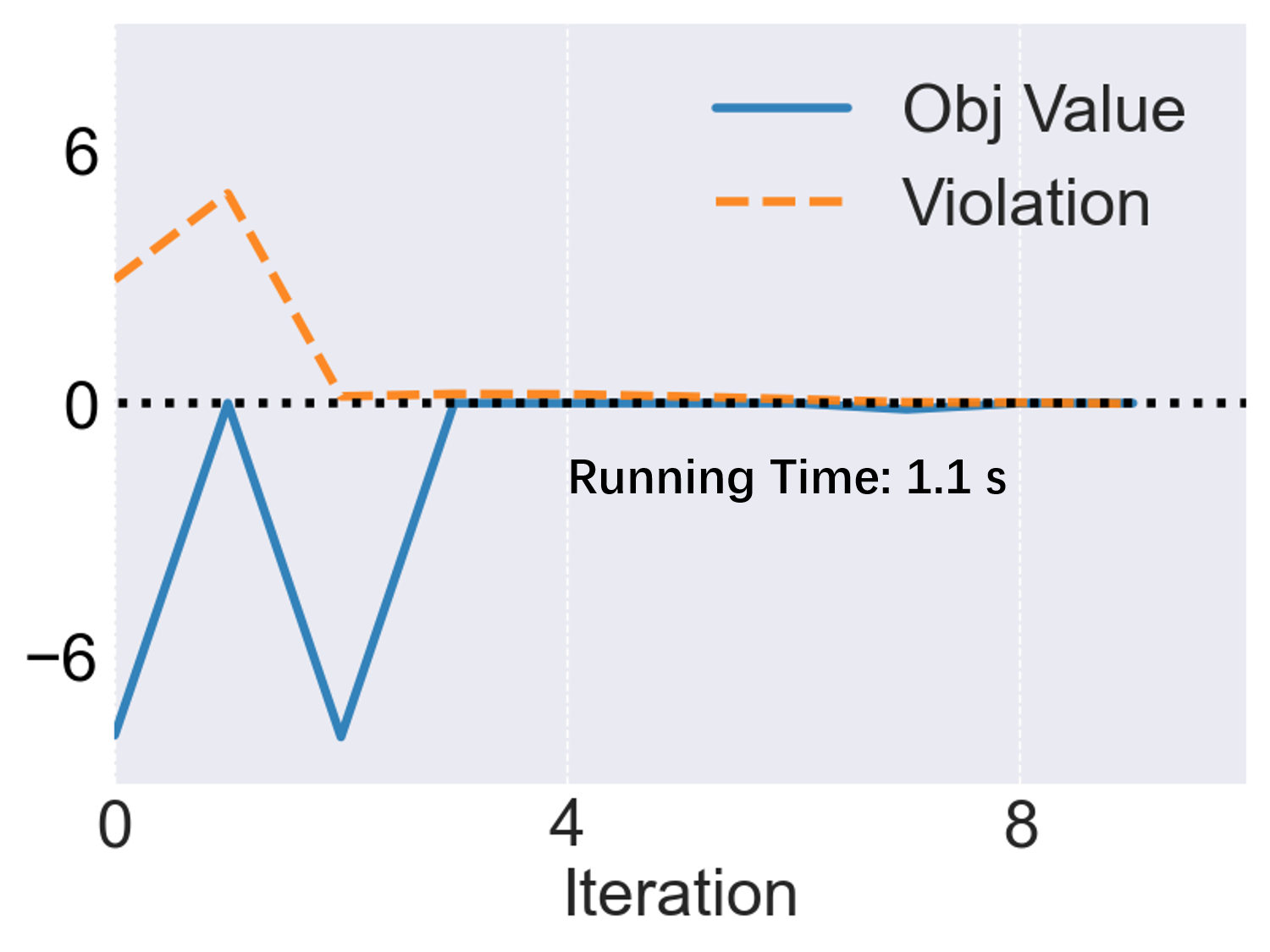}
\\
\small{\textbf{(a)} $n$ box constraints} 
&\small{\textbf{(b)} folded constraints} 
\end{tabular}
\endgroup 
\caption{Examples of \pygranso~optimization trajectories to solve max-loss form with $\ell_{2}$ distance $(\eps=0.5)$ and margin loss $\ell$ (a clipped version described in \cref{subsec: loss clip}) on a CIFAR-10 image. \textbf{(a)} the $\mb x' \in [0, 1]^n$ constraint is in the original form of $n$ linear box constraints; \textbf{(b)} $\mb x' \in [0, 1]^n$ constraint is folded with the $\ell_2$ function into a single non-smooth constraint function. The x-axes denote the iteration number. Here, an acceptable solution is found when the objective value reaches $0.01$ and the constraint violation reaches below the tolerance level ($10^{-2}$). We can conclude from (a) and (b) that it takes significantly less time and number of iterations with constraint-folding than without.} 
\label{Fig: Ablation on constraint folding}
\end{figure}
The following techniques are developed for solving max-loss form and min-raidus form, but can also be applied to other NLOPT problems in similar situations.
\subsubsection{Reducing the number of constraints: constraint-folding}
\label{subsec: folding}
The natural image constraint $\mb x' \in [0, 1]^n$ is a set of $n$ box constraints. The reformulations described in \cref{subsec: reformulate Linf constraint} and \cref{subsec: reformulate l1 and linf obj} will introduce another $\Theta(n)$ box constraints. Although all of these are simple linear constraints, the $\Theta(n)$-growth is daunting: for natural images, $n$ is the number of pixels that can easily go into hundreds of thousands. Typical NLOPT problems become much more difficult when the number of constraints becomes large, which can, e.g., lead to slow convergence for numerical algorithms. 

\change{To address this, we introduce constraint-folding to reduce the number of constratins by allowing multiple constraints to be turned into a single one. We note that folding or aggregating constraints is not a new idea, which has been popular in engineering design. For example, \cite{martins2005structural} uses $\ell_\infty$ folding and its log-sum-exponential approximation to deal with numerous design constraints; see~\cite{ZhangEtAl2018Constraint,DomesNeumaier2014Constraint,ErmolievEtAl1997Constraint,TrappProkopyev2015note}. However, applying folding to NLOPT problems in machine learning and computer vision seems rare, potentially because producing non-smooth constraint(s) due to folding seems counterproductive. However, \pygranso~is able to handle non-smooth functions reliably. Thus, constraint folding can enjoy benefiting from reducing the difficulty and expense of solving the two types of quadratic programming (QP) subproblems ((\ref{penalty_sqp_dual}) and (\ref{QP_termination}) in \cref{Sec:granso_summary}) in each iteration of \pygranso, which are harder to solve as the number of constraints increases. Although turning multiple constraints into fewer but non-smooth ones can possibly increase the per-iteration cost, we show in this paper that there indeed can be a beneficial trade-off between non-smoothness and large number of constraints, in terms of speeding up the entire optimization process.}

\change{As for the implementation of the constraint-folding, first note that any equality constraint $h_j(\mb x) = 0$ or inequality constraint $c_i(\mb x) \le 0$ can be reformulated as 
\begin{align} \label{eq:simple_constr_form}
    \begin{split}
            & h_j (\mb x) = 0 \Longleftrightarrow \abs{h_j(\mb x)} \le 0 ~ \text{,}\\ 
            & c_i(\mb x) \le 0 \Longleftrightarrow \max\{c_i(\mb x), 0\} \le 0 ~ \text{,}
    \end{split}
\end{align} 
Then we can further fold them together into 
\begin{equation} 
    \label{eq:folded_constraint} 
    \begin{split}
        \mc F(& \abs{h_1(\mb x)}, \cdots, \abs{h_i(\mb x)}, \max\{c_1(\mb x), 0\}, \\
        & \cdots, \max\{c_j(\mb x), 0\}) \le 0,
    \end{split}
\end{equation}
where $\mc F: \RJU^{i+j}_{+} \mapsto \RJU_+$ ($\RJU_+ = \{\alpha: \alpha \ge 0\}$) can be any function satisfying $\mc F(\mb z) = 0 \Longrightarrow \mb z = \mb 0$, e.g., any $\ell_p$ ($p \ge 1$) norm, and (\ref{eq:folded_constraint}) and (\ref{eq:simple_constr_form}) still shares the same feasible set.}

The constraint folding technique can be used for a subset of constraints (e.g., constraints grouped and folded by physical meanings) or all of them. 
\change{Throughout this paper, we use $\mc F = \norm{\cdot}_2$ for constraint-folding, and the constraints are folded by group (they are: $\mb x' \in [0, 1]^n$, distance metric $d$ and decision boundary constraint, respectively.)} \cref{Fig: Ablation on constraint folding} shows the benefit of constraint folding with an example of max-loss form, where the time efficiency is greatly improved while the solution quality remains.

\subsubsection{Two-stage optimization}
\label{subsec: pygranso early stop}
Numerical methods may converge to poor local minima for NLOPT problems. Running the optimization multiple times with different random initializations is an effective and practical way to overcome this problem. For example, each method in \texttt{AutoAttack} by default runs five times and ends with the preset MaxIter. Here, we apply a similar practice to PWCF, but in a two-stage fashion:
\begin{enumerate}[leftmargin=*]
    \item \textbf{Stage 1 (selecting the best initialization):} Optimize the problems by PWCF with $R$ different random initialization $\mb x^{(r, 0)}$ for $k$ iterations, where $r=1, \ldots, R$, and collect the final first-stage solution $\mb x^{(r, k)}$ for each run. Determine the best intermediate result $\mb x^{(*, k)}$ and the corresponding initialization $x^{(*, 0)}$ following \cref{alg:2-stage screening}.
    \item{\textbf{Stage 2 (optimization):} Restart the optimization process with $x^{(*, 0)}$ until the stopping criterion is met\footnote{This is equivalent to warm start (continue optimization) with the intermediate result $\mb x^{*, k}$ with the corresponding Hessian approximation stored. To make the warm start follow the exact same optimization trajectory, the configuration parameter "scaleH0" for \pygranso\, must be turned off, which is different from the default value.} (reaching the tolerance level of stationarity and total constraint violation, or reaching the MaxIter $K$).
    }
\end{enumerate}

\change{The purpose of the two-stage optimization is simply to avoid PWCF spending too much time searching on unpromising optimization trajectories---after a reasonable number of iterations, PWCF tends to only refine the solutions; see (b) and (d) in \cref{Fig: Abaltion-OPT-Traj} later for an example, where PWCF has reached solutions with reasonable quality in very early stages and only improves the solution quality marginally afterward. Picking reasonably good values of $R$, $k$ and $K$ can be simple, e.g., using a small subset of samples and tuning them empirically. We present our result in \cref{Sec:pygranso} using $k=20$ and $K=400$ for max-loss form, $k=50$ and $K=4000$ for min-radius form and $R=10$ for both formulations, as PWCF can produce solutions with sufficient quality in the experiments in \cref{Sec:pygranso}.}

\begin{algorithm}[!tb]
\caption{Selection of $x^{(*, k)}$ and $x^{(*, 0)}$ in the two-stage process}
\label{alg:2-stage screening}
\begin{algorithmic}[1]
\Require Initialization $x^{(r, 0)}$ and the corresponding intermediate optimization results $x^{(r, k)}$.

\If{Any $x^{(r, k)}$ is feasible for \formulation (\ref{eq:NO_form})}
\State Set $x^{(*, k)}$ to be the feasible $x^{(r, k)}$'s with the least objective value.
\Else
\State Set $x^{(*, k)}$ to be the $x^{(r, k)}$ with the least constraint violation.
\EndIf 
\State Set $x^{(*, 0)}$ corresponds to $x^{(*, k)}$ found.
\State \Return{$x^{(*, k)}$ and $x^{(*, 0)}$.}
\end{algorithmic}
\end{algorithm}

\subsection{Techniques specific to max-loss form and min-radius form}
\label{pygranso specific techniques}
In addition to the general techniques above, the following techniques can also help improve the performance of PWCF in solving (\ref{eq:robust_loss}) and (\ref{eq:min_distort}).

\begin{figure}[!tb]
\centering
\begingroup 
\setlength{\tabcolsep}{1pt}
\renewcommand{\arraystretch}{0.8}
\begin{tabular}{cc}
\centering
\includegraphics[width=0.24\textwidth]{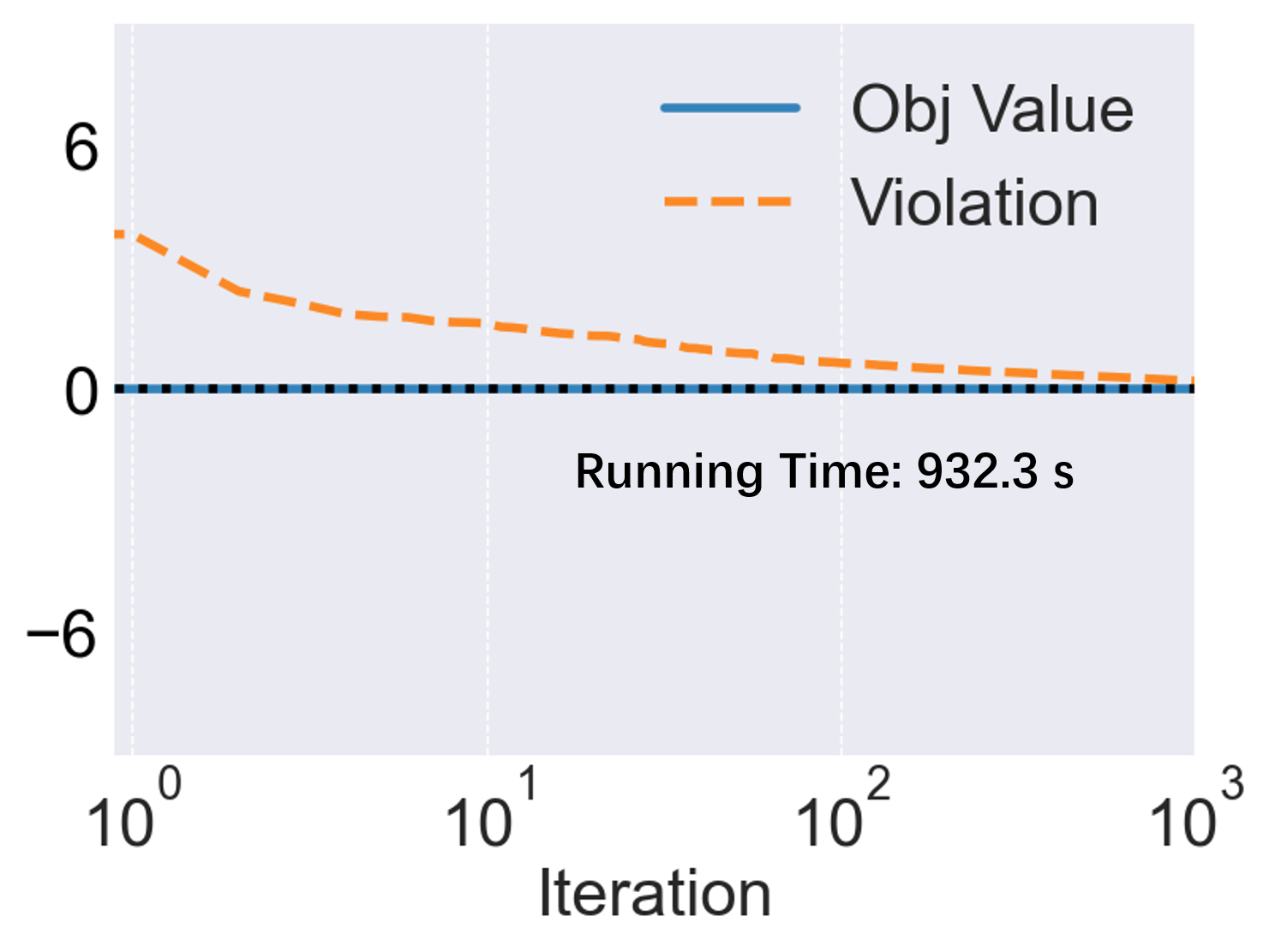}
&\includegraphics[width=0.24\textwidth]{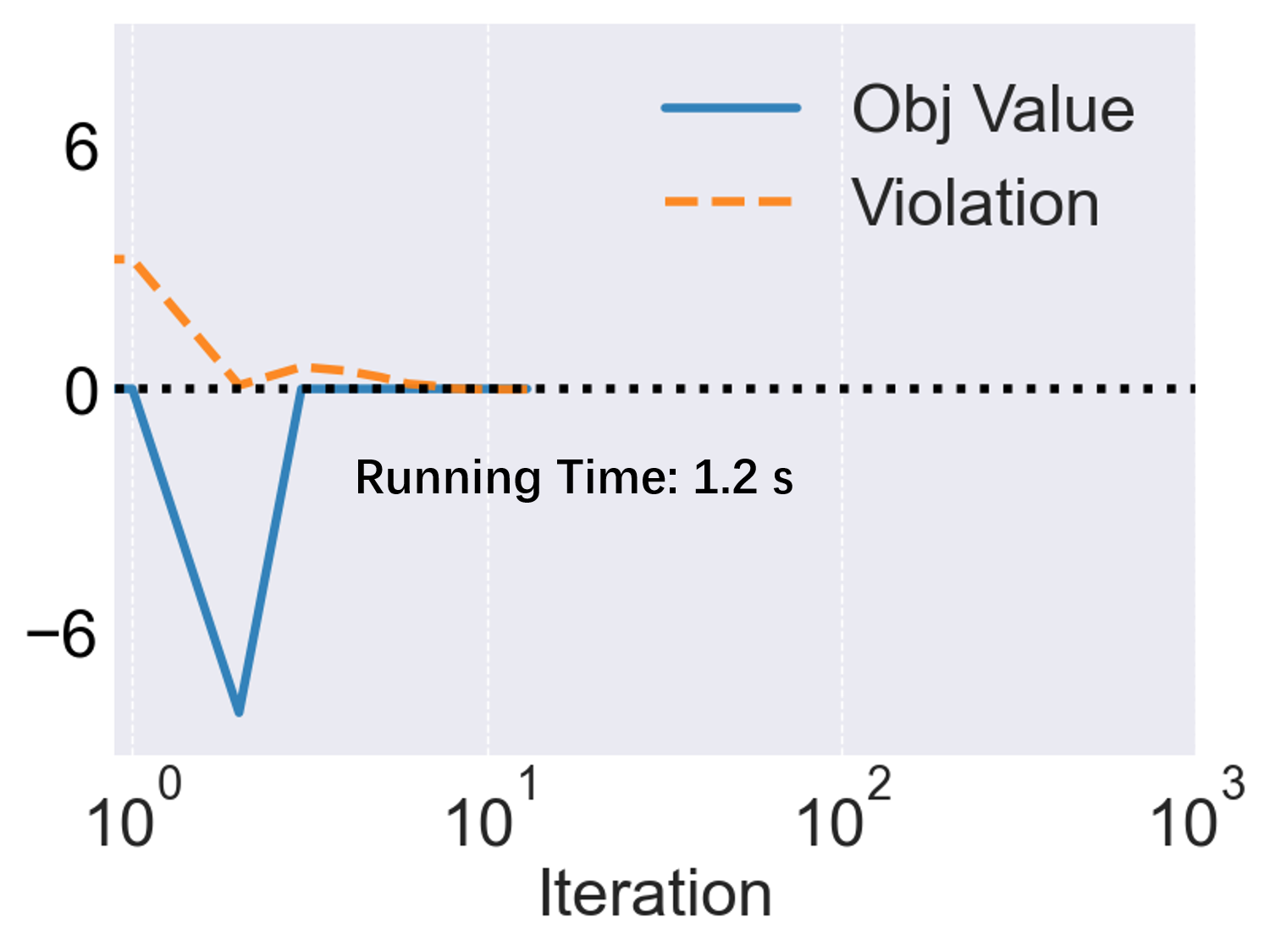}
\\
\small{\textbf{(a)} original $\ell_\infty$} 
&\small{\textbf{(b)} reformulated $\ell_\infty$} 
\end{tabular}
\endgroup 
\caption{Example of \pygranso~optimization trajectories to solve max-loss form (\ref{eq:robust_loss}) with the $\ell_{\infty}$ distance $(\eps=0.03)$ and margin loss $\ell$ (a clipped version described in \cref{subsec: loss clip}) on a CIFAR-10 image. The original form of constraint $\norm{\mb x' - \mb x}_\infty \leq \eps$ is used in \textbf{(a)}, while the reformulated and folded version of constraint is used in \textbf{(b)}. The x-axes denote the iteration number. Here, the optimization is terminated when the constraint violation is smaller than $10^{-2}$. After reformulating and folding the $\ell_\infty$ constraint, the optimization process in \textbf{(b)} runs much faster in terms of both time and iterations needed.} 
\label{Fig: Ablation on Linf Reform}
\end{figure}

\subsubsection{Avoiding sparse subgradients: reformulating $\ell_\infty$ constraints}
\label{subsec: reformulate Linf constraint}
\change{The BFGS-SQP algorithm inside \pygranso~uses the subgradients of the objective and the constraint functions to approximate the (inverse) Hessian of the penalty function, which is used to compute search directions. For the $\ell_\infty$ distance, the subdifferential (the set of subgradients) is:
\begin{align} 
\label{eq:subgrad_linf}
\nonumber
    \partial_{\mb z}  \norm{\mb z}_\infty = \conv \{\mb e_k \sign(z_k): z_k =  \norm{\mb z}_\infty \; \forall\, k\}
\end{align} 
where $\mb e_k$'s are the standard basis vectors, $\conv$ denotes the convex hull, and $\sign(z_k) = z_k/\abs{z_k}$ if $z_k \ne 0$, otherwise $[-1, 1]$.
Any subgradient within the subdifferential set contains no more than $n_k = \abs{\{k: z_k = \norm{\mb z}_\infty\}}$ nonzeros, and is sparse when $n_k$ is small. When all subgradients are sparse, only a handful of optimization variables may be updated on each iteration, which can result in slow convergence. To speed up the overall optimization process, we propose the reformulation:
\begin{equation}
    \label{eq: Linf to box}
    \norm{\mb x - \mb x'}_\infty \le \eps \Longleftrightarrow -\eps \mb 1 \le \mb x - \mb x' \le \eps \mb 1,
\end{equation}
where $\mb 1 \in \RJU^n$ is the all-ones vector. Then, the resulting $n$ box constraints can be folded into a single constraint as introduced in \cref{subsec: folding} to improve efficiency; see \cref{Fig: Ablation on Linf Reform} for an example of such benefits.}

\begin{figure}[!tb]
\centering
\includegraphics[width=0.4\textwidth]{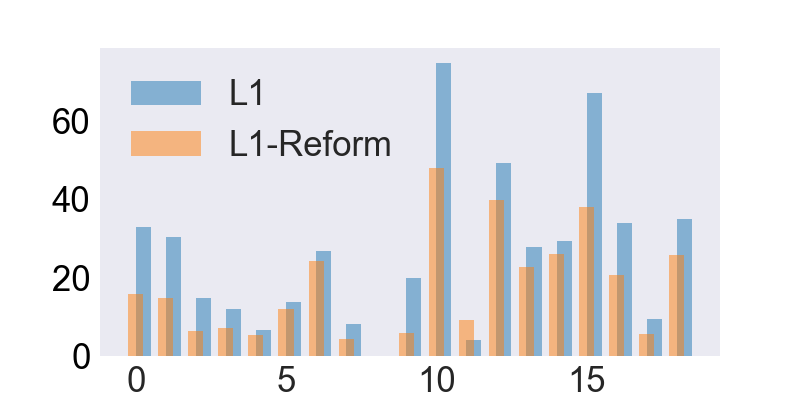}
\caption{Robustness radii found by solving min-radius from with $\ell_1$ distance in the original formulation (\ref{eq:min_distort}) (\textcolor{blue}{blue}) and in the reformulated version (\ref{eq: l1 min form reformualtion}) (\textcolor{orange}{orange}) on $18$ CIFAR-10 images. The x-axis denotes the sample index and the y-axis denotes the radius. As all results have reached the feasibility tolerance, the lower the radius found, the more effective the solver is at handling the optimization problem.} 
\label{fig:L1 Reform Ablation} 
\end{figure}

\subsubsection{Decoupling the update direction and the radius: reformulating $\ell_1$ and $\ell_\infty$ objectives}
\label{subsec: reformulate l1 and linf obj}
It is not surprising that for min-radius form with $\ell_\infty$ distance, it is more effective to solve the reformulated version (\ref{eq: min reform}) than the original one (\ref{eq:min_distort})---(\ref{eq: min reform}) moves the $\ell_\infty$ distance into the constraint, thus allowing us to apply constraint folding technique described in \cref{subsec: reformulate Linf constraint}. In practice, we also find that solving the reformulated version below is more effective when $d$ is the $\ell_1$ distance in min-radius form:
\begin{align} 
\label{eq: l1 min form reformualtion}
   \begin{split}
       & \min_{\mb x'\, , \mb t} \; \mb{1}^{\TJU}\mb t\\
    \st \; & \max_{i \ne y} f_{\mb \theta}^i (\mb x') \ge f_{\mb \theta}^y (\mb x') \\
    & \abs{\mb x_i - \mb x'_i} \leq t_i\, , \quad i = 1, 2, \cdots, n \\
    & \mb x' \in [0, 1]^n.
   \end{split}
\end{align}
where $t_i$ is the $i^{th}$ element of $\mb t$. The newly introduced box constraint $\abs{\mb x_i - \mb x'_i} \leq t_i$ is then folded as described in \cref{subsec: folding}. \cref{fig:L1 Reform Ablation} compares the robustness radius found by solving (\ref{eq:min_distort}) and (\ref{eq: l1 min form reformualtion}) on the same DNN model with $\ell_1$ distance on $18$ CIFAR-10 images, respectively. The radius found by solving (\ref{eq: l1 min form reformualtion}) are much smaller in all but one sample than by solving (\ref{eq:min_distort}), showing that solving (\ref{eq: l1 min form reformualtion}) with constraint-folding is more effective than solving the original form (\ref{eq:min_distort}).

\begin{figure}[!tb]
\centering
\begingroup 
\setlength{\tabcolsep}{1pt}
\renewcommand{\arraystretch}{0.8}
\begin{tabular}{cc}
\centering
\includegraphics[width=0.245\textwidth]{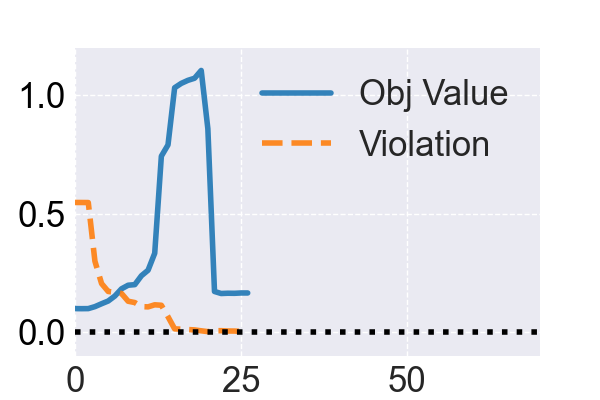}
&\includegraphics[width=0.245\textwidth]{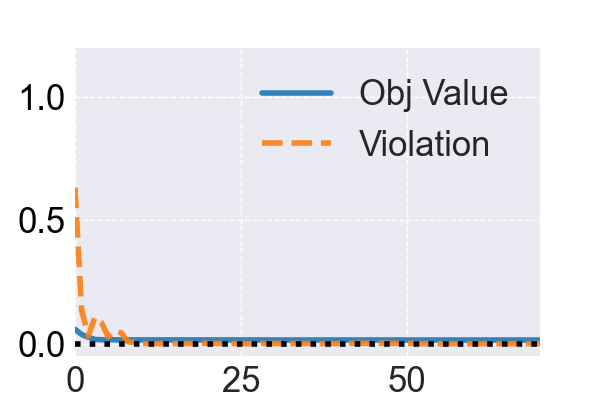}
\\
\small{\textbf{(a)} $\min~t$} 
&\small{\textbf{(b)} $\min~t \cdot \sqrt{n}$} 
\end{tabular}
\endgroup 
\caption{Examples of PWCF optimization trajectories for solving min-radius form (using (\ref{eq: min reform})) with the $\ell_\infty$ metric on a CIFAR-10 image \textbf{(a)} without rescaling and \textbf{(b)} with rescaling. The x-axes are the iteration number. The objective value in \textbf{(b)} is scaled back to the original value $t$ for fair comparisons with \textbf{(a)}. In Figure \textbf{(a)}, optimization is terminated around the $25^{\text{th}}$ iteration due to line-search failure, and the final solution has a much higher (worse) objective value than \textbf{(b)}. Here, we use $10^{-8}$ for the stationarity and constraint violation tolerances to rule out the possibility that the bad solution quality of \textbf{(a)} is due to premature termination.}
\label{Fig: Ablation on Min Linf Rescale}
\end{figure}

\subsubsection{Numerical re-scaling to balance objective and constraints}
\label{subsec: granso resscale}
The steering procedure for determining search directions in \pygranso~(see line 5 in \cref{{alg:steering}}, \cref{Sec:granso_summary}) can only successively decrease the influence of the objective function in order to push towards feasible solutions. Therefore, if the scale of the objective value is too small compared to the initial constraint violation value, numerical problems can arise---\pygranso~will try hard to push down the violation amount, while the objective hardly decreases. This can occur when solving min-radius form with the $\ell_\infty$ distance, using the reformulated version (\ref{eq: min reform}). The objective $t$ is expected to have the order of magnitude $10^{-2}$ while the folded constraints are the $\ell_2$ norm of a $n$-dimensional vector (e.g., $n = 3 \times 32 \times 32 = 3072$ for a CIFAR-10 image). To address this, we simply rebalance the objective by a constant scalar---we minimize $t\cdot\sqrt{n}$ instead of $t$, which can help PWCF perform as effectively as in other cases; see an example in \cref{Fig: Ablation on Min Linf Rescale}.

\begin{figure}[!tb]
\centering
\begingroup 
\setlength{\tabcolsep}{1pt}
\renewcommand{\arraystretch}{0.8}
\begin{tabular}{ccc}
\centering
\textbf{\small{loss}}
&{ }
&\textbf{\small{gradient magnitude}}
\\
\cline{1-1}\cline{3-3}
\vspace{-1em}
\\
\includegraphics[width=0.24\textwidth]{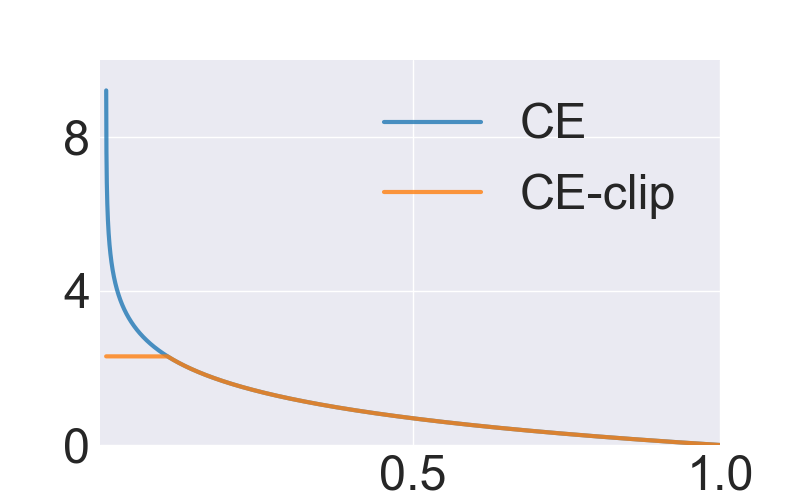}
&{ }
&\includegraphics[width=0.24\textwidth]{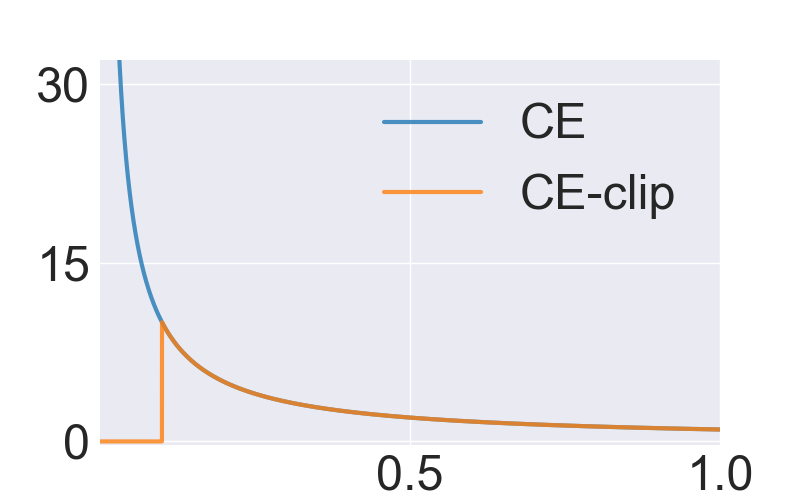}
\\
\textbf{\small{(a)}}
&{ }
&\textbf{\small{(b)}}
\\
\includegraphics[width=0.24\textwidth]{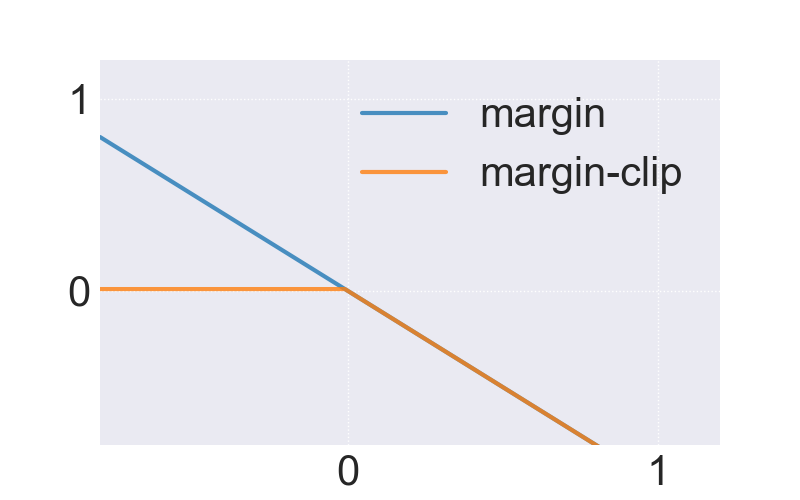}
&{ }
&\includegraphics[width=0.24\textwidth]{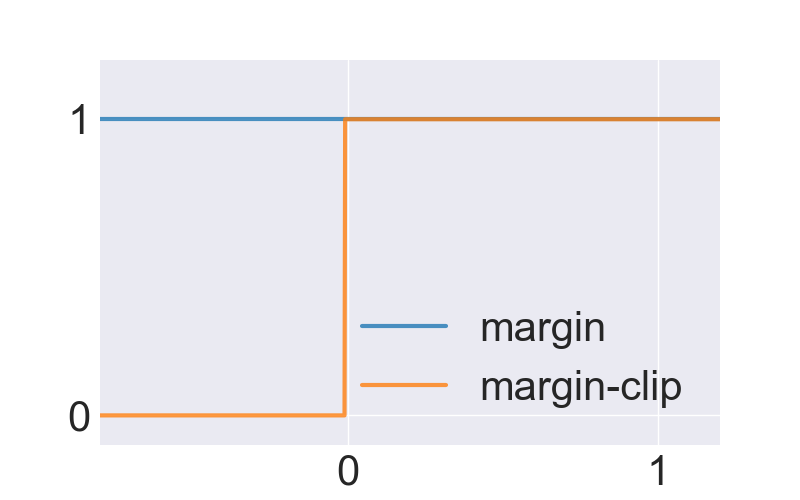}
\\
\textbf{\small{(c)}}
&{ }
&\textbf{\small{(d)}}
\end{tabular}
\endgroup 
\caption{Visualizations of loss clipping. The cross-entropy loss and its clipped version is shown in \textbf{(a)} and the corresponding norm of gradients is shown in \textbf{(b)}. The clipped version shown in \textbf{(a)} is the one used in the CIFAR-10 experiments. The x-axes in \textbf{(a)} and \textbf{(b)} are the network output value $f_{\mb \theta}^y (\mb x')$ after softmax regularization.The margin loss and its clipped version is shown in \textbf{(c)}, and the corresponding norm of gradients is shown in\textbf{(d)}. The x-axes in \textbf{(c)} and \textbf{(d)} are the value $\max_{i \ne y} f_{\mb \theta}^i (\mb x') - f_{\mb \theta}^y (\mb x')$ before the softmax regularization, which follows the definition of $\ell_{ML}$ in \cref{eq: margin loss}.} 
\label{fig:loss_clipping} 
\end{figure}

\subsubsection{Loss clipping in solving max-loss form with PWCF}
\label{subsec: loss clip}
When solving  max-loss form using the popular cross-entropy (CE) and margin losses as $\ell$ (both are unbounded in maximization problems, see \cref{fig:loss_clipping}), the objective value and its gradient can easily dominate over making progress towards pushing down the constraint violations. \change{While \pygranso~tries to make the best joint progress of these two components, PWCF can still persistently prioritize maximizing the objective over constraint satisfaction, which can lead to bad scaling problems in the early stages and result in slow progress towards feasibility. To address this, we propose using the margin and CE losses with clipping at critical maximum values. For margin loss, the maximum value is clipped at $0.01$, since $\ell_{\mathrm{ML}} > 0$ indicates a successful attack. Similarly, CE loss can be clipped as follows: the attack success must occur when the true logit output is less than $1/N_c$ (after softmax normalization is applied), where $N_c$ is the number of classes. The corresponding critical value is thus $\ln N_c$---approximately $2.3$ when $N_c=10$ (the number of total classes for the CIFAR-10 dataset) and $4.6$ when $N_c=100$ (for ImageNet-100\footnote{ImageNet-100~\cite{laidlaw2021perceptual} is a subset of ImageNet, where samples with label index in $\{0, 10, 20, \cdots, 990\}$ are selected. ImageNet-100 validation set thus contains in total $5000$ images with $100$ classes.} dataset). Loss clipping significantly speeds up the optimization process of PWCF to solve max-loss form; see \cref{Fig: ablation loss clipping} for an example.}

\begin{figure}[!tb]
\centering
\begingroup 
\setlength{\tabcolsep}{1pt}
\renewcommand{\arraystretch}{0.8}
\begin{tabular}{cc}
\centering
\includegraphics[width=0.24\textwidth]{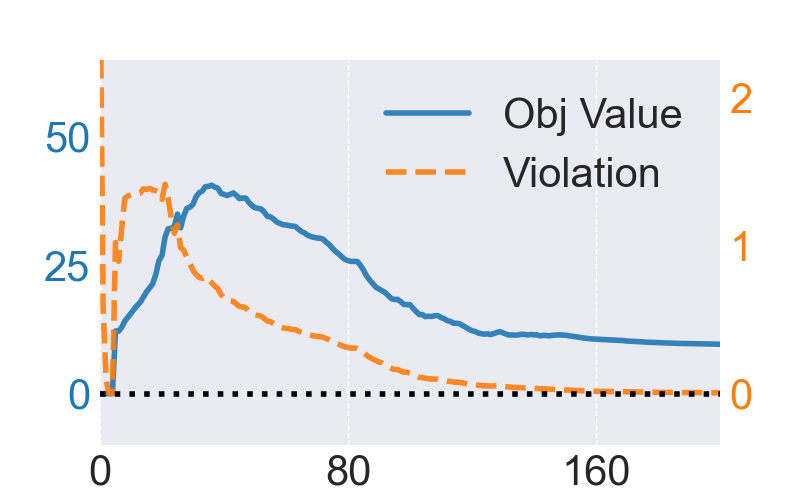}
&\includegraphics[width=0.24\textwidth]{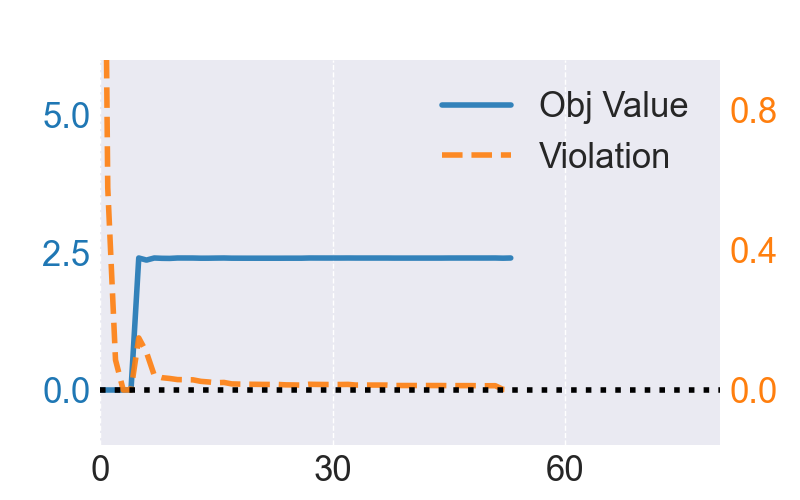}
\\
\small{\textbf{(a)} CE} 
&\small{\textbf{(b)} CE-clip}
\\
\includegraphics[width=0.24\textwidth]{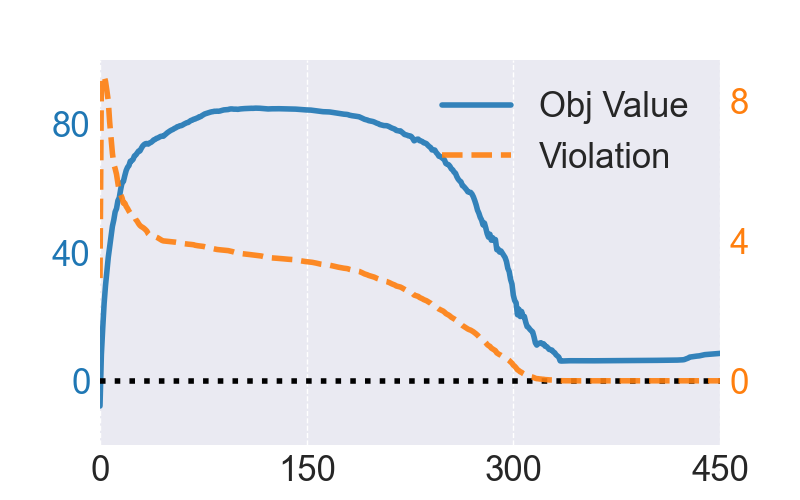}
&\includegraphics[width=0.24\textwidth]{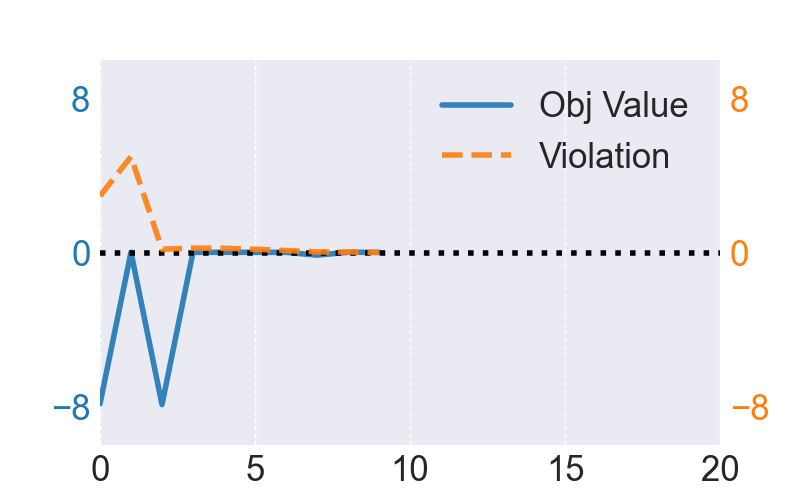}
\\
\small{\textbf{(c)} margin} 
&\small{\textbf{(d)} margin-clip}
\end{tabular}
\endgroup 
\caption{PWCF optimization trajectories using cross-entropy (CE) loss, margin loss, and their clipped versions to solve the max-loss form using the $\ell_2$ metric on a CIFAR-10 image. The x-axes represent the number of iterations. For both CE and margin loss without clipping (\textbf{(a)} and \textbf{(c)}), PWCF progresses slowly toward feasibility (dashed orange curve), while with clipping (\textbf{(b)} and \textbf{(d)}), PWCF finds an optimal and feasible solution within only a few iterations.} 
\label{Fig: ablation loss clipping}
\end{figure}
\begin{table*}[!tb]
\caption{Summary of PWCF formulations to solve max-loss form and min-radius form. \textcolor{blue}{Blue} highlights the reformulation of $d$ used in min-radius form, and the \textcolor{red}{red} highlights the use of constraint-folding. Note that we distinguish the vector operator $\mb{\max}$ (entry-wise maximal value of a set of vectors, returning a vector with the same dimension) from the scalar operator $\max$ (maximal value of a given vector); \textbf{concat} denotes vector concatenations. }
\label{alg:pwcf}
\begin{center}
\setlength{\tabcolsep}{1.0mm}{
\begin{tabular}{c c c c c c c c c}

{}
&{Metric $d$}
&\small{$\ell_1$}
&{~~~}
&\small{$\ell_2$}
&{~~~}
&\small{$\ell_\infty$}
&{~~~}
&\small{Others}
\\
\toprule
{}
&{$\max$}
&{$\ell\paren{\mb y, f_{\theta}\paren{\mb x'}}$}
&{ }
&{$\ell\paren{\mb y, f_{\theta}\paren{\mb x'}}$}
&{ }
&{$\ell\paren{\mb y, f_{\theta}\paren{\mb x'}}$}
&{ }
&{$\ell\paren{\mb y, f_{\theta}\paren{\mb x'}}$}
\\
{}
&\multicolumn{7}{c}{ }
\vspace{-0.7em}
\\
{}
&{$\st$}
&{$\norm{\mb x' - \mb x}_1 \le \eps$}
&{ }
&{$\norm{\mb x' - \mb x}_2 \le \eps$}
&{ }
&{\textcolor{red}{$\|$}$\mathbf{\max} \{\mb x' - \mb x - \eps \mb 1 \text{,}$}
&{ }
&{$d\paren{\mb x', \mb x} \le \eps$}
\\
{}
&{}
&{}
&{ }
&{}
&{ }
&{$\quad 0 \}$\textcolor{red}{$\|_{2}$}$\le 0$}
&{ }
&{}
\\
{}
&\multicolumn{8}{c}{ }
\vspace{-0.4em}
\\
{}
&{~}
&\multicolumn{7}{c}{\textcolor{red}{$\|$}$\mb{\max} \{ \textbf{concat} \paren{- \mb x', ~ \mb x'-\mb 1}, ~ \mb 0\}$\textcolor{red}{$\|_2$} $\le 0$}
\\
\midrule
\vspace{-1em}
\\
{}
&{$\min$}
&{\textcolor{blue}{$\mb{1}^{\TJU} \mb t$}}
&{ }
&{$\norm{\mb x' - \mb x}_2$}
&{ }
&{\textcolor{blue}{$t$}}
&{ }
&{$d \paren{\mb x', \mb x}$}
\\
{ }
&\multicolumn{7}{c}{ }
\vspace{-0.7em}
\\
{}
&{$\st$}
&{\textcolor{red}{$\|$}$\mb{\max} \{ ~~~~~~~~~~~~~~~~~~~$}
&{ }
&{ }
&{ }
&{\textcolor{red}{$\|$}$\mb{\max} \{ ~~~~~~~~~~~~~~~~~~~~~$}
&{ }
&{}
\\
{}
&{}
&{$\textbf{concat} (\mb x' - \mb x - \mb t,~~$}
&{ }
&{Not}
&{ }
&{$\textbf{concat} (\mb x' - \mb x - t\mb 1,~~$}
&{ }
&{Not}
\\
{ }
&{ }
&{$- \mb x' + \mb x - \mb t), ~ \mb 0 \}$\textcolor{red}{$\|_2$}$\le 0$}
&{ }
&{Applicable}
&{ }
&{$- \mb x' + \mb x - t \mb 1), ~ \mb 0 \}$\textcolor{red}{$\|_2$}$\le 0$}
&{ }
&{Applicable}
\\
{ }
&\multicolumn{7}{c}{ }
\vspace{-0.4em}
\\
{}
&{~}
&\multicolumn{7}{c}{$\max_{i \ne y} f_{\mb \theta}^i (\mb x') \ge f_{\mb \theta}^y (\mb x')$}
\\
&\multicolumn{7}{c}{ }
\vspace{-0.7em}
\\
{}
&{~}
&\multicolumn{7}{c}{\textcolor{red}{$\|$}$\mb{\max} \{ \textbf{concat} \paren{- \mb x', ~ \mb x'-\mb 1}, ~ \mb 0\}$\textcolor{red}{$\|_2$} $\le 0$}
\\
\bottomrule
\end{tabular}
}
\end{center}
\end{table*}
\begin{figure*}[!tb]
\vspace{-1em}
\centering
\begingroup 
\setlength{\tabcolsep}{1pt}
\renewcommand{\arraystretch}{0.8}
\begin{tabular}{c c c c c c}
\centering
{}
&\multicolumn{2}{c}{\textbf{max-loss form}}
&{ }
&\multicolumn{2}{c}{\textbf{min-radius form}}
\\
\cline{2-3}\cline{5-6}
\vspace{-1em}
\\
{}
&\includegraphics[width=0.24\textwidth]{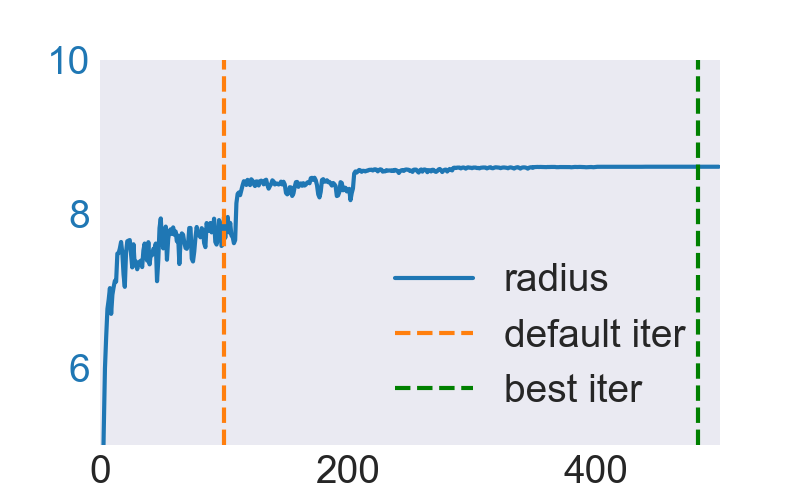}
&\includegraphics[width=0.24\textwidth]{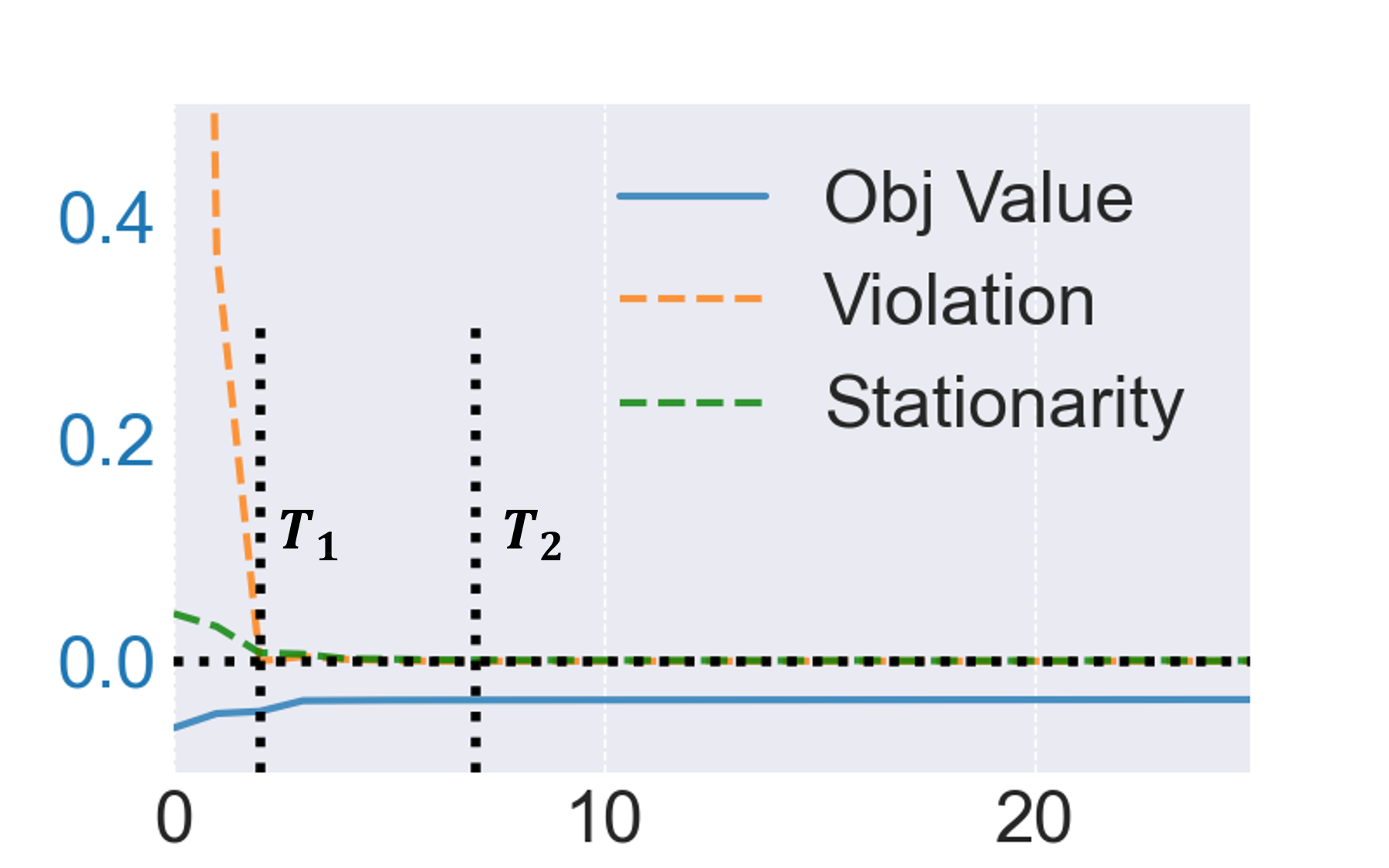}
&{ }
&\includegraphics[width=0.24\textwidth]{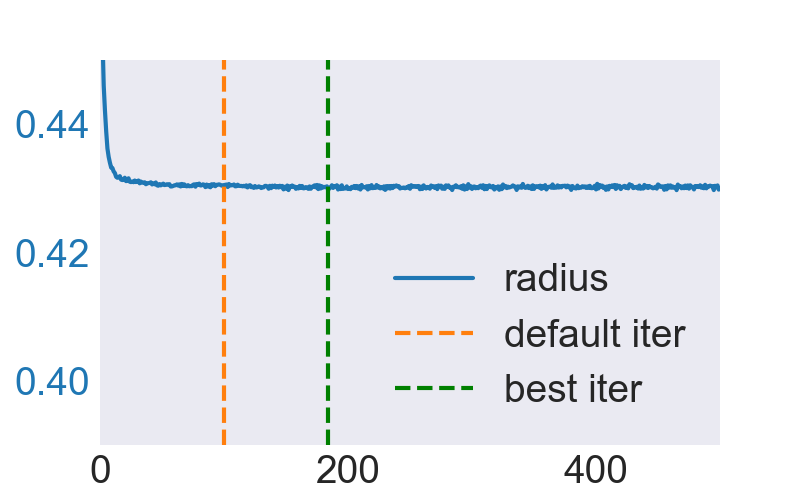}
&\includegraphics[width=0.24\textwidth]{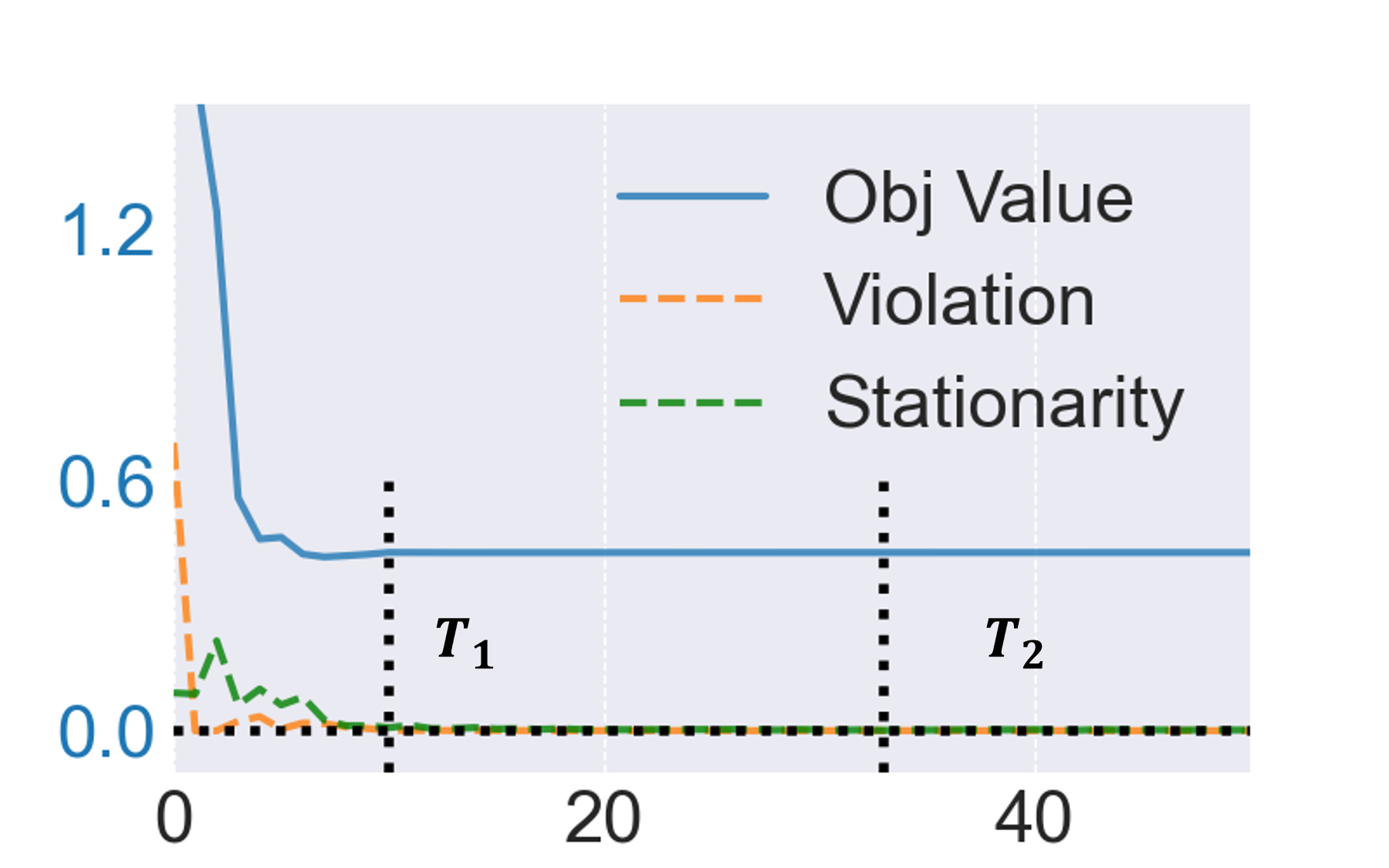}
\\
{}
&\small{\textbf{(a)} APGD} 
&\small{\textbf{(b)} PWCF - margin loss}
&{ }
&\small{\textbf{(c)} FAB} 
&\small{\textbf{(d)} PWCF}

\end{tabular}
\endgroup 
\caption{Examples of the optimization trajectories of APGD (with CE loss), FAB and PWCF for solving max-loss form and min-radius form of a CIFAR-10 image with the $\ell_2$ distance. The x-axes represent the iteration numbers. In \textbf{(a)} and \textbf{(c)}, the dashed orange lines are the default MaxIter used in \texttt{AutoAttack} and the dashed green lines are the iteration where the best feasible solutions are found. In \textbf{(b)} and \textbf{(d)}, we set the stationarity and constraint violation tolerances to be $10^{-8}$ for the termination condition of the PWCF to better visualize the optimization curve. \textbf{$T_1$} and \textbf{$T_2$} mark the iterations where both the stationarity and total violation reach $10^{-2}$ and $10^{-3}$, respectively. We can observe that the objective values only improve marginally after $T_1$, \change{indicating that $10^{-2}$ is a reasonable tolerance level for PWCF to trade for efficiency while maintaining the solution quality.}}
\label{Fig: Abaltion-OPT-Traj}
\end{figure*}

\subsection{Summary of PWCF to solve the max-loss form and min-radius form}
\label{subsec: PWCF techniques demo}
We now summarize PWCF for solving max-loss form and min-radius form in \cref{alg:pwcf}. We also provide information on PWCF's reliability and running time analysis on these two problems. \change{We will present the ability of PWCF to handle general distance metrics in \cref{Sec: experiments and results}.}

\subsubsection{Reliability}
\label{subsec: reliability}
As mentioned in \cref{Sec:introduction}, popular numerical methods only relying on preset MaxIter are subject to premature termination (see \cref{Fig:APGD-FAB-Terminate-Iter} to review). In contrast, PWCF terminates either when the stopping criterion is met (solution quality meet the preset tolerance level), or by MaxIter with stationarity estimate and total violation to assess whether further optimization is necessary (see \cref{Fig: PWC-Max-Terminate-Iter} to review). \cref{Fig: Abaltion-OPT-Traj} provide extra examples of the optimization trajectories of APGD, FAB and PWCF, which again shows why PWCF's termination mechanism is more reliable.

\begin{figure*}[!tb]
\vspace{1em}
\centering
\begingroup 
\setlength{\tabcolsep}{1pt}
\renewcommand{\arraystretch}{0.8}
\begin{tabular}{c c c c c c}
\centering
{}
&\multicolumn{2}{c}{\textbf{CIFAR-10}}
&{ }
&\multicolumn{2}{c}{\textbf{ImageNet-100}}
\\
\cline{2-3}\cline{5-6}
\vspace{-1em}
\\
{}
&\includegraphics[width=0.24\textwidth]{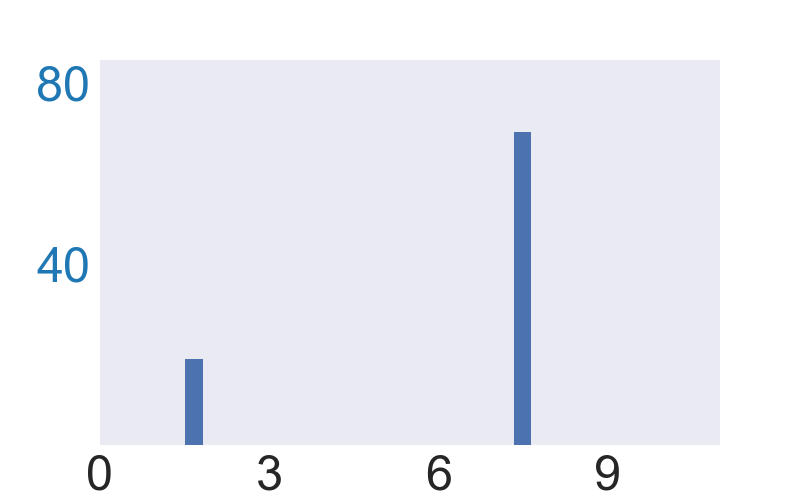}
&\includegraphics[width=0.24\textwidth]{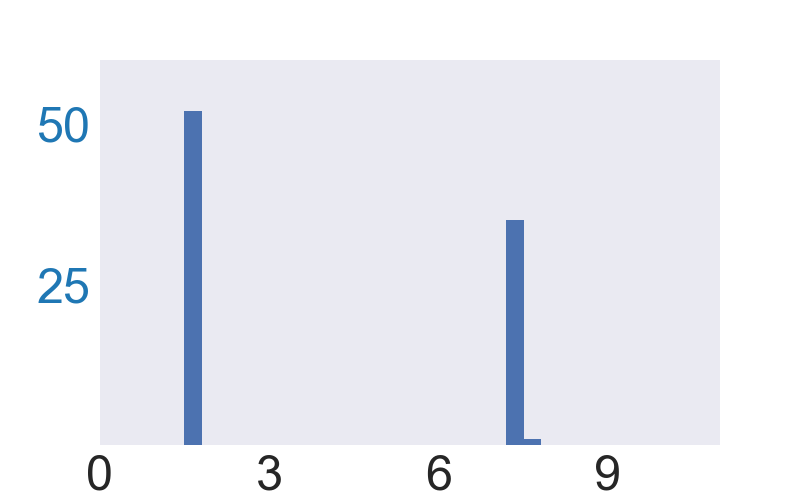}
&{ }
&\includegraphics[width=0.24\textwidth]{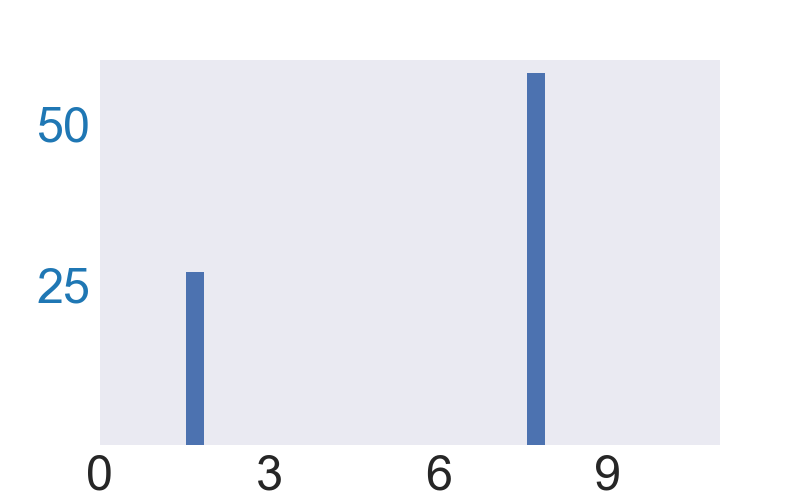}
&\includegraphics[width=0.24\textwidth]{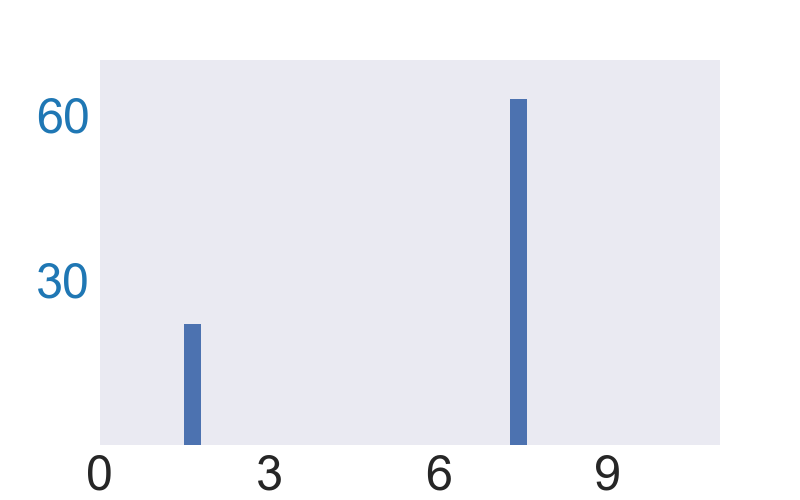}
\\
{}
&\small{\textbf{(a)} APGD - $\ell_2$} 
&\small{\textbf{(b)} APGD - $\ell_\infty$}
&{ }
&\small{\textbf{(c)} APGD - $\ell_2$} 
&\small{\textbf{(d)} APGD - $\ell_\infty$}
\\
&\includegraphics[width=0.24\textwidth]{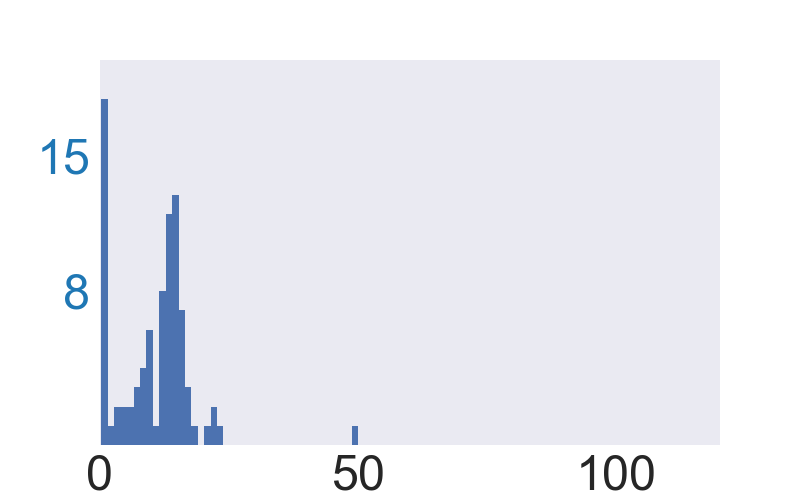}
&\includegraphics[width=0.24\textwidth]{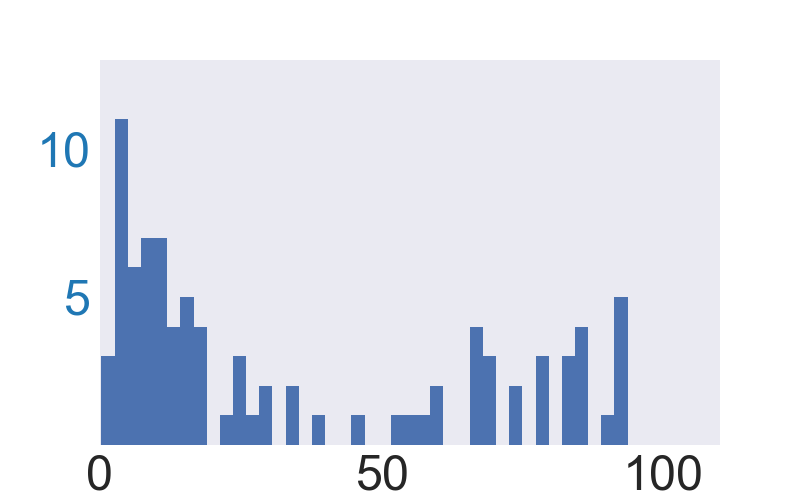}
&{ }
&\includegraphics[width=0.24\textwidth]{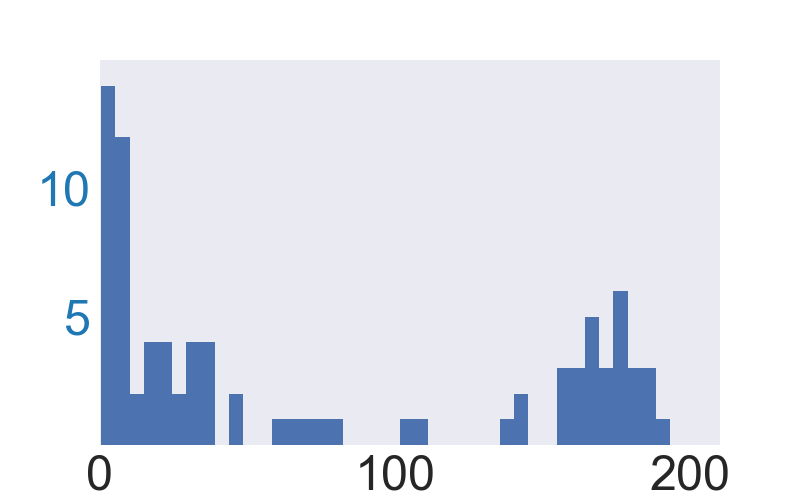}
&\includegraphics[width=0.24\textwidth]{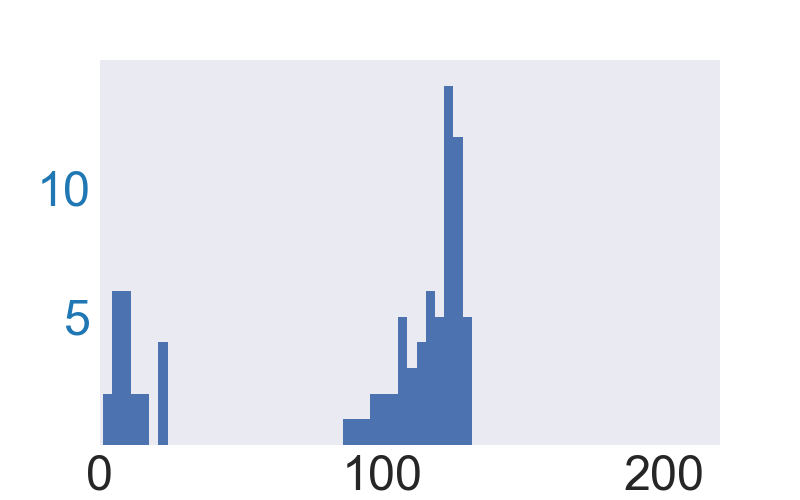}
\\
{}
&\small{\textbf{(e)} PWCF - $\ell_2$} 
&\small{\textbf{(f)} PWCF - $\ell_\infty$}
&{ }
&\small{\textbf{(g)} PWCF - $\ell_2$} 
&\small{\textbf{(h)} PWCF - $\ell_\infty$}
\end{tabular}
\endgroup 
\caption{Histograms of the termination time (x-axes, in seconds) of APGD (using \texttt{AutoAttack} default setup: five random restarts with 100 as MaxIter per run) and PWCF (ours) to solve max-loss form on $88$ images from CIFAR-10 and $85$ images from ImageNet-100. For this problem, PWCF can be slower than APGD by about one order of magnitude.}
\label{Fig: PWC-Terminate-Time-Max}
\end{figure*}
\begin{figure*}[!tb]
\vspace{1em}
\centering
\begingroup 
\setlength{\tabcolsep}{1pt}
\renewcommand{\arraystretch}{0.8}
\begin{tabular}{c c c c c c}
\centering
{}
&\multicolumn{2}{c}{\textbf{CIFAR-10}}
&{ }
&\multicolumn{2}{c}{\textbf{ImageNet-100}}
\\
\cline{2-3}\cline{5-6}
\vspace{-1em}
\\
{}
&\includegraphics[width=0.24\textwidth]{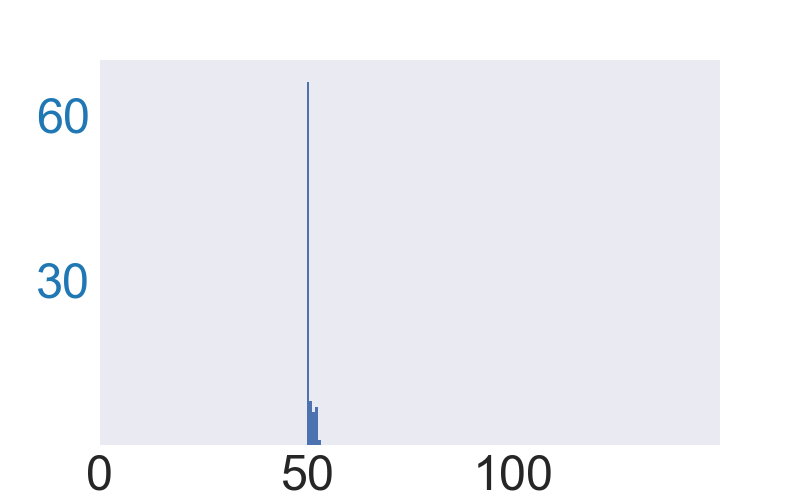}
&\includegraphics[width=0.24\textwidth]{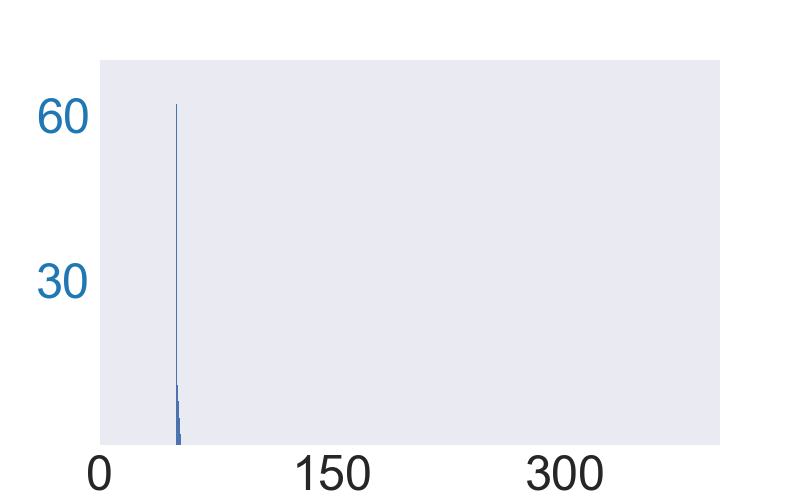}
&{ }
&\includegraphics[width=0.24\textwidth]{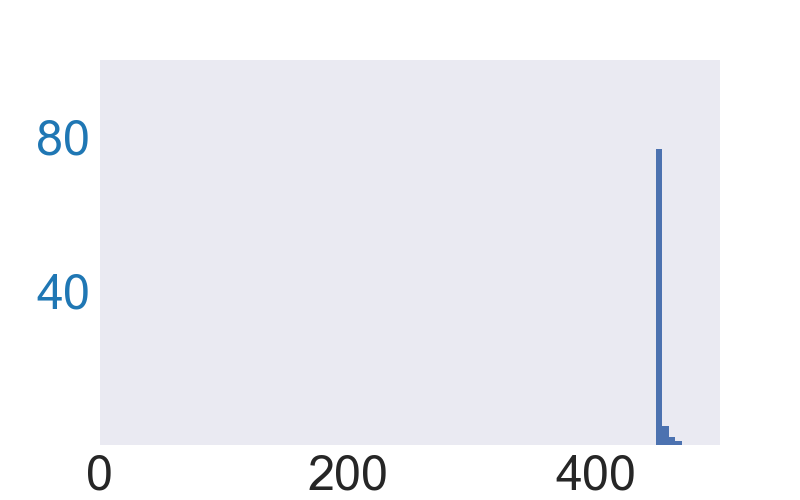}
&\includegraphics[width=0.24\textwidth]{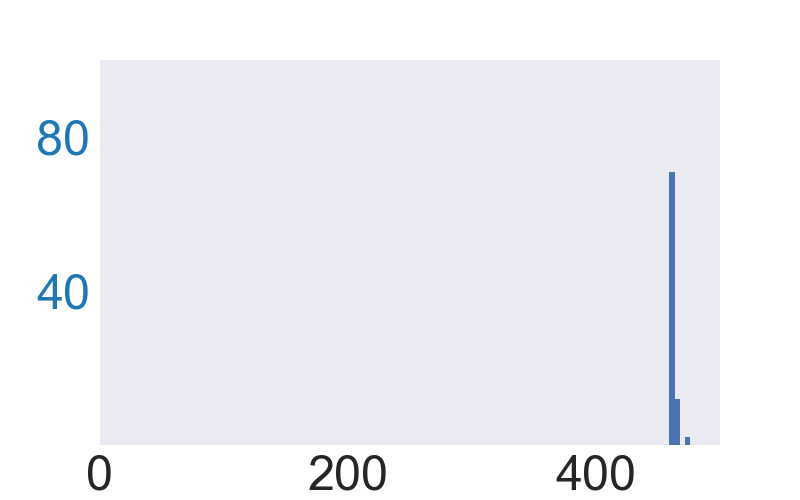}
\\
{}
&\small{\textbf{(a)} FAB - $\ell_2$} 
&\small{\textbf{(b)} FAB - $\ell_\infty$}
&{ }
&\small{\textbf{(c)} FAB - $\ell_2$} 
&\small{\textbf{(d)} FAB - $\ell_\infty$}
\\
&\includegraphics[width=0.24\textwidth]{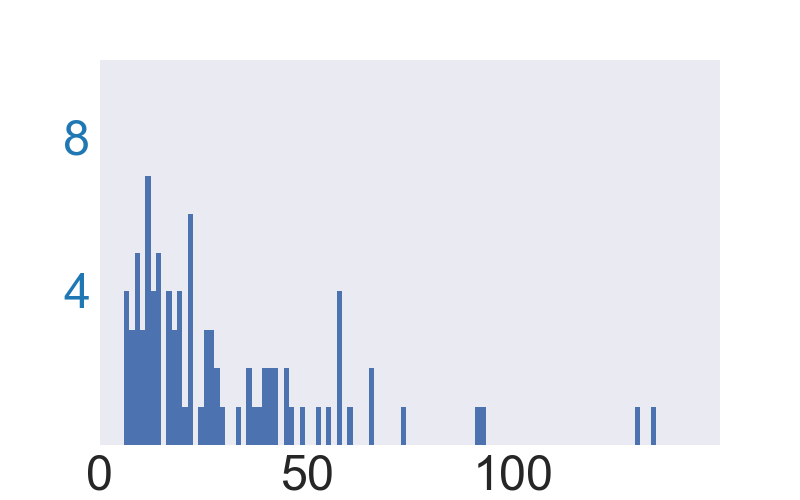}
&\includegraphics[width=0.24\textwidth]{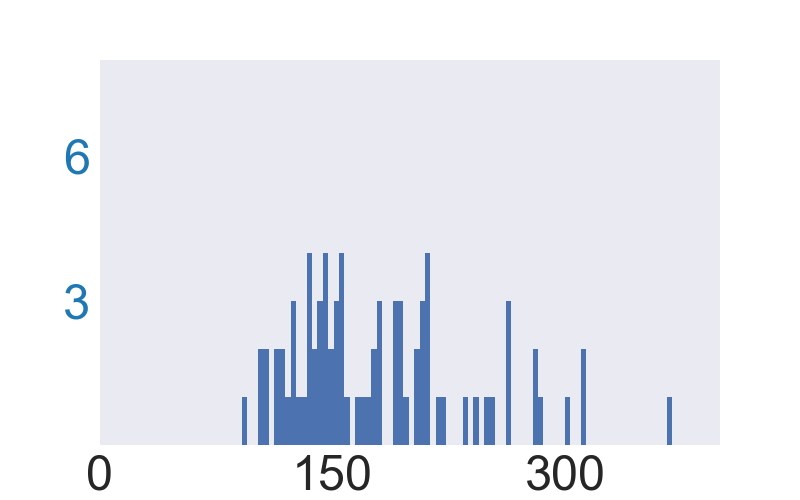}
&{ }
&\includegraphics[width=0.24\textwidth]{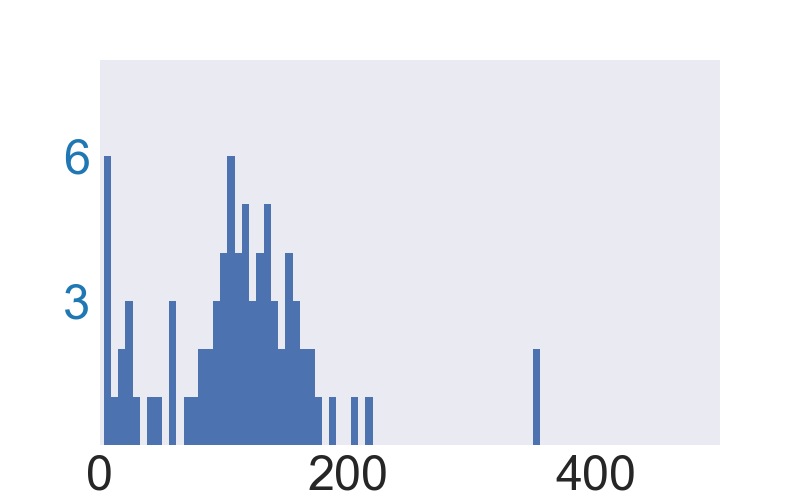}
&\includegraphics[width=0.24\textwidth]{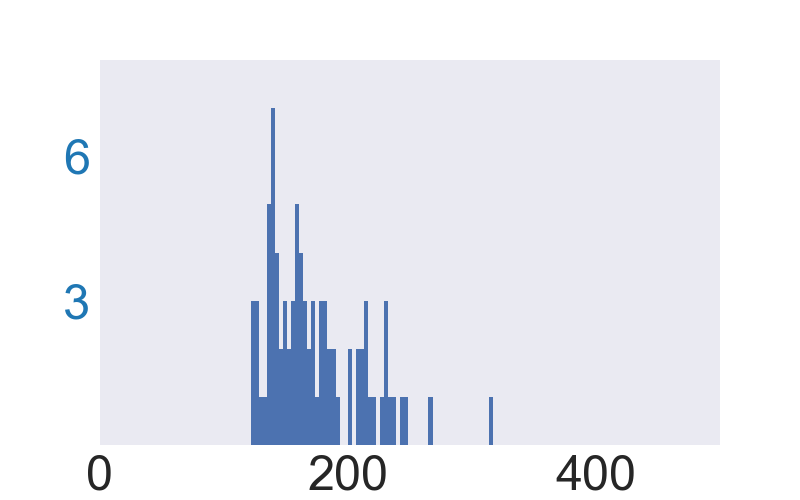}
\\
{}
&\small{\textbf{(e)} PWCF - $\ell_2$} 
&\small{\textbf{(f)} PWCF - $\ell_\infty$}
&{ }
&\small{\textbf{(g)} PWCF - $\ell_2$} 
&\small{\textbf{(h)} PWCF - $\ell_\infty$}
\end{tabular}
\endgroup 
\caption{Histograms of the termination time (x-axis, in seconds) for FAB (using \texttt{AutoAttack} default setup: five random restarts with 100 as MaxIter per run) and PWCF to solve min-radius form on $88$ images from CIFAR-10 and $85$ images from ImageNet-100. PWCF is on average faster than FAB (comparing \textbf{(a)} \& \textbf{(e)}, \textbf{(c)} \& \textbf{(g)} and \textbf{(d)} \& \textbf{(h)}) except for the $\ell_\infty$ case on CIFAR-10 images (\textbf{(b)} \& \textbf{(f)}).}
\label{Fig: PWC-Terminate-Time}
\end{figure*}
\begin{table*}[!tb]
\caption{\textbf{Comparison between PWCF and other numerical algorithms in solving max-loss form with the $\ell_{1}$, $\ell_{2}$ and $\ell_{\infty}$ distances.} \textbf{Metric ($\eps$)} denotes the choice of $d$ and the corresponding perturbation budget $\eps$ used. We report the model's \textcolor{blue}{clean} and robust accuracy (numbers are in $(\%)$) for comparison---lower robust accuracy reflects more effective optimization. We test APGD and PWCF using both \textbf{CE} and margin (\textbf{M}) loss. \textbf{CE+M} column shows the robust accuracy achieved by combining adversarial samples found using CE and margin losses; \textbf{A\&P} shows the robust accuracy achieved by combining all perturbation samples found by using APGD and PWCF with both CE and M losses. We highlight the best performance achieved by a single combination of solver and loss with \underline{underlines} for each $d$, and highlight the best performance achieved in \textbf{bold}.}

\label{tab: granso_l1_acc}
\begin{center}
\setlength{\tabcolsep}{1.0mm}{
\begin{tabular}{l c c c c c c c c c c c c c c}
{}
&{}
&{}
&{}
&\multicolumn{3}{c}{\small{\textbf{APGD}}}
&{}
&\multicolumn{3}{c}{\small{\textbf{PWCF(ours)}}}
&{}
&\small{\textbf{Square}}
&{}
&{}
\\
\cline{5-7}\cline{9-11}\cline{13-13}
\vspace{-12pt}
\\
{\small{\textbf{Dataset}}}
&{\small{\textbf{Metric ($\eps$)}}}
& \small{\textbf{Clean}}
&{}
& \small{\textbf{CE}}
& \small{\textbf{M}}
& \small{\textbf{CE+M}}
& {}
& \small{\textbf{CE}}
& \small{\textbf{M}}
& \small{\textbf{CE+M}}
& {}
& \small{\textbf{M}}
&{}
&\small{\textbf{A\&P}}
\\
\toprule
{\small{CIFAR-10}}
&\small{$\ell_{1} (12)$}
&\textcolor{blue}{73.29}
&{}
&{0.97}
&\underline{0.00}
&{0.00}
&{}
&{17.93}
&{0.01}
&{0.01}
&{}
&{2.28}
&{}
&\textbf{0.00}
\\

\cline{2-15}
\vspace{-12pt}
\\
{}
&\small{$\ell_{2} (0.5)$}
&\textcolor{blue}{94.61}
&{}
&{81.81}
&{81.06}
&{80.92}
&{}
&{81.99}
&\underline{81.02}
&{80.87}
&{}
&{87.9}
&{}
&\textbf{80.77}
\\
\cline{2-15}
\vspace{-12pt}
\\
{}
&\small{$\ell_{\infty} (0.03)$}
&\textcolor{blue}{90.81}
&{}
&{69.44}
&\underline{67.71}
&{67.33}
&{}
&{88.71}
&{68.20}
&{68.17}
&{}
&{71.6}
&{}
&\textbf{67.26}
\\
\midrule
\midrule
{\small{ImageNet}-\small{100}}
&\small{$\ell_{2} (4.7)$}
&\textcolor{blue}{75.04}
&{}
&\underline{42.44}
&{44.06}
&{40.86}
&{}
&{42.50}
&{43.52}
&{40.60}
&{}
&{63.1}
&{}
&\textbf{40.46}
\\
\cline{2-15}
\vspace{-12pt}
\\
{}
&\small{$\ell_\infty (0.016)$}
&\textcolor{blue}{75.04}
&{}
&\underline{46.78}
&{47.54}
&{45.20}
&{}
&{73.92}
&{47.72}
&{47.72}
&{}
&{59.9}
&{}
&\textbf{45.12}
\\
\bottomrule
\end{tabular}
}
\end{center}
\end{table*}

\subsubsection{Running cost}
\label{subsec: running cost}
We now consider run-time comparisons of APGD and FAB v.s. PWCF in solving max-loss form and min-radius form on CIFAR-10 and ImageNet images, as in \cref{Fig: PWC-Terminate-Time-Max} and \cref{Fig: PWC-Terminate-Time}. Our computing environment uses an AMD Milan 7763 64-core processor and a NVIDIA A100 GPU (40G version). We use $10^{-2}$ as PWCF's stationarity and violation tolerances to benchmark PWCF's running cost. The results show that PWCF can be faster than FAB (except for the $\ell_\infty$ case on CIFAR-10 images) when solving min-radius form (\cref{Fig: PWC-Terminate-Time}), but can be about $10$ times slower than APGD when solving max-loss form (\cref{Fig: PWC-Terminate-Time-Max}).  Again, we remark that solution reliability should come with a higher priority than speed for RE; we leave improving the speed of PWCF as future work. The run-time result of PWCF implies that PWCF may be favorable for RE where reliability and accuracy are crucial but may be non-ideal to be used in training pipelines.

\section{Performance of PWCF in solving max-loss form and min-radius form}
\label{Sec: experiments and results}
In this section, we show the effectiveness of PWCF in solving max-loss form and min-radius form with general distance metrics $d$. First in \cref{min max l1 l2 linf}, we compare the performance of PWCF with other existing numerical algorithms when $d$ is the $\ell_1$, $\ell_2$, and $\ell_\infty$ distance. Next in \cref{min max l15 l8}, we take the $\ell_{1.5}$ and $\ell_8$ distances as examples to show that PWCF can effectively handle max-loss form and min-radius form with general $\ell_p$ metrics. Finally in \cref{subsec: min max perceptual}, we show that PWCF can also handle both formulations with the perceptual distance (PD, a non-$\ell_p$ metric). In this section, due to the variety of capacity differences of the DNN models used, we conservatively use $k=40$ and $K=400$ for experiments on max-loss form, $k=50$ and $K=4000$ for min-radius form on CIFAR-10 dataset, $k=200$ and $K=5000$ for min-radius form on ImageNet-100 dataset and $r=10$ for all cases to avoid possible premature terminations.

\subsection{Solving max-loss form and min-radius form with the $\ell_1$, $\ell_2$ and $\ell_\infty$ distance}
\label{min max l1 l2 linf}
We now compare the performance of PWCF with other existing numerical algorithms  in solving max-loss form and min-radius form when $d$ is the $\ell_1$, $\ell_2$, and $\ell_\infty$ distance to show that PWCF can solve both formulations effectively.

\begin{figure*}[!tb]
\centering
\begingroup 
\setlength{\tabcolsep}{1pt}
\renewcommand{\arraystretch}{0.8}
\begin{tabular}{ccc}
\centering
\includegraphics[width=0.33\textwidth]{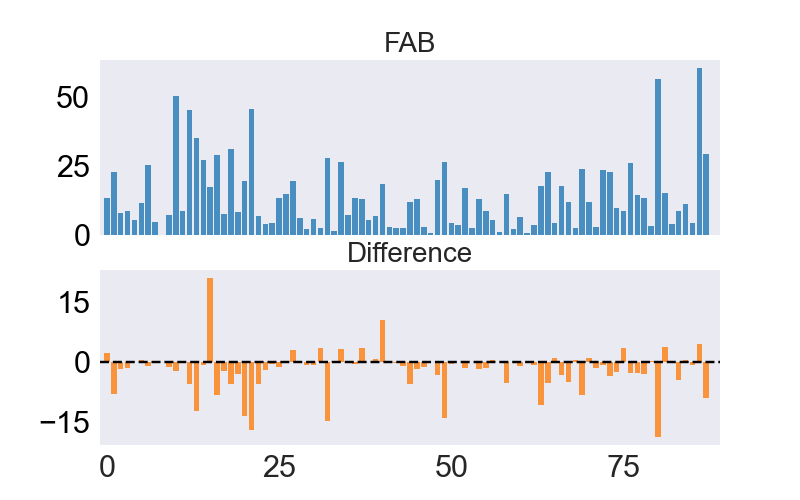}
&\includegraphics[width=0.33\textwidth]{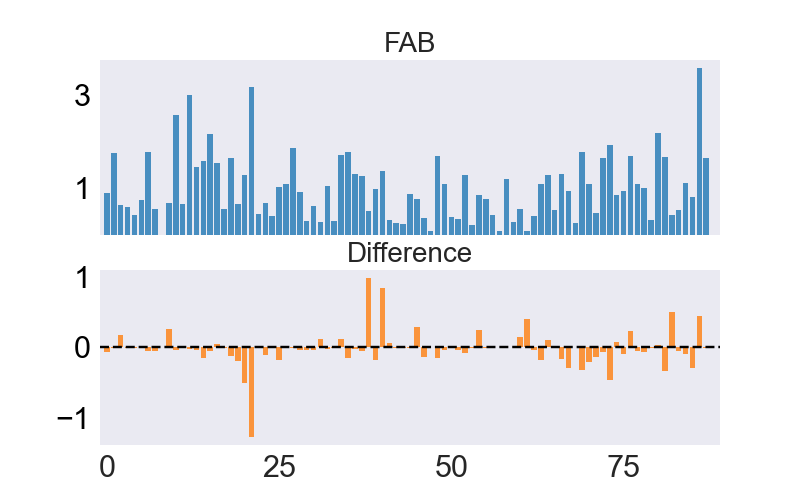}
&\includegraphics[width=0.33\textwidth]{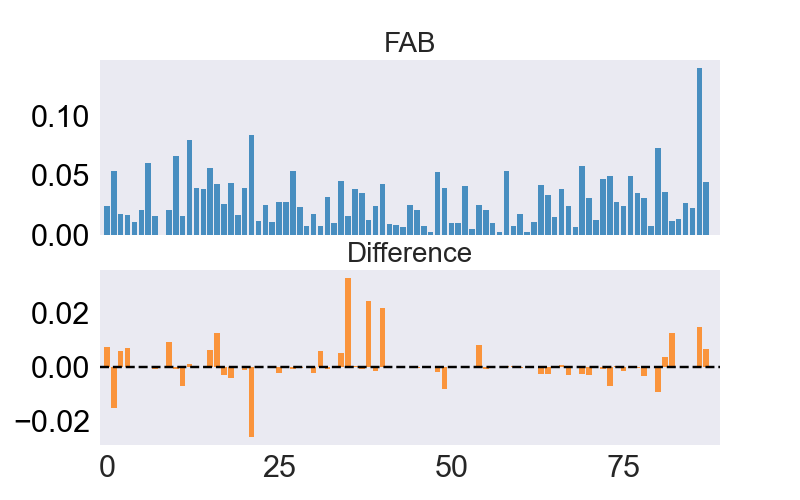}
\\
\small{CIFAR-10 - $\ell_1$} 
&\small{CIFAR-10 - $\ell_2$} 
&\small{CIFAR-10 - $\ell_\infty$}
\\
\includegraphics[width=0.33\textwidth]{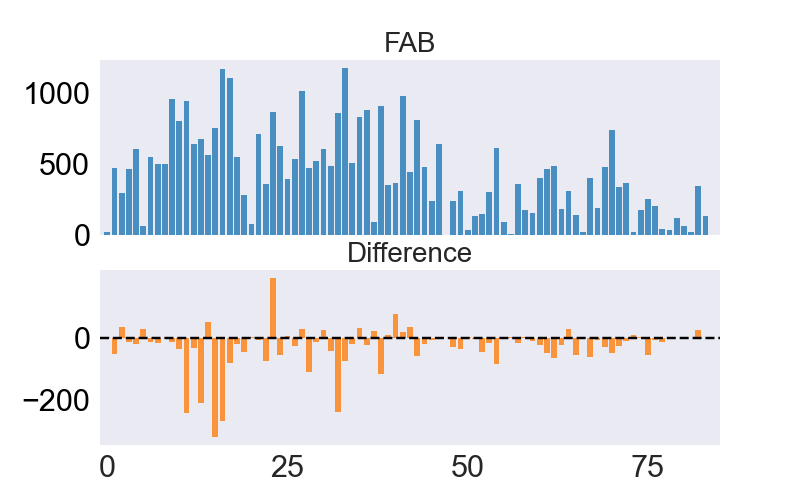}
&\includegraphics[width=0.33\textwidth]{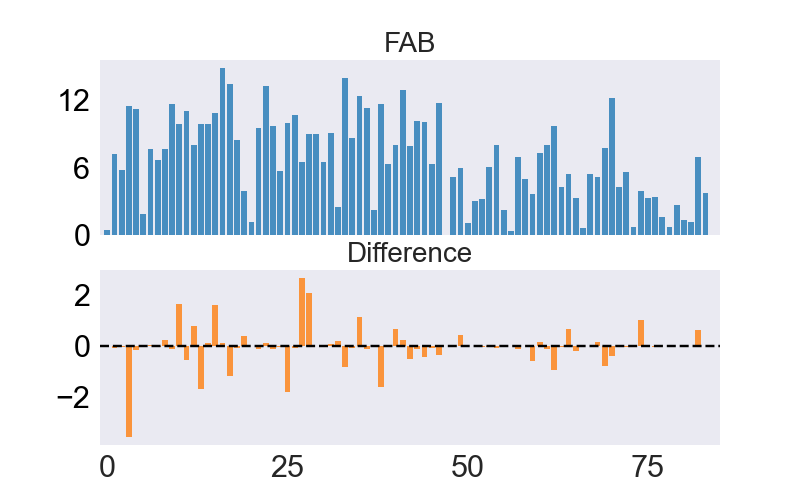}
&\includegraphics[width=0.33\textwidth]{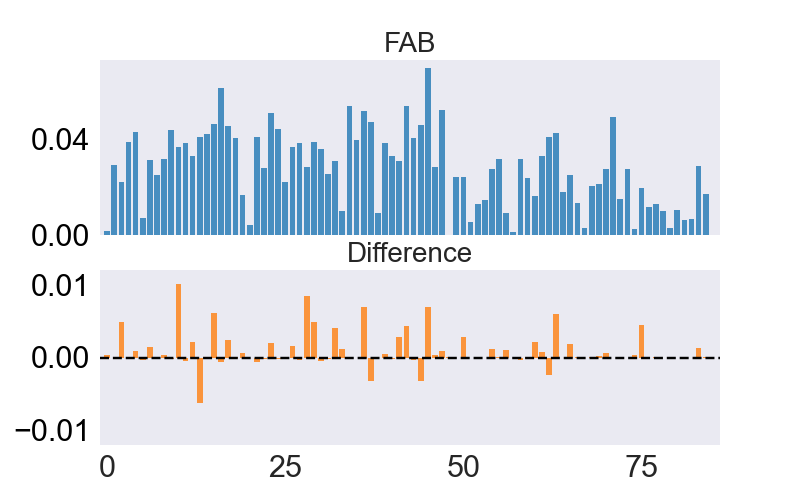}
\\
\small{ImageNet-100 - $\ell_1$} 
&\small{ImageNet-100 - $\ell_2$} 
&\small{ImageNet-100 - $\ell_\infty$}
\end{tabular}
\endgroup 
\caption{Comparison of the per-sample robustness radius between PWCF(ours) and FAB for $88$ images from CIFAR-10 and $85$ images from ImageNet-100. In each subfigure, the $1^{\text{st}}$ row shows the sample-wise robustness radius found by FAB, and the $2^{\text{nd}}$ row shows the radius difference between PWCF and FAB (PWCF minus FAB): values $< 0$ indicating a better result found by PWCF than FAB.} 
\label{Fig:FAB-min-radius-details}
\end{figure*}
\begin{table*}[!tb]
\caption{\textbf{Statistical comparison of the minimal radius found by PWCF(ours) and FAB in solving the \emph{min-form} with $\ell_1$, $\ell_2$ and $\ell_\infty$ as the metric $d$ (summary of \cref{Fig:FAB-min-radius-details}).} We experimented with $88$ fixed images from CIFAR-10 and ImageNet-100 dataset with each $d$. We report the \textbf{Mean}, \textbf{Median} and \textbf{standard deviation (STD)} of the minimal perturbation radius---\emph{lower} radius means \emph{more effective} minimization. The columns under \textbf{Difference} are calculated based on the \emph{sample-wise} radius difference (PWCF radius minus FAB), where \textbf{Mean} and \textbf{Median} $\leq 0$ indicates PWCF performs better than FAB on average.}
\label{tab: granso_min}
\begin{center}
\setlength{\tabcolsep}{1.0mm}{
\begin{tabular}{l c c c c c c c c c c c c}
{}
&{}
&\multicolumn{3}{c}{\small{\textbf{FAB}}}
&{}
&\multicolumn{3}{c}{\small{\textbf{PWCF (ours)}}}
&{}
&\multicolumn{3}{c}{\small{\textbf{Difference}}}
\\
\cline{3-5}\cline{7-9}\cline{11-13}
\vspace{-12pt}
\\
{\small{\textbf{Dataset}}}
&{\small{\textbf{Metric} $d$}}
& \small{\textbf{Mean}}
& \small{\textbf{Median}}
& \small{\textbf{STD}}
& {}
& \small{\textbf{Mean}}
& \small{\textbf{Median}}
& \small{\textbf{STD}}
& {}
& \small{\textbf{Mean}}
& \small{\textbf{Median}}
& \small{\textbf{STD}}
\\
\toprule
\small{CIFAR-10}
&\small{$\ell_{1}$}
&{13.92}
&{10.50}
&{12.63}
&{}
&{12.02}
&{7.29}
&{11.46}
&{}
&{\textbf{-1.89}}
&{\textbf{-0.81}}
&{5.24}
\\
\cline{2-13}
\vspace{-12pt}
\\
{}
&\small{$\ell_{2}$}
&{1.02}
&{0.90}
&{0.71}
&{}
&{1.00}
&{0.88}
&{0.69}
&{}
&\textbf{-0.019}
&\textbf{-0.015}
&{0.250}
\\
\cline{2-13}
\vspace{-12pt}
\\
{}
&\small{$\ell_{\infty}$}
&{0.0298}
&{0.0245}
&{0.0220}
&{}
&{0.0298}
&{0.0252}
&{0.0224}
&{}
&\textbf{0.0008}
&\textbf{-0.00008}
&{0.007}
\\
\midrule
\midrule
{\small{ImageNet-100}}
&\small{$\ell_{1}$}
&{435.4}
&{400.6}
&{303.9}
&{ }
&{408.1}
&{390.6}
&{284.7}
&{}
&\textbf{-27.31}
&\textbf{-13.46}
&{70.55}
\\
\cline{2-13}
\vspace{-12pt}
\\
{}
&\small{$\ell_2$}
&{6.75}
&{6.81}
&{3.82}
&{}
&{6.71}
&{6.88}
&{3.76}
&{}
&\textbf{-0.042}
&\textbf{-0.035}
&{0.758}
\\
\cline{2-13}
\vspace{-12pt}
\\
{}
&\small{$\ell_\infty$}
&{0.028}
&{0.028}
&{0.016}
&{}
&{0.029}
&{0.029}
&{0.016}
&{}
&\textbf{0.0009}
&\textbf{0.00002}
&{0.002}
\\
\bottomrule
\end{tabular}
}
\end{center}
\end{table*}

\subsubsection{PWCF offers competitive attack performance in solving max-form with diverse solutions}
\label{subsec: max formulation with l1, l2, linf}
We use several publicly available models that are adversarially trained by $\ell_1$\footnote{For $\ell_1$ experiment, we use the model `L1.pt' from \url{https://github.com/locuslab/robust_union/tree/master/CIFAR10}, which is adversarially trained by $\ell_1$-attack.}, $\ell_2$, and $\ell_\infty$\footnote{For $\ell_2$ ad $\ell_\infty$ experiments, we use the models `L2-Extra.pt' and `Linf-Extra.pt' from \url{https://github.com/deepmind/deepmind-research/tree/master/adversarial_robustness}, with the WRN-70-16 network architecuture.} attacks on CIFAR-10, and by perceptual attack (See \cref{subsec: formulation with general lp norm} for details) on ImageNet-100~\cite{laidlaw2021perceptual}.\footnote{We use the `pat\_alexnet\_0.5.pt' from \url{https://github.com/cassidylaidlaw/perceptual-advex}, where the authors tested and showed its $\ell_2$- and $\ell_\infty$- robustness in the original work.} We then compare the robust accuracy achieved by these selected models by solving max-loss form with PWCF and APGD\footnote{We implement the margin loss on top of the original APGD.} from \texttt{AutoAttack}. The bound $\eps$ for each case is set to follow the common practice of RE\footnote{The $\eps$ of $\ell_2$ and $\ell_\infty$ for CIFAR-10 are chosen from \url{https://robustbench.github.io/}; $\ell_1$ for CIFAR-10 is chosen from \url{https://github.com/locuslab/robust_union}; $\ell_2$ and $\ell_\infty$ for ImageNet-100 are from \cite{laidlaw2021perceptual}.}. The result is shown in \cref{tab: granso_l1_acc}.

We can conclude from \cref{tab: granso_l1_acc} that \textbf{1)} PWCF performs strongly and comparably to APGD on solving max-loss form with the $\ell_1$, $\ell_2$ and $\ell_\infty$ distances, especially when margin loss is used. The weak performance of PWCF on $\ell_1$ and $\ell_\infty$ cases using CE loss is likely due to poor numerical scaling of the loss itself: the gradient magnitude of CE loss grows much larger as the loss value increases (see \cref{fig:loss_clipping} for a visualization example of the loss). Therefore, as the CE loss increases during the optimization of max-loss form while the constraint violation scale remains unchanged or even decreases, PWCF may suffer from the imbalanced contributions from the objective and constraint. In contrast, the gradient scale is always $1$ using the margin loss and PWCF will not have similar struggles. APGD has an explicit step-size rule different from the vanilla PGD algorithms to improve attack performance under CE loss~\cite{croce2020reliable}, while PWCF does not have special handling for this case. In fact, APGD methods also prefer the use of margin loss over CE as pointed out in~\cite{croce2020reliable}, although the consideration is different from PWCF. \textbf{2)} Combining all successful attack samples found by APGD and PWCF using CE and margin loss (column A\&P in \cref{tab: granso_l1_acc}) achieves the lowest robust accuracy for all distance metrics---PWCF and APGD provide diverse and complementary solutions in terms of attack effectiveness. A direct message here is that lacking diversity (e.g., solving max-loss form with a restricted set of algorithms) will result in overestimated robust accuracy. Note that~\cite{CarliniEtAl2019Evaluating} also remarks that the diversity of solvers matters more than the superiority of individual solver, which motivates \texttt{AutoAttack} to include Square Attack---a zero-th order black-box attack method that does not perform strongly itself as shown in \cref{tab: granso_l1_acc}. We will provide further discussions on the importance of diversity later in \cref{sec:pattern_theory}, based on the differences in the solution patterns found in solving max-loss form. 

\subsubsection{PWCF provides competitive solutions to min-radius form}
\label{subsec: min formulation with l1, l2 and linf}
We take models adversarially trained on CIFAR-10\footnote{We use model `pat\_self\_0.5.pt' from \url{https://github.com/cassidylaidlaw/perceptual-advex}.} and ImageNet-100\footnote{The same ImageNet-100 model used in \cref{tab: granso_l1_acc}.} and compare the robustness radii found by solving min-radius form with PWCF and FAB in \cref{Fig:FAB-min-radius-details}, and \cref{tab: granso_min} summarizes the mean, median, and standard deviation of the results in \cref{Fig:FAB-min-radius-details}. From the column Mean and Median in \cref{tab: granso_min}, we can conclude that PWCF performs on average \textbf{1)} better than FAB in solving min-radius form with the $\ell_1$ and $\ell_2$ distances, and \textbf{2)} comparably to FAB for the $\ell_\infty$ case.

\subsection{Solving max-loss form and min-radius form with general distance metrics}
\label{subsec: formulation with general lp norm}
As highlighted in \cref{sec:background}, a major limitation of the existing numerical methods is that they mostly handle limited choice of $d$. On the contrary, PWCF stands out as a convenient choice for other general distances. We now present solving max-loss form and min-radius form with $\ell_{1.5}$, $\ell_{8}$ norm and the perceptual distance (PD, a non-$\ell_p$ distance that involves a DNN) by PWCF. To the best of our knowledge, no prior work has studied handling general distance metrics in the two constraint optimization problems;~\cite{laidlaw2021perceptual} has proposed 3 algorithms to solve max-loss form with PD, which will be compared with PWCF in the following sections.

\subsubsection{Solving min-radius form and max-loss form with $\ell_{1.5}$ and $\ell_{8}$ distances}
\label{min max l15 l8}
Due to the lack of existing methods for comparison, we conduct the following experiments to show the effectiveness of PWCF:
\begin{itemize}[leftmargin=*]
    \item We first apply PWCF to solve min-radius form with the $\ell_{1.5}$ and $\ell_{8}$ distances, and compare the robustness radii with the $\ell_2$ results found in \cref{Fig:FAB-min-radius-details}. One necessary condition for effective optimization is that the robustness radius found using different $\ell_p$ metrics should have $\ell_{1.5} \ge \ell_2 \ge \ell_{8}$. \cref{fig: L1.5 L8 min radius plot} shows the per-sample radii found by PWCF under $\ell_{1.5}$ and $\ell_{8}$ metrics and confirms the satisfaction of the above condition.
    \item We employ two sample-adaptive strategies to set the perturbation budget $\eps$ for PWCF to solve max-loss form with the $\ell_{1.5}$ and $\ell_{8}$ distances: using the same DNN model to evaluate, we take $0.8$ and $1.2$ times the robustness radii found in \cref{fig: L1.5 L8 min radius plot} as $\eps$. If the robustness radii found in \cref{fig: L1.5 L8 min radius plot} are tight, PWCF should achieve close to $100 \%$ robust accuracy under the $0.8$ strategy and $0 \%$ robust accuracy under the $1.2$ strategy, respectively when solving max-loss form. In fact, PWCF achieves $98.33 \%$ and $13 \%$ robust accuracy, respectively for the $\ell_{1.5}$ case; PWCF achieves $93.33 \%$ and $1\%$, respectively for $\ell_{8}$ case---PWCF solves both max-loss form and min-radius form with reasonable quality.
\end{itemize}

\begin{figure}[!tb]
    \centering
    \includegraphics[width=0.45\textwidth]{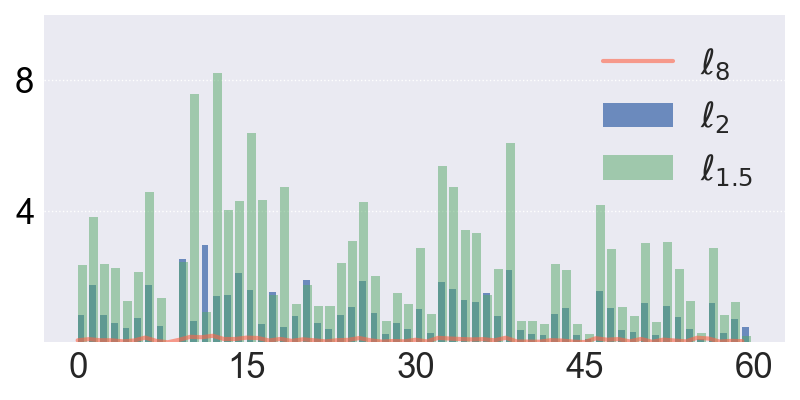}
    \caption{Robustness radii (y-axis) found by PWCF to solve min-radius form with the $\ell_{1.5}$, $\ell_2$ and $\ell_8$ distances on $60$ CIFAR-10 images. The x-axis represent the sample indices. Effective optimization should respect that the robustness radii found with different metrics satisfy $\ell_{1.5} \ge \ell_2 \ge \ell_{8}$ for every sample.} 
    \label{fig: L1.5 L8 min radius plot} 
\end{figure}

\subsubsection{Solving min-radius form and max-loss form with PD}
\label{subsec: min max perceptual}
Similar to the problems with the $\ell_{1.5}$ and $\ell_{8}$ distances, we are unaware of any existing work that has considered solving min-radius form with PD:\footnote{There are several existing variants of the perceptual distance. Here, we consider the LPIPS distance (first introduced in~\cite{Zhang_2018_CVPR}).}
\begin{align}
\label{Eq. LPIPS Constraint} 
\begin{split}
& d(\mb x, \mb x') \doteq \norm{\phi(\mb x) - \phi(\mb x')}_{2}\\
\text{where} \quad & \phi(\mb x) \doteq [~\wh{g}_{1}(\mb x), \dots, \wh{g}_{L}(\mb x)~]
\end{split}
\end{align}
where $\wh{g}_{1}(\mb x), \dots, \wh{g}_{L}(\mb x)$ are the vectorized intermediate feature maps from pre-trained DNNs (e.g., AlexNet). For max-loss form, three methods are proposed in~\citep{laidlaw2021perceptual}: Perceptual Projected Gradient Descent (PPGD), Lagrangian Perceptual Attack (LPA) and its variant Fast Lagrangian Perceptual Attack (Fast-LPA), all developed in~\cite{laidlaw2021perceptual} based on iterative linearization and projection (PPGD), or penalty method (LPA, Fast-LPA), respectively. In~\citep{laidlaw2021perceptual}, a preset perturbation level $\eps=0.5$ is used in max-loss form (termed perceptual adversarial attack, PAT). 

\begin{figure}[!tb]
    \centering
    \includegraphics[width=0.45\textwidth]{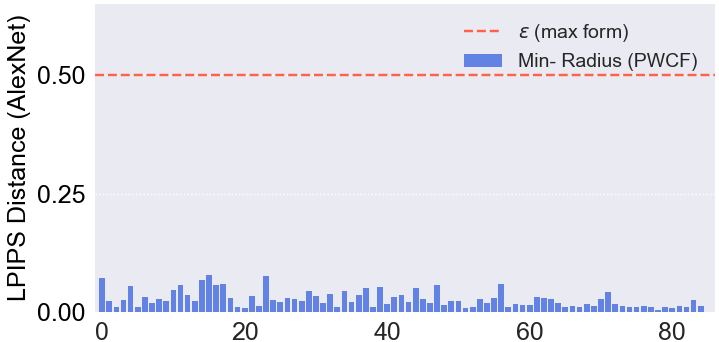}
    \caption{Robustness radius (y-axis) found by solving min-radius form using PWCF with PD on $85$ ImageNet-100 images. The x-axis shows the image indices. The \textcolor{red}{red} dashed line is the proposed bound $\eps$ used to solve max-loss form in~\cite{laidlaw2021perceptual}, which is much larger than each radius found by solving min-radius form with PWCF.} 
    \label{fig: perceptual min radius plot} 
\end{figure}

\begin{table*}[!tb]
\caption{Performance comparison of solving max-loss form with PD for the entire ImageNet-100 validation set, using (clipped) cross-entropy and margin losses, respectively. \textbf{Viol.} reports the ratio of final solutions that violate the constraint; \textbf{Att. Succ.} is the ratio of all feasible and successful attack samples divided by total number of samples---higher indicates more effective optimization performance.}
\label{tab: granso_pat_compare}
\begin{center}
\setlength{\tabcolsep}{1.0mm}{
\begin{tabular}{l c c c c c}
{}
&\multicolumn{2}{c}{\small{\textbf{cross-entropy loss}}}
&{}
&\multicolumn{2}{c}{\small{\textbf{margin loss}}}
\\
\cline{2-3}\cline{5-6}
\vspace{-10pt}
\\
\small{\textbf{Method}}
&\small{\textbf{Viol. ($\%$) $\downarrow$}}
&\small{\textbf{Att. Succ. ($\%$) $\uparrow$}}
&{}
&\small{\textbf{Viol. ($\%$) $\downarrow$}}
&\small{\textbf{Att. Succ. ($\%$) $\uparrow$}}
\\
\toprule
\small{Fast-LPA}
&{$73.8$}
&\textcolor{black}{$3.54$}
&{}
&{$41.6$}
&{$56.8$}
\\
\small{LPA}
&{\textbf{0.00}}
&{$80.5$}
&{}
&{\textbf{0.00}}
&\textcolor{black}{$97.0$}
\\
\small{PPGD}
&{$5.44$}
&{$25.5$}
&{}
&{\textbf{0.00}}
&{$38.5$}
\\
\midrule
\small{PWCF (ours)}
&{$0.62$}
&\textcolor{red}{$93.6$}
&{}
&{\textbf{0.00}}
&\textcolor{red}{$100$}
\\
\bottomrule
\end{tabular}
}
\end{center}
\end{table*}

We first plot the robustness radii\footnote{Using the same model adversarially pretrained on ImageNet dataset as in \cref{tab: granso_min}} found by PWCF in solving min-radius form in \cref{fig: perceptual min radius plot} on $85$ ImageNet-100 images. Comparing each robustness radius and the preset $\eps$ used in the max-loss form proposed in~\citep{laidlaw2021perceptual}, we observe that PWCF finds much smaller robustness radii for every sample. We can conclude that: 1) PWCF solves min-radius form with PD reasonably well; 2) the choice of $\eps$ is too large in~\cite{laidlaw2021perceptual} to be a reasonable perturbation budget in max-loss form.

Next, we solve max-loss form on the ImageNet-100 validation set with $\eps=0.5$\footnote{Using the same model as in \cref{fig: perceptual min radius plot}}, reporting both the attack success rate and the constraint violation rate of the solutions found. According to \cref{fig: perceptual min radius plot}, the sample-wise robustness radii are much smaller than the preset $\eps$, indicating that effective solvers should achieve $100\%$ attack success rate with $0\%$ violations. As shown in \cref{tab: granso_pat_compare}, PWCF with margin loss is the only one that meets this standard.

\begin{figure}
    \centering
    \includegraphics[width=0.2\textwidth]{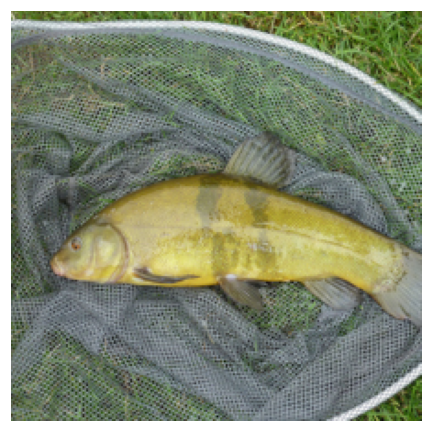}
    \caption{A `fish' image example from the Imagenet-100 validation that is used to generate the pattern visualizations in \cref{Fig:max pattern vis} and \cref{Fig:min pattern vis}.}
    \label{fig:dish image example}
\end{figure}

\section{Different combinations of $\ell$, $d$, and the solvers prefer different patterns}
\label{sec:pattern_theory}

\begin{figure*}[!tb]
\centering
\begingroup 
\setlength{\tabcolsep}{1pt}
\renewcommand{\arraystretch}{0.8}
\begin{tabular}{c c c c c c c c}
\centering
{ }
&{ }
&{ }
&\multicolumn{2}{c}{\textbf{APGD}}
&{ }
&\multicolumn{2}{c}{\textbf{PWCF}}
\\
\cline{4-5}\cline{7-8}
\vspace{-1em}
\\
{ }
&{ }
&{ }
&{cross-entropy}
&{margin}
&{ }
&{cross-entropy}
&{margin}
\\
\textbf{$\ell_1$}
&{ }
&\includegraphics[width=0.08\textwidth]{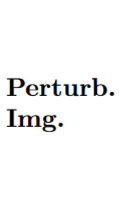}
&\includegraphics[width=0.2\textwidth]{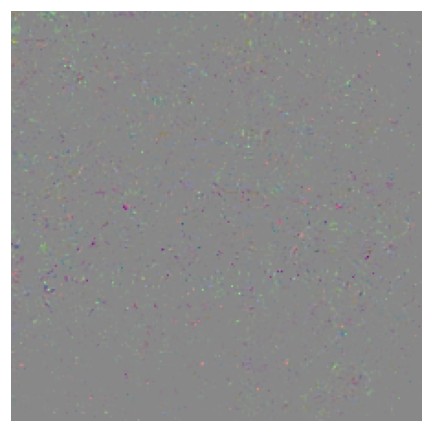}
&\includegraphics[width=0.2\textwidth]{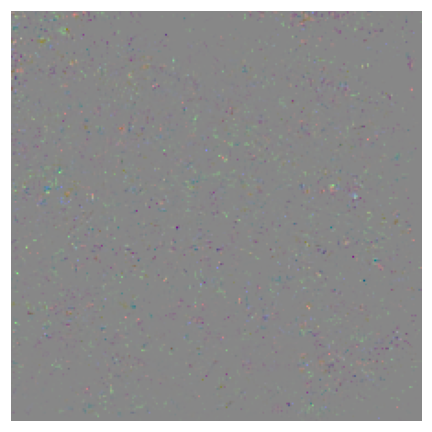}
&{ }
&\includegraphics[width=0.2\textwidth]{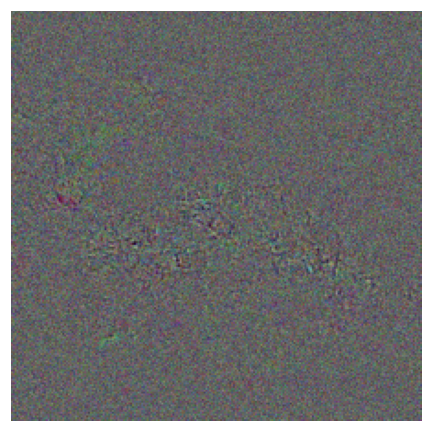}
&\includegraphics[width=0.2\textwidth]{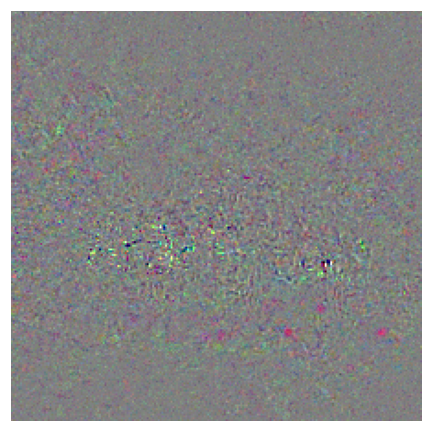}
\\
{ }
&{ }
&\includegraphics[width=0.08\textwidth]{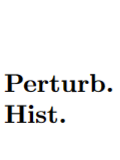}
&\includegraphics[width=0.2\textwidth]{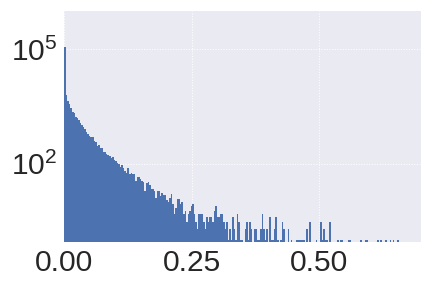}
&\includegraphics[width=0.2\textwidth]{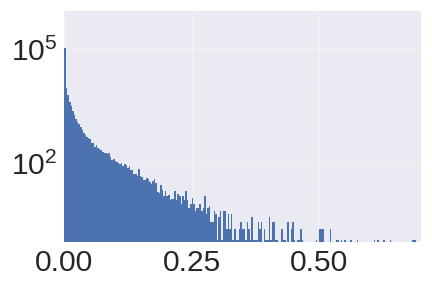}
&{ }
&\includegraphics[width=0.2\textwidth]{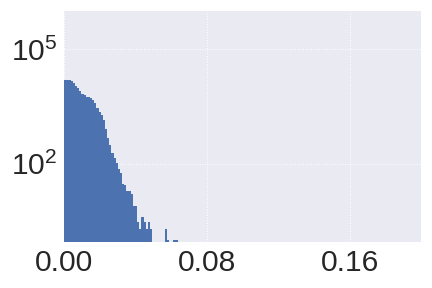}
&\includegraphics[width=0.2\textwidth]{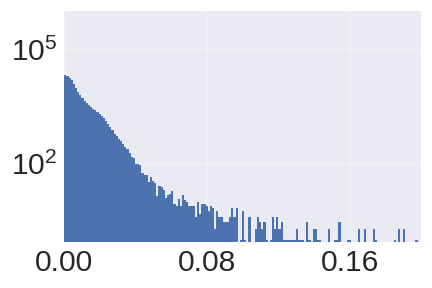}
\\
\cline{1-5}\cline{7-8}
\vspace{-1em}
\\
\textbf{$\ell_2$}
&{ }
&\includegraphics[width=0.08\textwidth]{Figures/markers/err_img.png}
&\includegraphics[width=0.2\textwidth]{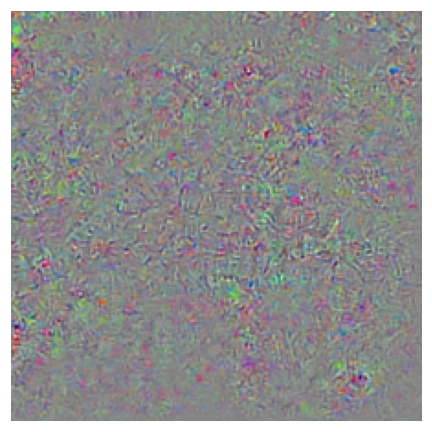}
&\includegraphics[width=0.2\textwidth]{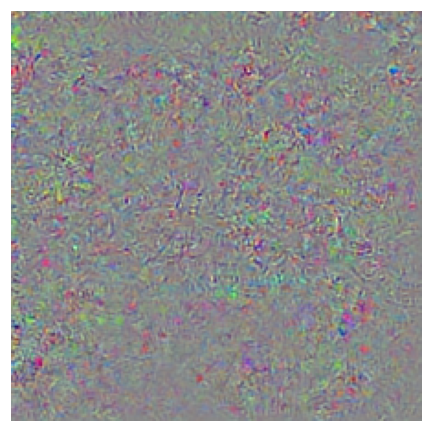}
&{ }
&\includegraphics[width=0.2\textwidth]{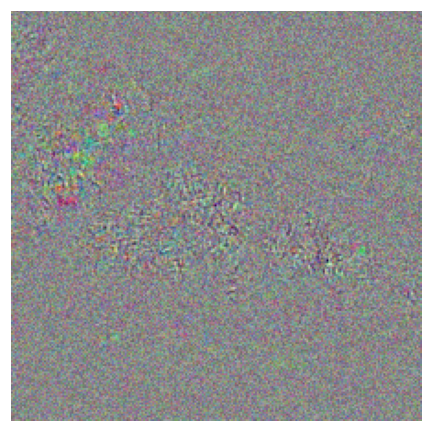}
&\includegraphics[width=0.2\textwidth]{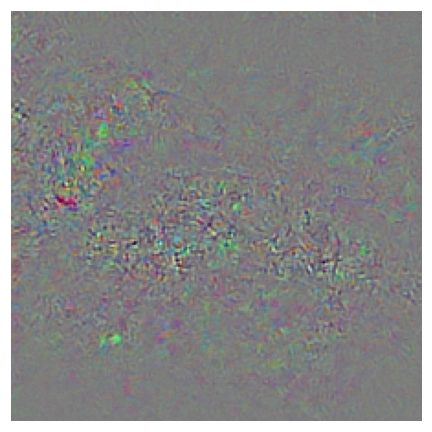}
\\
{ }
&{ }
&\includegraphics[width=0.08\textwidth]{Figures/markers/err_hist.png}
&\includegraphics[width=0.2\textwidth]{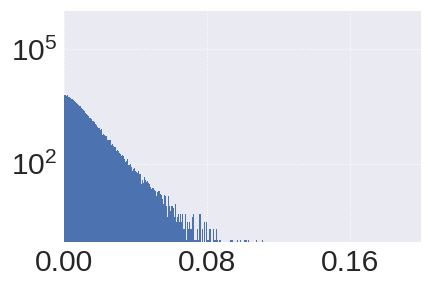}
&\includegraphics[width=0.2\textwidth]{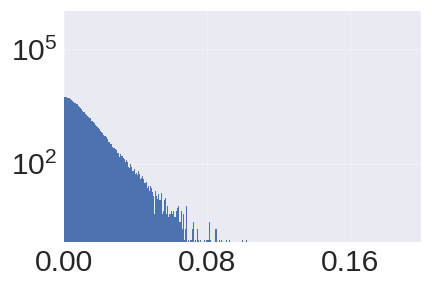}
&{ }
&\includegraphics[width=0.2\textwidth]{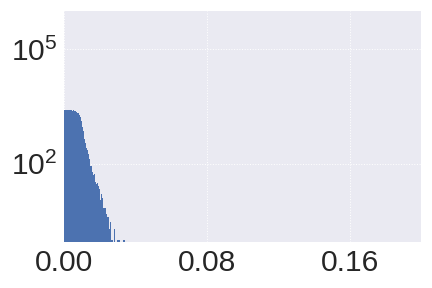}
&\includegraphics[width=0.2\textwidth]{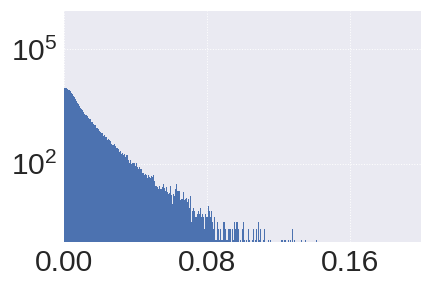}
\\
\cline{1-5}\cline{7-8}
\vspace{-1em}
\\
\textbf{$\ell_\infty$}
&{ }
&\includegraphics[width=0.08\textwidth]{Figures/markers/err_img.png}
&\includegraphics[width=0.2\textwidth]{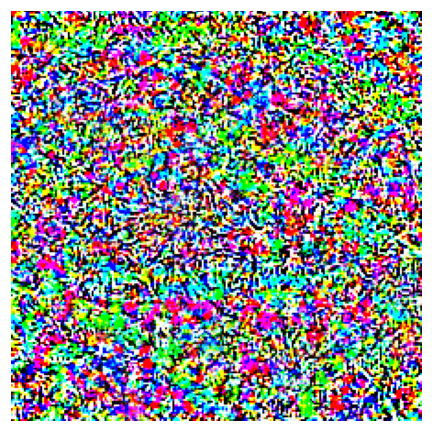}
&\includegraphics[width=0.2\textwidth]{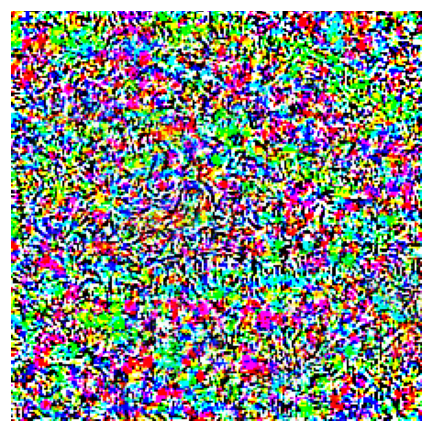}
&{ }
&\includegraphics[width=0.2\textwidth]{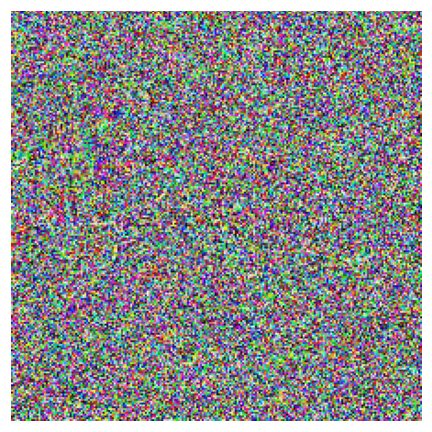}
&\includegraphics[width=0.2\textwidth]{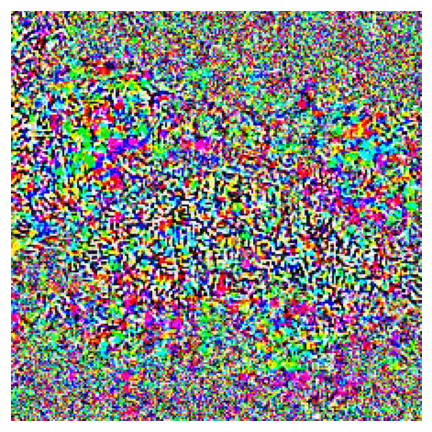}
\\
{ }
&{ }
&\includegraphics[width=0.08\textwidth]{Figures/markers/err_hist.png}
&\includegraphics[width=0.2\textwidth]{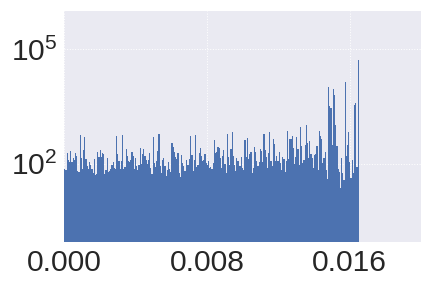}
&\includegraphics[width=0.2\textwidth]{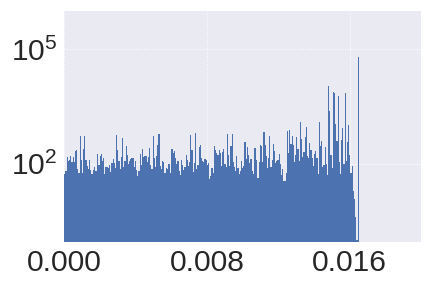}
&{ }
&\includegraphics[width=0.2\textwidth]{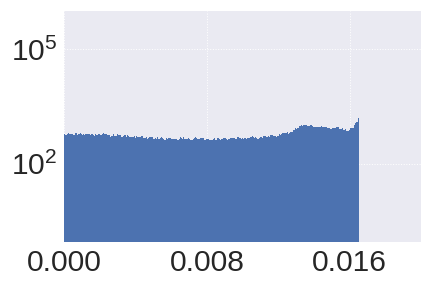}
&\includegraphics[width=0.2\textwidth]{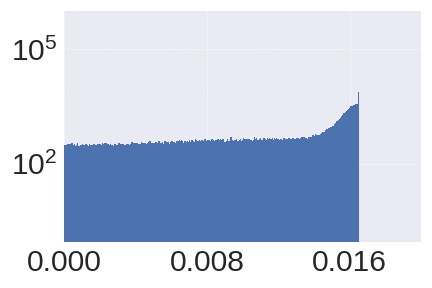}
\\
\cline{1-8}
\vspace{-1em}
\\

\end{tabular}
\endgroup 
\caption{Visualization of perturbation images found by solving max-loss form with different losses (cross-entropy and margin), different $d$'s ($\ell_1$, $\ell_2$ and $\ell_\infty$) and different solvers (APGD and PWCF). Within each group by $d$, the top rows are the perturbation images $\mb x' - \mb x$, which are normalized to the range $[0, 1]$ for better visualization; the bottom rows are the histograms of the element-wise perturbation magnitude $\abs{\mb x' - \mb x}$, where the x-axes are the absolute pixel values.}
\label{Fig:max pattern vis}
\end{figure*}
\begin{figure*}[!tb]
\centering
\begingroup 
\setlength{\tabcolsep}{1pt}
\renewcommand{\arraystretch}{0.8}
\begin{tabular}{c c c c c c c c}
\centering
{}
&{$\ell_1$}
&{$\ell_2$}
&{$\ell_\infty$}
&{ }
&{$\ell_1$}
&{$\ell_2$}
&{$\ell_\infty$}
\\
{}
&\includegraphics[width=0.16\textwidth]{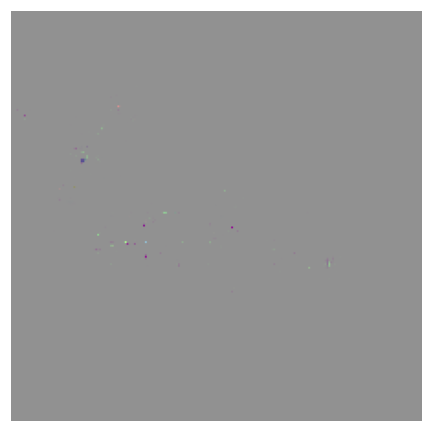}
&\includegraphics[width=0.16\textwidth]{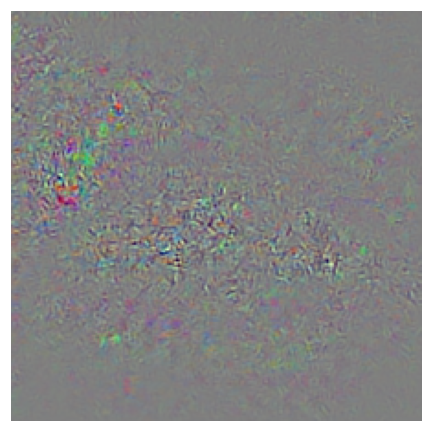}
&\includegraphics[width=0.16\textwidth]{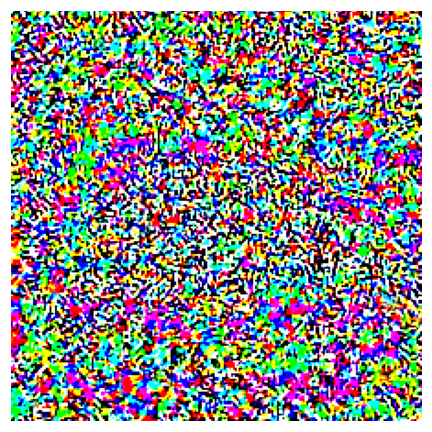}
&{ }
&\includegraphics[width=0.16\textwidth]{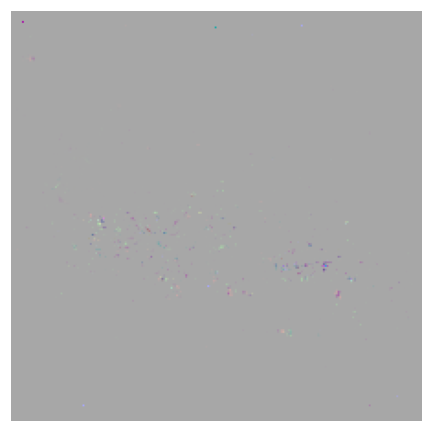}
&\includegraphics[width=0.16\textwidth]{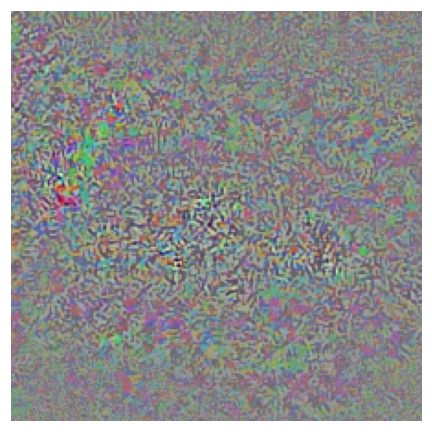}
&\includegraphics[width=0.16\textwidth]{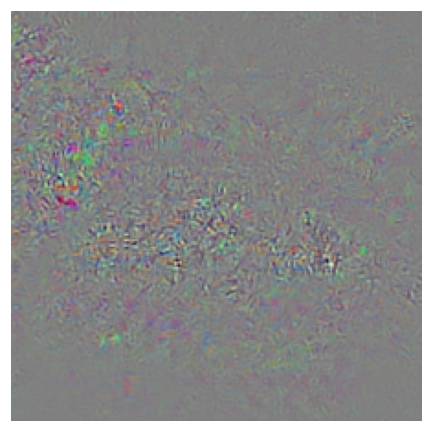}
\\
{}
&\includegraphics[width=0.16\textwidth]{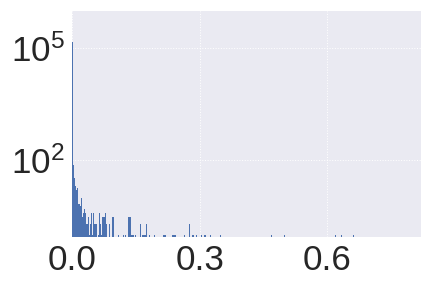}
&\includegraphics[width=0.16\textwidth]{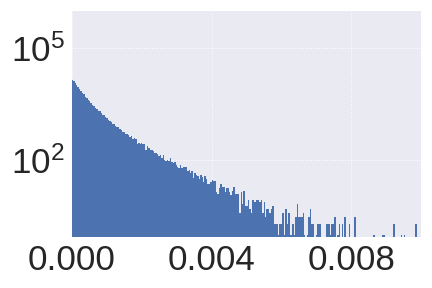}
&\includegraphics[width=0.16\textwidth]{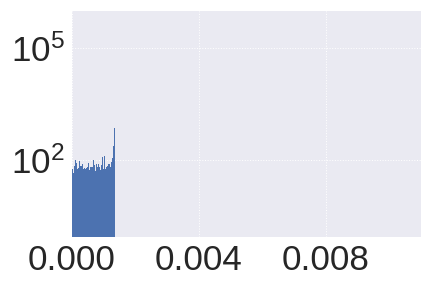}
&{ }
&\includegraphics[width=0.16\textwidth]{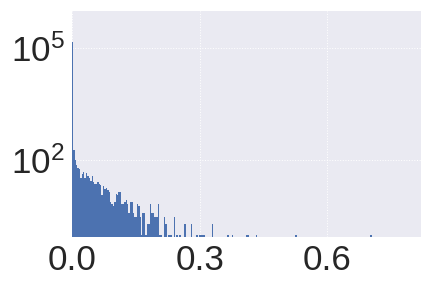}
&\includegraphics[width=0.16\textwidth]{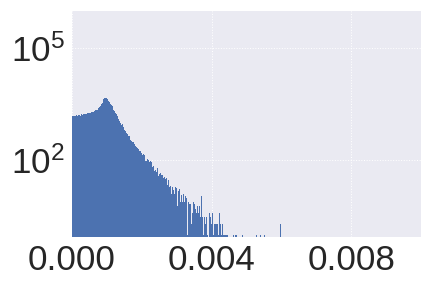}
&\includegraphics[width=0.16\textwidth]{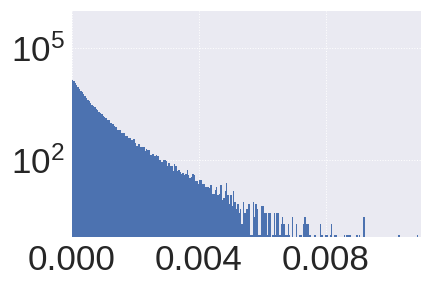}
\\
\cline{2-4}\cline{6-8}
\vspace{-1em}
\\
{}
&\multicolumn{3}{c}{\textbf{FAB}}
&{ }
&\multicolumn{3}{c}{\textbf{PWCF}}
\\
\end{tabular}
\endgroup 
\caption{Visualizations of perturbation images ($\mb x' - \mb x$, top row) and the histogram of element-wise perturbation magnitude ($\abs{\mb x' - \mb x}$, bottom row) by solving min-radius form. Note that the comparison between FAB and PWCF may not be as straightforward as \cref{Fig:max pattern vis} because the radii found by solving min-radius form are likely different in scale. However, the shape of the histograms can still reveal the pattern differences.}
\label{Fig:min pattern vis}
\end{figure*}
\begin{figure*}[!tb]
\centering
\begingroup 
\setlength{\tabcolsep}{1pt}
\renewcommand{\arraystretch}{0.8}
\begin{tabular}{c c c c c c}
\centering
{}
&{cross-entropy}
&{margin}
&{ }
&{cross-entropy}
&{margin}
\\
\includegraphics[width=0.09\textwidth]{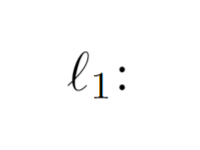}
&\includegraphics[width=0.22\textwidth]{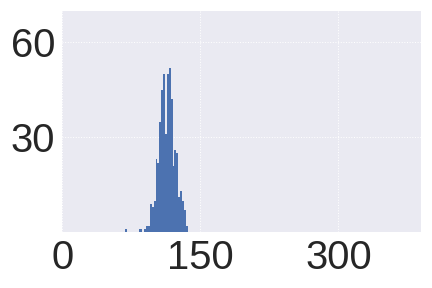}
&\includegraphics[width=0.22\textwidth]{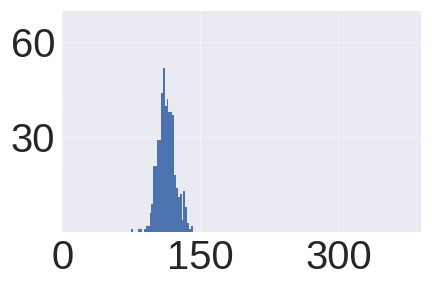}
&{ }
&\includegraphics[width=0.22\textwidth]{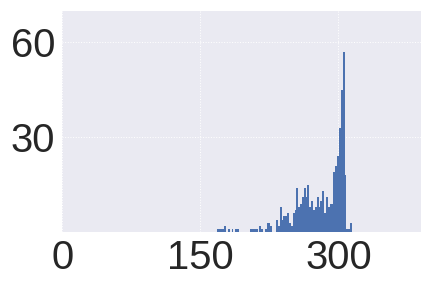}
&\includegraphics[width=0.22\textwidth]{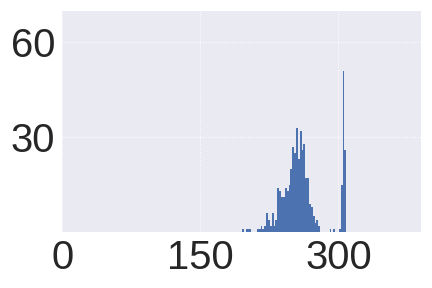}
\\
\vspace{-1em}
\\
\includegraphics[width=0.09\textwidth]{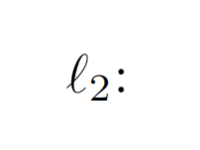}
&\includegraphics[width=0.22\textwidth]{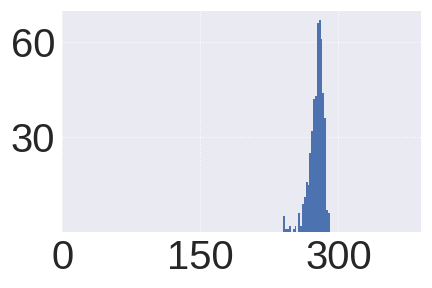}
&\includegraphics[width=0.22\textwidth]{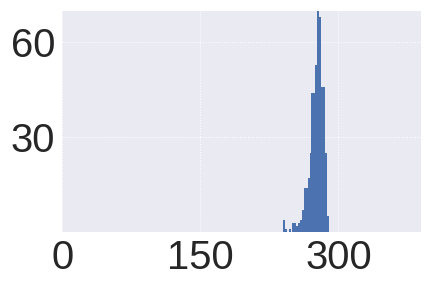}
&{ }
&\includegraphics[width=0.22\textwidth]{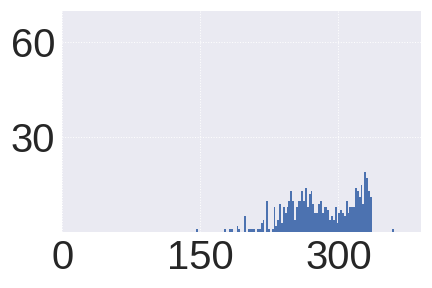}
&\includegraphics[width=0.22\textwidth]{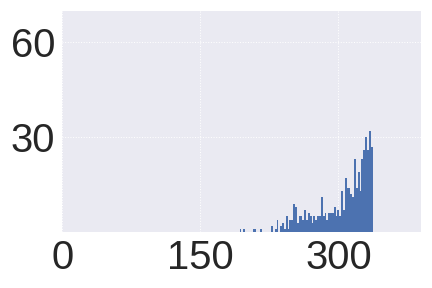}
\\
\vspace{-1em}
\\
\includegraphics[width=0.09\textwidth]{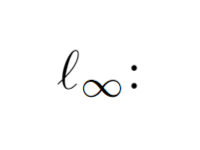}
&\includegraphics[width=0.22\textwidth]{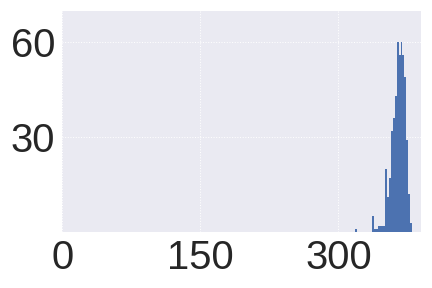}
&\includegraphics[width=0.22\textwidth]{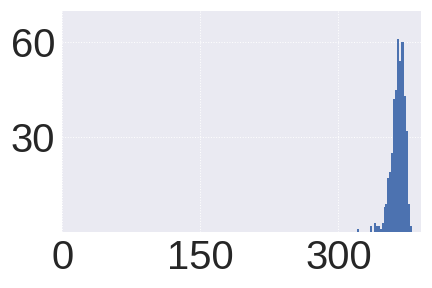}
&{ }
&\includegraphics[width=0.22\textwidth]{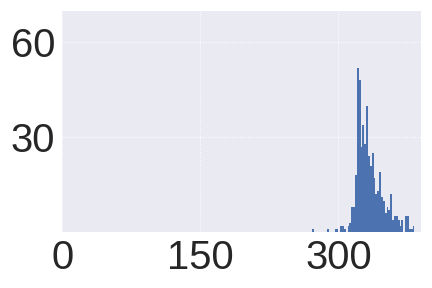}
&\includegraphics[width=0.22\textwidth]{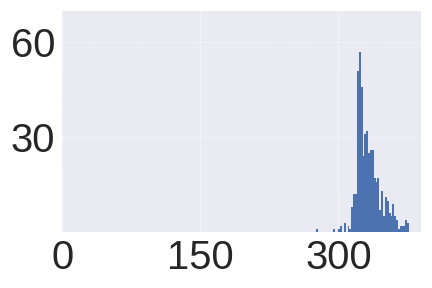}
\\
\cline{1-3}\cline{5-6}
\vspace{-1em}
\\
{}
&\multicolumn{2}{c}{\textbf{APGD}}
&{ }
&\multicolumn{2}{c}{\textbf{PWCF}}
\\

\end{tabular}
\endgroup 
\caption{Histograms of the sparsity measure by solving max-loss form with different $\ell$'s (cross-entropy and margin losses), different $d$'s ($\ell_1$, $\ell_2$ and $\ell_\infty$) and different solvers (APGD and PWCF). For fair comparisons, we use a model non-adversarially trained here so that each $x'$ is a successful adversarial example. With the same $d$ and $\ell$, the difference in the distributions of the sparsity measure between APGD and PWCF clearly exists; the sparsity variation is noticeably greater in PWCF than in APGD when the same $d$ and $\ell$ are used.}
\label{Fig:max pattern hist}
\end{figure*}
\begin{figure*}[!tb]
\centering
\begingroup 
\setlength{\tabcolsep}{1pt}
\renewcommand{\arraystretch}{0.8}
\begin{tabular}{c c c c c c c c}
\centering
{}
&{$\ell_1$}
&{$\ell_2$}
&{$\ell_\infty$}
&{ }
&{$\ell_1$}
&{$\ell_2$}
&{$\ell_\infty$}
\\
{}
&\includegraphics[width=0.16\textwidth]{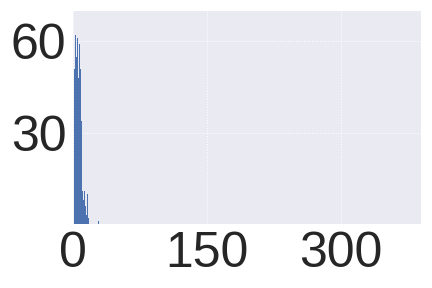}
&\includegraphics[width=0.16\textwidth]{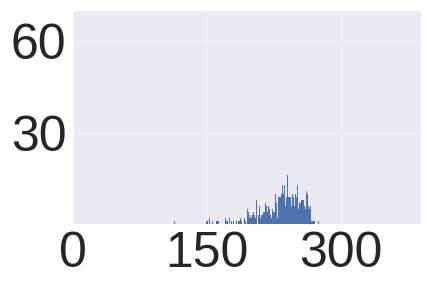}
&\includegraphics[width=0.16\textwidth]{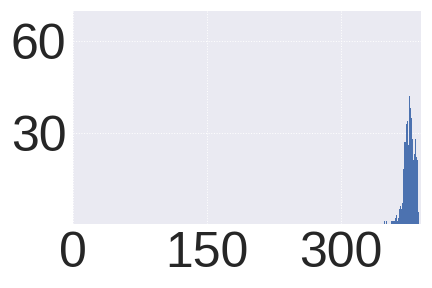}
&{ }
&\includegraphics[width=0.16\textwidth]{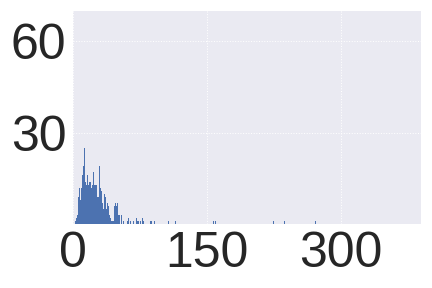}
&\includegraphics[width=0.16\textwidth]{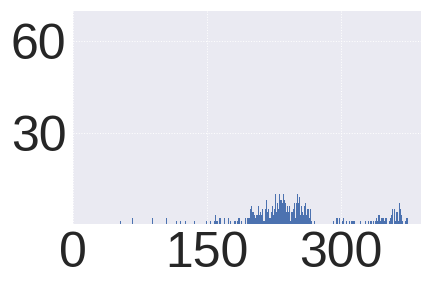}
&\includegraphics[width=0.16\textwidth]{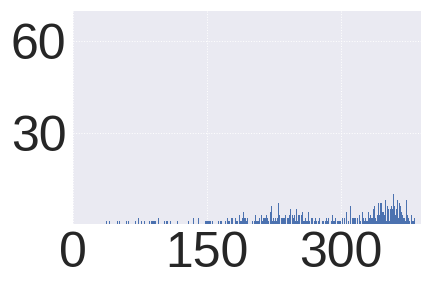}
\\
\cline{2-4}\cline{6-8}
\vspace{-1em}
\\
{}
&\multicolumn{3}{c}{\textbf{FAB}}
&{ }
&\multicolumn{3}{c}{\textbf{PWCF}}
\\
\end{tabular}
\endgroup 
\caption{Histograms of the sparsity measure by solving min-radius form with different $d$'s ($\ell_1$, $\ell_2$ and $\ell_\infty$). Under the same solver (FAB or PWCF), the shift of solution patterns from sparse to dense due to $d$ is obvious. Given the same $d$, the solutions of PWCF have more variety in sparsity than those of FAB, showing the influence of the solver on the solution patterns.}
\label{Fig:min pattern hist}
\end{figure*}

We now demonstrate that using different combinations of \textbf{1)} distance metrics $d$, \textbf{2)} solvers, and \textbf{3)} losses $\ell$ can lead to different sparsity patterns in the following two ways:
\begin{enumerate}[leftmargin=*]
    \item \textbf{Visualization of perturbation images:} we take a `fish' image (\cref{fig:dish image example}) from ImageNet-100 validation set, employ various combinations of losses $\ell$, distance metrices $d$ and solvers to the max-loss form and min-radius form. \cref{Fig:max pattern vis} and \cref{Fig:min pattern vis} visualize the perturbation image $\mb x' - \mb x$, and the histogram of the element-wise error magnitude $\abs{\mb x' - \mb x}$ to display the difference in pattern.
    \item \textbf{Statistics of sparsity levels:} we use the soft sparsity measure $\norm{\mb x' - \mb x}_1 / \norm{\mb x' - \mb x}_2$ to quantify the patterns---the higher the value, the denser the pattern. \cref{Fig:max pattern hist} and \cref{Fig:min pattern hist} display the histograms of the sparsity levels of the error images derived by solving max-loss form and min-radius form, respectively. Here, we used a fixed set of $500$ ImageNet-100 images from the validation set.
\end{enumerate} 

Contrary to the $\ell_1$ distance which induces sparsity, $\ell_\infty$ promotes dense perturbations with comparable entry-wise magnitudes~\cite{StuderEtAl2012Signal} and $\ell_2$ promotes dense perturbations whose entries follow power-law distributions. These varying sparsity patterns due to $d$'s are evident when we compare the solutions with the same solver and loss but with different distances, where \textbf{1)} the shapes of the histograms in \cref{Fig:min pattern vis} and the ranges of the values are very different; \textbf{2)} the sparsity measures show a shift from left to right along the horizontal axis in \cref{Fig:max pattern hist} and \cref{Fig:min pattern hist}. In addition to $d$'s, we also highlight other patterns induced by the loss $\ell$ and the solver:
\begin{itemize}[leftmargin=*]
    \item \textbf{Using margin and cross-entropy losses in solving max-loss form induce different sparsity patterns} \quad 
    Columns `cross-entropy' and `margin' of PWCF in \cref{Fig:max pattern vis} depict the pattern difference with clear divergences in the histograms of error magnitude; for example, the error values of PWCF-$\ell_2$-margin are more concentrated towards $0$ compared to PWCF-$\ell_2$-cross-entropy. The sparsity measures in \cref{Fig:max pattern hist} can further confirm the existence of the difference due to the loss used to solve max-loss form, more for PWCF than APGD.
    \item \textbf{PWCF's solutions have more variety in sparsity than APGD and FAB} \quad
    For the same $d$ and loss used to solve max-loss form, \cref{Fig:max pattern hist} shows that PWCF's solutions have a wider spread in the sparsity measure than APGD. The same observation can be found in \cref{Fig:min pattern hist} as well between PWCF and FAB in solving min-radius form.
\end{itemize} 

\begin{figure}[!tb]
  \centering
    \includegraphics[width=0.3\textwidth]{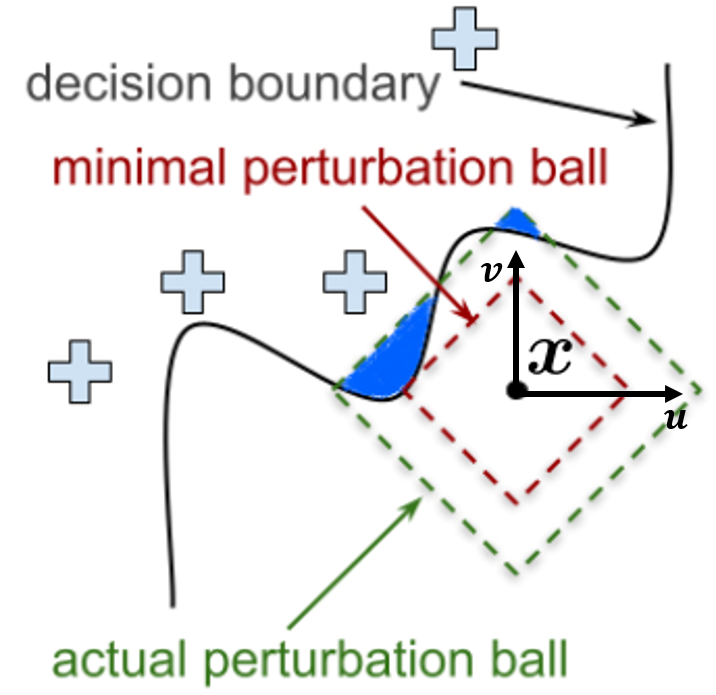}
  \caption{Geometry of max-loss form with multiple global maximizers. $\mb u$ and $\mb v$ are the basis vectors of the 2-dimensional coordinate. Here we consider the $\ell_1$-norm ball around $\mb x$, and ignore the box constraint $\mb x' \in [0, 1]^n$. Depending on the loss $\ell$ used, part or the whole of the blue regions becomes the set of global or near-global maximizers.} 
  \label{fig:multi_sol} 
\end{figure}
\begin{figure}
  \centering
  \includegraphics[width=0.4\textwidth]{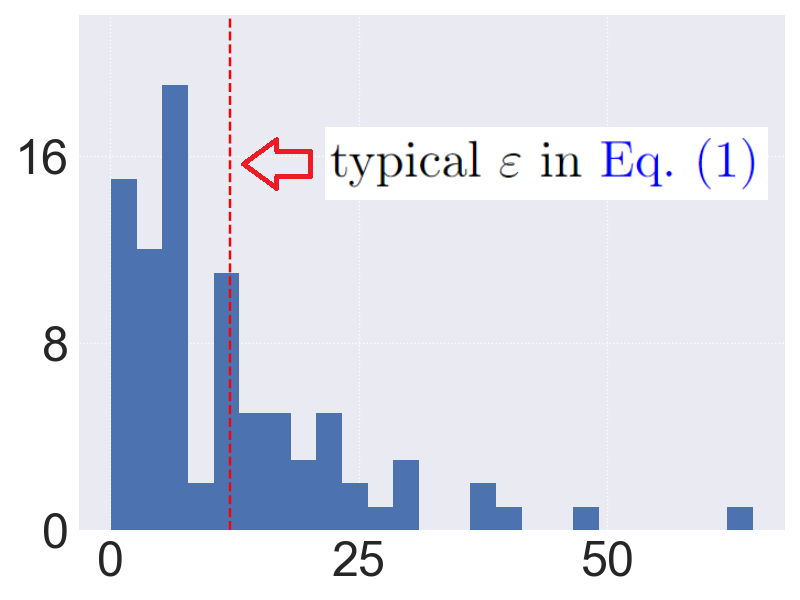}
  \caption{Histogram of the $\ell_1$ robustness radii estimated by solving min-radius form for $88$ CIFAR-10 images. $\eps=12$ (red dashed line) is the typical preset perturbation budget used in max-loss form (\formulation (\ref{eq:robust_loss})).} 
  \label{fig: eps selection} 
\end{figure}

Here, we provide a conceptual explanation of why different sparsity patterns can occur. We take the $\ell_1$ distance (i.e., $\norm{\mb x - \mb x'}_1 \le \eps$) and ignore the box constraint $\mb x' \in [0, 1]$ in max-loss form as an example. For simplicity, we take the loss $\ell$ as $0/1$ classification error $\ell (\mb y, f_{\mb \theta}(\mb x')) = \indicator{\max_{i} f^i_{\mb \theta}(\mb x') \ne y}$. Note that $\ell$ is maximized whenever $f^i_{\mb \theta}(\mb x') > f^y_{\mb \theta}(\mb x')$ for a certain $i \ne y$, so that $\mb x'$ crosses the local decision boundary between the $i$-th and $y$-th classes; see \cref{fig:multi_sol}. In practice, people set a substantially larger perturbation budget in max-loss form than the robustness radius of many samples, which can be estimated by solving min-radius form---see \cref{fig: eps selection}. Thus, there can be infinitely many global maximizers (the shaded blue regions in \cref{fig:multi_sol}). As for the patterns, the solutions in the shaded blue region on the left are denser in pattern than the solutions on the top. For other general losses, such as cross-entropy or margin loss, the set of global maximizers may change, but the patterns can possibly be more complicated due to the typically complicated nonlinear decision boundaries associated with DNNs. As for min-radius form, multiple global optimizers and pattern differences can exist as well, but the optimizers share the same radius.

\section{Implications from the variety patterns}
\label{sec: implication} 
Now that we have demonstrated the complex interplay of loss $\ell$, distance metric $d$, and the numerical solver for the final solution patterns in \cref{sec:pattern_theory}, we will discuss what this can imply for the reseach of adversarial robustness.

\subsection{Current empirical RE may not be sufficient}
\label{subsec: limiation of current RE}
As introduced in \cref{Sec:introduction}, the most popular empirical RE practice currently is solving max-loss form with a preset level of $\eps$, using a fixed set of algorithms. Then robust accuracy is reported using the perturbed samples found~\cite{croce2021robustbench, papernot2016technical, rauber2017foolbox}. Here, we challenge its validity for measuring robustness.

\subsubsection{Diversity matters for robust accuracy to be trustworthy}
\label{subsec: diversity matters for robust accuracy}
As shown in \cref{sec:pattern_theory}, the perturbations found by different numerical methods can have different sparsity patterns; \cref{{tab: granso_l1_acc}} also shows that combining multiple methods can lead to a lower robust accuracy than any single method. This implies that for robust accuracy to be numerically reliable, including as many solvers to cover as many patterns as possible is necessary. Although works as~\cite{CarliniEtAl2019Evaluating, croce2020reliable, gilmer2018motivating} have mentioned the necessity of diversity in solvers, our paper is the first to quantify such diversity in terms of sparsity patterns from their solutions. However, the existence of infinitely many patterns may be possible, and it is thus possible that faithful robust accuracy may not be able to achieve in practice.

\subsubsection{Robust accuracy is not a good robustness metric}
\label{subsec: robust accuracy is bad metric}
The motivation of using max-loss form for RE is usually associated with the attacker-defender setting, where solutions ($\mb x'$) are viewed as a test bench for all possible future attacks. Ideally, the DNNs must be robust against all of the adversarial samples found. However, it is questionable whether robust accuracy faithfully reflects this notion of robustness: 
\textbf{1)} why the commonly used budget $\eps$ in max-loss form is a reasonable choice needs to be justified. For example, $\eps=0.03$ is commonly used for the $\ell_\infty$ distance, e.g., in~\cite{croce2021robustbench}. We could not find rigorous answers in the previous literature and suspect that the choices are purely empirical. For example, \cite{croce2021robustbench} states the motivation as
\say{...the true label should stay the same for each in-distribution input within the perturbation set...}
but this claim can also support using other values; \textbf{2)} more importantly, Fig. 1. in~\cite{sridhar2022towards} shows that a model having a higher robust accuracy than other models at one $\eps$ level does not imply that such model is also more robust at other levels. The clean-robust accuracy trade-off~\cite{raghunathan2019adversarial, yang2020closer} may also be interpreted similarly\footnote{The `clean-robust accuracy trade-off' refers to the phenomenon where a model that is non-adversarially trained has the best clean accuracy (at level $\eps=0$) and the worst robust accuracy (at the commonly used $\eps$), and vise versa for the models that are adversarially trained.}---they are just most robust to different $\eps$ levels. Thus, robust accuracy is not a complete and trustworthy measure, and conclusions about robustness drawn from robust accuracy based on a single $\eps$ level are misleading.

\subsubsection{Robustness radius is a better robustness measures}
\label{subsec: min radius is better RE metric}
If our goal is indeed to understand the robustness limit of a given DNN model, solving min-loss-form seems more advantageous, especially that:
\begin{enumerate}[leftmargin=*]
    \item \textbf{Robustness radius is more reliable:} unlike that the pattern differences in solving max-loss form can lead to unreliable robust accuracy due to the possibility of multiple solutions, the robustness radius found by solving min-radius form is not sensitive to the existence of multiple solutions.
    \item \textbf{Robustness radius is sample adaptive:} in contrast to the rigid perturbation budget $\eps$ used in max-loss form, the robustness radius is the (sample-wise) distance to the closest decision boundaries.
\end{enumerate}
A clear application of the robustness radius is that we can identify hard (less robust) samples for a given model if the corresponding robustness radii are small.

\begin{figure*}[!tb]
\centering
\begingroup 
\setlength{\tabcolsep}{1pt}
\renewcommand{\arraystretch}{0.8}
\begin{tabular}{c c c c c c c c}
\centering
{}
&\multicolumn{3}{c}{\textbf{APGD cross-entropy}}
&{ }
&{\textbf{LPA}}
&{\textbf{PWCF}}
&{\textbf{PWCF}}
\\
{}
&\includegraphics[width=0.16\textwidth]{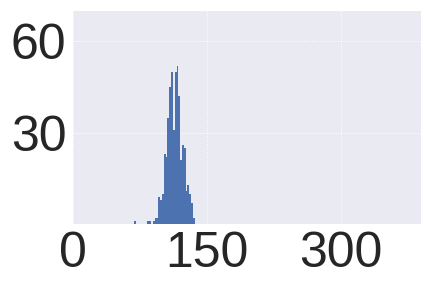}
&\includegraphics[width=0.16\textwidth]{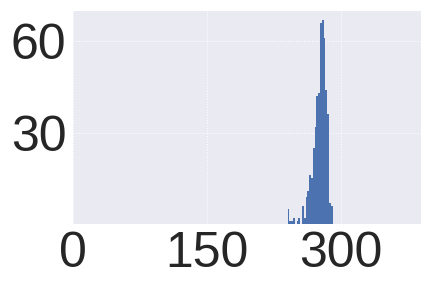}
&\includegraphics[width=0.16\textwidth]{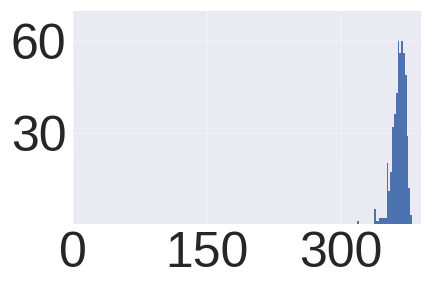}
&{ }
&\includegraphics[width=0.16\textwidth]{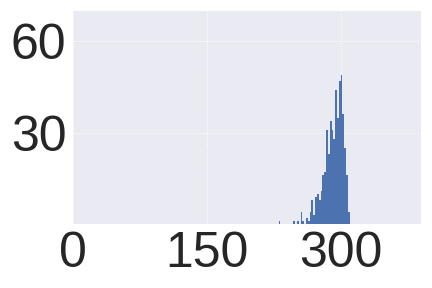}
&\includegraphics[width=0.16\textwidth]{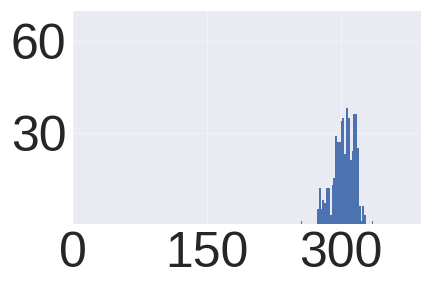}
&\includegraphics[width=0.16\textwidth]{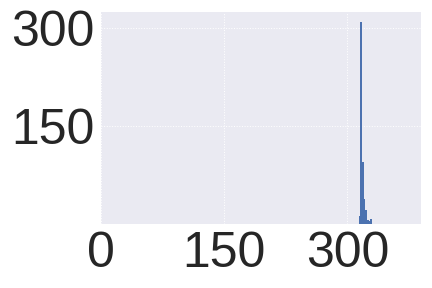}
\\
{}
&{\textbf{(a)} $\ell_1$}
&{\textbf{(b)} $\ell_2$}
&{\textbf{(c)} $\ell_\infty$}
&{ }
&{\textbf{(d)} PD-$\ell_2$}
&{\textbf{(e)} PD-$\ell_2$}
&{\textbf{(f)} PD-$\ell_1$}
\\
\end{tabular}
\endgroup 
\caption{Histograms of the sparsity measure on 500 ImageNet-100 images by solving max-loss form with different $d$'s ($\ell_p$ and PD) and different solvers (APGD with cross-entropy loss, LPA and PWCF). LPA-PD-$\ell_2$ is the perceptual attack used in~\cite{laidlaw2021perceptual}. 
Using PD in max-loss form also results in different sparsity patterns due to the solver and the inner $\ell_p$ distance used (see PD-$\ell_2$, PD-$\ell_1$ related figures above).}
\label{Fig:pat l2 sparsity}
\end{figure*}

\subsection{Adversarial training may not help to achieve generalizable robustness}
\label{subsec: difficult in achieving AR}

\subsubsection{Solution patterns can explain why $\ell_p$ robustness does not generalize}
\label{subsec: ungeneralizable lp}
Despite the effort of finding ways to achieve generalizable AR, it is widely observed that AR achieved by AT does not generalize across simple $\ell_p$ distances~\cite{maini2020adversarial,CroceHein2019Provable}. For example, models adversarially trained by $\ell_\infty$-attacks do not achieve good robust accuracy with $\ell_2$-attacks; $\ell_1$ seems to be a strong attack for all other $\ell_p$ distances, and even on itself. 
Note that \cite{madry2017towards} has observed that the (approximate) global maximizers are distinct and spatially scattered; the patterns we discussed in \cref{sec:pattern_theory} provide a plausible explanation of why AR achieved by AT is expected not to be generalizable---the model just cannot perform well on an unseen distribution (patterns) from what it has seen during training.

\subsubsection{Adversarial training with perceptual distances does not solve the generalization issue}
\label{subsec: ungeneralizable perceptual metric}
\cite{laidlaw2021perceptual} claims that using PD (\cref{Eq. LPIPS Constraint}) in max-loss form can approximate the universal set of adversarial attacks, and models adversarially trained with PAT can generalize to other unseen attacks. However, we challenge the above conclusion: `unseen attacks' does not necessarily translate to `novel perturbations', especially if we investigate the patterns:
\begin{enumerate}[leftmargin=*]
    \item If we test the models pretrained by $\ell_2$-attack and PAT\footnote{Correspond to $\ell_2$ and PAT-AlexNet in Table 3 of \cite{laidlaw2021perceptual}} in \cite{laidlaw2021perceptual} by APGD-CE-$\ell_{1}~(\eps=1200)$ attack (on ImageNet-100 images), both will achieve $0 \%$ robust accuracy---models pretraiend with PAT do not generalize better to $\ell_1$ attacks compared with others.
    \item By investigating the sparsity patterns similar to \cref{sec:pattern_theory}, the adversarial perturbations generated by solving max-loss form with PD are shown to be similar to the APGD-CE-$\ell_2$ generated ones, see (a)-(d) in~\cref{Fig:pat l2 sparsity}. This may explain why the $\ell_2$ and PAT  pre-trained models in \cite{laidlaw2021perceptual} have comparable robust accuracy against multiple tested attacks.
    \item Substituting the $\ell_2$ distance by $\ell_1$ in \cref{Eq. LPIPS Constraint} as the new PD:
    \begin{align}
        d(\mb x, \mb x') \doteq \norm{\phi(\mb x) - \phi(\mb x')}_{1} ~ \text{,}
    \end{align}
    the solution patterns will change; see (e)-(f) in~\cref{Fig:pat l2 sparsity}. Furthermore, (d)-(e) in \cref{Fig:pat l2 sparsity} also shows that different solvers (LPA and PWCF) will also result in different patterns even for PD---PAT will likely suffer from the pattern differences the same way as popular $\ell_p$-attacks, thus not being `universal'.
\end{enumerate} 
To conclude, we think that it is so far unclear whether using the perceptual distances in the AT pipeline can be beneficial in addressing the generalization issue in robustness.

\section{Discussion}
\label{sec: summary} 
In this paper, we introduce a new algorithmic framework, \pygranso~\textbf{w}ith \textbf{C}onstraint-\textbf{F}olding (PWCF), to solve two constrained optimization formulations of the robustness evaluation (RE) problems: max-loss form and min-radius form. PWCF can handle general distance metrics (almost everywhere differentiable) and achieve reliable solutions for these two formulations, which are beyond the reach of existing numerical methods. We remark that PWCF is not intended to beat the performance of the existing SOTA algorithms in these two formulations with the limited $\ell_1$, $\ell_2$ and $\ell_\infty$ distances, nor to improve the adversarial training pipeline. We view PWCF as a reliable and general numerical framework for the current RE packages, and possibly for other future emerging problems in term of constraint optimization with deep neural networks (DNNs).

In addition, we show that using different combinations of losses $\ell$, distance metrics $d$ and solvers to solve max-loss form and min-radius form can lead to different sparsity patterns in the solutions found. Having provided our explanations on why the pattern differences can happen, we also discuss its implications for the research on adversarial robustness: 
\begin{enumerate}
    \item The current practice of RE based on solving max-loss form is insufficient and misleading.
    \item Finding the sample-wise robustness radius by solving min-radius form can be a better robustness metric.
    \item Achieving generalizable robustness by adversarial training (AT) may be intrinsically difficult.
\end{enumerate}

\backmatter





\section*{Acknowledgments}
We thank Hugo Latapie of Cisco Research for his insightful comments on an early draft of this paper. Hengyue Liang, Le Peng, and Ju Sun are partially supported by NSF CMMI 2038403 and Cisco Research under the awards SOW 1043496 and 1085646 PO USA000EP390223. Ying Cui is partially supported by NSF CCF 2153352. The authors acknowledge the Minnesota Supercomputing Institute (MSI) at the University of Minnesota for providing resources
that contributed to the research results reported within this paper.












\appendix 

\section{Appendix} 

\subsection{Projection onto the intersection of a norm ball and box constraints}
\label{Sec:app_projection}
APGD for solving max-loss form (\ref{eq:robust_loss}) with $\ell_p$ distances entails solving Euclidean projection subproblems of the form: 
\begin{align}
\begin{split}
    & \min_{\mb x' \in \RJU^n} \;  \norm{\mb z - \mb x'}_2^2 \\
    \st\; & \norm{\mb x - \mb x'}_p \le \eps, \quad \mb x' \in [0, 1]^n,
\end{split}
\end{align}
where $\mb z = \mb x + \mb w$ is a one-step update of $\mb x$ toward direction $\mb w$. After a simple reparametrization, we have 
\begin{align}  
\label{eq:app_proj_infty}
\begin{split}
    & \min_{\mb \delta \in \RJU^n} \; \norm{\mb w - \mb \delta}_2^2 \\
    \st \; & \norm{\mb \delta}_p \le \eps, \quad \mb x + \mb \delta \in [0, 1]^n.
\end{split}
\end{align}

We focus on $p = 1, 2, \infty$ which are popular in the AR literature. In early works, a ``lazy'' projection scheme---sequentially projecting onto the $\ell_p$ ball and then to the $[0, 1]^n$ box, is used. \cite{croce2021mind} has recently identified the detrimental effect of lazy projection on the performance for $p=1$, and has derived a closed form solution. Here, we prove the correctness of the sequential projection for $p = \infty$ (\cref{thm:app_proj_inf_lemma}), and discuss problems regarding $p = 2$ (\cref{thm:app_proj_l2_lemma}). 

For $p = \infty$, obviously we only need to consider the individual coordinates. 
\begin{lemma} \label{thm:app_proj_inf_lemma}
Assume $x \in [0, 1]$. The unique solution for the strongly convex problem
\begin{align}
\begin{split}
    & \min_{\delta \in \RJU} \; \paren{w - \delta}^2\\
    \st\; & \abs{\delta} \le \eps, \quad x + \delta \in [0, 1]
\end{split}
\end{align}
is given by
\begin{multline}  \label{eq:app_proj_inf_formula}
    \mc P_{\infty, \mathrm{box}} = \\
        \begin{cases} 
            w,\quad w \in [\max(-x, -\eps), \min (1-x, \eps)]\\
        \max(-x, -\eps), \quad w \le \max(-x, -\eps) \\
        \min (1-x, \eps), \quad w \ge \min (1-x, \eps) 
    \end{cases}, 
\end{multline} 
which agrees with the sequential projectors $\mc P_{\infty} \mc P_{\mathrm{box}}$ and $\mc P_{\mathrm{box}} \mc P_{\infty}$.  
\end{lemma}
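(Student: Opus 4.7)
The plan is to reduce the problem to a one-dimensional projection onto a closed interval, then match the resulting formula to both \eqref{eq:app_proj_inf_formula} and the two sequential projectors. First, I would rewrite the two constraints as $\delta \in [-\eps, \eps]$ and $\delta \in [-x, 1-x]$, respectively, so the feasible set is the intersection $[a,b]$ with $a = \max(-x, -\eps)$ and $b = \min(1-x, \eps)$. Since $x \in [0,1]$ and $\eps \ge 0$, both $-x$ and $-\eps$ are nonpositive while $1-x$ and $\eps$ are nonnegative; hence $a \le 0 \le b$, so the feasible interval is nonempty (and in particular contains $\delta = 0$).

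Second, because $(w-\delta)^2$ is strictly convex and the feasible set is a nonempty closed interval, the problem admits a unique minimizer equal to the Euclidean projection of $w$ onto $[a,b]$. Projection onto a closed interval is the standard clip map $\mathrm{clip}_{[a,b]}(w) = w$ if $a \le w \le b$, $a$ if $w < a$, and $b$ if $w > b$; plugging in the values of $a$ and $b$ reproduces \eqref{eq:app_proj_inf_formula} verbatim.

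Third, to verify agreement with $\mathcal{P}_\infty \mathcal{P}_{\mathrm{box}}$ and $\mathcal{P}_{\mathrm{box}} \mathcal{P}_\infty$, I would invoke the following one-dimensional fact: for any two closed intervals $I_1 = [a_1, b_1]$ and $I_2 = [a_2, b_2]$ with $I_1 \cap I_2 \ne \emptyset$, the compositions $\mathcal{P}_{I_1}\circ \mathcal{P}_{I_2}$ and $\mathcal{P}_{I_2}\circ \mathcal{P}_{I_1}$ both equal $\mathcal{P}_{I_1 \cap I_2}$. This is a short case analysis: depending on whether the input lies below $\min(a_1, a_2)$, between the two lower endpoints, inside both intervals, between the two upper endpoints, or above $\max(b_1, b_2)$, one checks that clipping sequentially lands at exactly $\max(a_1, a_2)$, that lower endpoint, the input itself, the upper endpoint $\min(b_1, b_2)$, or $\min(b_1, b_2)$, respectively---the same as clipping once to $[\max(a_1,a_2), \min(b_1,b_2)]$. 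Specializing to $I_1 = [-\eps, \eps]$ and $I_2 = [-x, 1-x]$ gives the two claimed identities.

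There is no real obstacle here; the lemma is a one-dimensional exercise. The only point that needs a hypothesis check is the nonemptiness of $I_1 \cap I_2$, which is where $x \in [0,1]$ enters; without it, the sequential-projection identity could fail (because, e.g., clipping to $[-\eps,\eps]$ first may land outside $[-x,1-x]$ and then get pushed to a different endpoint than $\mathcal{P}_{I_1 \cap I_2}$ would produce).
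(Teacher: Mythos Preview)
Your proof is correct. The first two steps---combining the two box constraints into the single interval $[\max(-x,-\eps),\min(1-x,\eps)]$ and reading off the clip formula---are exactly what the paper does. For the equivalence with the sequential projectors, however, the paper takes a different route: rather than your direct case analysis on two overlapping intervals, it invokes a general result of Yu (2013) stating that whenever $f=\imath_C$ for a closed convex set $C\subset\RJU$, one has $\mathrm{Prox}_f\circ\mathrm{Prox}_g=\mathrm{Prox}_{f+g}$ for all closed proper convex $g$; specializing to $f=\imath_{[-\eps,\eps]}$, $g=\imath_{[-x,1-x]}$ (and vice versa) gives both identities at once. Your argument is more elementary and fully self-contained, and it makes explicit where the hypothesis $x\in[0,1]$ is used (nonempty intersection), which the paper leaves implicit. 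The paper's approach is slicker in that it avoids any case-checking, but at the cost of importing a theorem that is considerably stronger than what is needed in one dimension.
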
 
One can easily derive the one-step projection formula \cref{eq:app_proj_inf_formula} once the two box constraints can be combined into one: 
\begin{align} \label{eq:app_proj_inf_equiv_constr}
    \max(-\eps, -x) \le \delta \le \min(\eps, 1-x). 
\end{align} 
To show the equivalence to $\mc P_{\infty} \mc P_{\mathrm{box}}$ and $\mc P_{\mathrm{box}} \mc P_{\infty}$, we could write down all projectors analytically and directly verify the claimed equivalence. But that tends to be cumbersome. Here, we invoke an elegant result due to \cite{yu2013decomposing}. For this, we need to quickly set up the notation. For any function $f: \RJU^n \to \RJU \cup \setJu{+ \infty}$, its proximal mapping $\mathrm{Prox}_f$ is defined as 
\begin{align} 
    \mathrm{Prox}_f(\mb y) = \argmin_{\mb z \in \RJU^n} \frac{1}{2} \norm{\mb y - \mb z}_2^2 + f(\mb z).
\end{align} 
When $f$ is the indicator function $\imath_C$ for a set $C$ defined as 
\begin{align}
    \imath_C(\mb z) = 
    \begin{cases} 
        0   &   \mb z \in C \\
            \infty &   \text{otherwise}   
         \end{cases},
\end{align}
then $\mathrm{Prox}_f(\mb y)$ is the Euclidean projector $\mc P_C(\mb y)$. For two closed proper convex functions $f$ and $g$, the paper \cite{yu2013decomposing} has studied conditions for $\mathrm{Prox}_{f+g} = \mathrm{Prox}_{f} \circ \mathrm{Prox}_{g}$. If $f$ and $g$ are two set indicator functions, this exactly asks when the sequential projector is equivalent to the true projector. 
\begin{theorem}[adapted from Theorem 2 of \cite{yu2013decomposing}]
  If $f = \imath_C$ for a closed convex set $C \subset \RJU$, $\mathrm{Prox}_f \circ \mathrm{Prox}_g = \mathrm{Prox}_{f+g}$ for all closed proper convex functions $g: \RJU \to \RJU \cup \setJu{\pm \infty}$. 
\end{theorem}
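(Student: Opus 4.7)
The plan is to verify, directly from the definition of the proximal mapping, that $z^\star := \mathrm{Prox}_f(\mathrm{Prox}_g(y))$ is the unique minimizer of $\frac{1}{2}(y-z)^2 + f(z) + g(z)$. Because $f = \imath_C$ with $C \subset \RJU$ a closed convex set, $C$ is an interval $[a,b]$ with $a \in \{-\infty\} \cup \RJU$ and $b \in \RJU \cup \{+\infty\}$, and $\mathrm{Prox}_f = \Pi_C$ acts by clipping to $[a,b]$. So the goal reduces to showing
\[
  \arg\min_{z \in C}\; \phi(z) \;=\; \Pi_C(w),
  \qquad \text{where } \phi(z) := \tfrac{1}{2}(y-z)^2 + g(z),\ w := \mathrm{Prox}_g(y).
\]

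First I would observe that $\phi$ is proper, closed, strongly convex (from the quadratic term) and coercive, so it has a unique minimizer, which is exactly $w$ by the definition of $\mathrm{Prox}_g$. Then I would invoke the key one-dimensional fact: a convex function on $\RJU$ with a unique minimizer $w$ is nonincreasing on $(-\infty, w]$ and nondecreasing on $[w, +\infty)$, with strict monotonicity under strict convexity. This is the single place where the hypothesis $C \subset \RJU$ (rather than $\RJU^n$) is used, and it is the whole reason the decomposition holds for arbitrary $g$ in this setting.

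With that monotonicity in hand, the result follows by a short case split on the position of $w$ relative to $[a,b]$. If $w \in C$, then $w$ is already the constrained minimum, and $\Pi_C(w) = w$. If $w < a$ (only possible when $a$ is finite), then $\phi$ is strictly increasing on $[w, +\infty) \supset C$, so the minimum over $C$ is attained at $a = \Pi_C(w)$. Symmetrically, if $w > b$ (only possible when $b$ is finite), the minimum is at $b = \Pi_C(w)$. In every case $\arg\min_{z \in C} \phi(z) = \Pi_C(w) = \mathrm{Prox}_f(\mathrm{Prox}_g(y))$, which equals $\mathrm{Prox}_{f+g}(y)$ by definition.

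There is no serious obstacle; the only real care is in the edge cases. One must assume (or note as vacuous) that $C \cap \mathrm{dom}(g) \neq \emptyset$, else $f+g \equiv +\infty$ and $\mathrm{Prox}_{f+g}$ is undefined. One should also note that $w = \mathrm{Prox}_g(y)$ automatically lies in $\mathrm{dom}(g)$ (otherwise $\phi(w) = +\infty$, contradicting minimality), and that when an endpoint of $C$ is infinite the corresponding case in the trichotomy cannot occur, so no separate treatment is needed. The proof does not require any subdifferential calculus or constraint qualification, which is what makes the one-dimensional version of \cite{yu2013decomposing}'s result so clean.
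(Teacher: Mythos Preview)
Your proof is correct. The monotonicity argument is sound: since $\phi(z)=\tfrac{1}{2}(y-z)^2+g(z)$ is strongly convex with unique minimizer $w=\mathrm{Prox}_g(y)$, it is nonincreasing on $(-\infty,w]\cap\mathrm{dom}\,\phi$ and nondecreasing on $[w,\infty)\cap\mathrm{dom}\,\phi$, and convexity of $\mathrm{dom}\,g$ handles the extended-real values cleanly. The trichotomy on the position of $w$ relative to the interval $C=[a,b]$ then gives $\arg\min_{z\in C}\phi(z)=\Pi_C(w)$ directly. Your remarks on the edge cases ($C\cap\mathrm{dom}\,g\neq\emptyset$, infinite endpoints) are the right ones to flag.

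As for comparison: the paper does not actually prove this theorem. It is stated as a quotation (adapted from Theorem~2 of \cite{yu2013decomposing}) and then immediately applied to deduce the projector equivalence in Lemma~\ref{thm:app_proj_inf_lemma}. So there is no in-paper argument to compare your approach against; you have supplied a self-contained elementary proof where the paper chose to cite. The original result in \cite{yu2013decomposing} is formulated more generally (characterizing when $\mathrm{Prox}_f\circ\mathrm{Prox}_g=\mathrm{Prox}_{f+g}$ beyond the scalar-indicator case), and its proof machinery is correspondingly heavier; your argument exploits exactly the one-dimensional structure that makes the present special case nearly trivial, which is appropriate here.
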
 
The equivalence of projectors we claim in \cref{thm:app_proj_inf_lemma} follows by setting $f = \imath_{\infty}$ and $g = \imath_{\mathrm{box}}$, and vise versa. 

For $p = 2$, sequential projectors are not equivalent to the true projector in general, although empirically we observe that $\mc P_{2} \mc P_{\mathrm{box}}$ is a much better approximation than $\mc P_{\mathrm{box}} \mc P_{2}$. The former is used in the current APGD algorithm of \texttt{AutoAttack}. 
\begin{lemma} \label{thm:app_proj_l2_lemma}
    Assume $\mb x \in [0, 1]^n$. When $p = 2$, the projector for (\ref{eq:app_proj_infty}) $\mc P_{2, \mathrm{box}}$ does not agree with the sequential projectors $\mc P_{2} \mc P_{\mathrm{box}}$ and $\mc P_{\mathrm{box}} \mc P_{2}$ in general. However, both $\mc P_{2} \mc P_{\mathrm{box}}$ and $\mc P_{\mathrm{box}} \mc P_{2}$ always find feasible points for the projection problem. 
\end{lemma}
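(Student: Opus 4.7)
My plan is to address the two assertions separately: first, exhibit small numerical counterexamples showing that each sequential projector can differ from the true projector $\mc P_{2,\mathrm{box}}$; second, give short geometric arguments that both sequential projectors always land in the joint feasible set $\gB \cap \gK$, where $\gB \defeq \{\mb \delta : \|\mb \delta\|_2 \le \eps\}$ and $\gK \defeq \{\mb \delta : \mb x + \mb \delta \in [0,1]^n\} = [-\mb x, \mb 1 - \mb x]$.

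For the non-agreement claim I would work in dimension $n = 2$, where the geometry is transparent. To break $\mc P_{\mathrm{box}} \mc P_{2}$, take $\mb x = (0.5, 0.5)$, $\eps = 1$, and $\mb w = (2, 0.1)$. The box $\gK = [-0.5, 0.5]^2$ sits strictly inside the unit $\ell_2$ ball, so the true projection is obtained by clipping to $\gK$ alone: $\mc P_{2,\mathrm{box}}(\mb w) = (0.5, 0.1)$. On the other hand, $\mc P_{2}(\mb w) = \mb w / \|\mb w\|_2$ has second coordinate $0.1/\sqrt{4.01} \approx 0.05$, which persists after the subsequent box clip, yielding $(0.5, 0.05) \ne (0.5, 0.1)$. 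To break $\mc P_{2} \mc P_{\mathrm{box}}$, take $\mb x = (0, 0)$, $\eps = 1$, $\mb w = (2, 0.5)$. A KKT check (both components positive and strictly less than $1$ after scaling) shows the true projector has the $\ell_2$ constraint active and the box constraints inactive, so $\mc P_{2,\mathrm{box}}(\mb w) = \mb w/\|\mb w\|_2$; in contrast, clipping first yields $\mc P_{\mathrm{box}}(\mb w) = (1, 0.5)$, whose $\ell_2$ projection $(1,0.5)/\sqrt{1.25}$ is a positive multiple of $(1, 0.5) \ne \mb w$, hence different. The minor subtlety here is that a single example cannot refute both orderings at once, which is why I use two complementary instances.

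For the feasibility of $\mc P_{2} \mc P_{\mathrm{box}}$, I would use the closed-form $\mc P_{2}(\mb v) = t \mb v$ with $t = \min(1,\ \eps / \|\mb v\|_2) \in [0, 1]$, which makes $\ell_2$-feasibility of the final output automatic. Because $\mb x \in [0, 1]^n$, the origin lies in $\gK$, and since $\gK$ is convex, it is star-shaped at the origin; in particular scalar multiplication by any $t \in [0, 1]$ maps $\gK$ into itself. Applying this to $\mc P_{\mathrm{box}}(\mb w) \in \gK$ yields $\mc P_{2}(\mc P_{\mathrm{box}}(\mb w)) \in \gK$.

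For the feasibility of $\mc P_{\mathrm{box}} \mc P_{2}$, box-feasibility is automatic. For the $\ell_2$ constraint I would invoke the non-expansiveness of Euclidean projection onto the convex set $\gK$: with $\mb u \defeq \mc P_{2}(\mb w)$ and the reference point $\mb 0 \in \gK$ (so $\mc P_{\mathrm{box}}(\mb 0) = \mb 0$),
\[
  \|\mc P_{\mathrm{box}}(\mc P_{2}(\mb w))\|_2 \;=\; \|\mc P_{\mathrm{box}}(\mb u) - \mc P_{\mathrm{box}}(\mb 0)\|_2 \;\le\; \|\mb u\|_2 \;\le\; \eps.
\]
The two pillars carrying the feasibility half of the lemma are thus that $\gK$ is star-shaped at the origin (because $\mb x \in [0,1]^n$) and that projection onto a convex set is non-expansive; neither step poses a real obstacle, and the only ``hard'' piece of the whole plan is the bookkeeping needed to display two separate counterexamples that together rule out both projection orderings.
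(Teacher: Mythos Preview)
Your proposal is correct and follows essentially the same route as the paper. The feasibility arguments are identical in spirit: the paper also uses nonexpansiveness of $\mc P_{\mathrm{box}}$ with reference point $\mb 0$ for the order $\mc P_{\mathrm{box}} \mc P_{2}$, and for $\mc P_{2} \mc P_{\mathrm{box}}$ it argues that scaling by $\eps/\norm{\mb y}_2 \in [0,1]$ shrinks coordinates while preserving signs, which is exactly your star-shapedness-at-$\mb 0$ observation; for non-agreement the paper presents geometric counterexamples via a figure (a point in the normal cone of a corner of the intersection) rather than your explicit numerical instances, but the content is the same.
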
 
\begin{proof} 
\begin{figure}[!htbp]
    \centering 
    \includegraphics[width=0.6\linewidth]{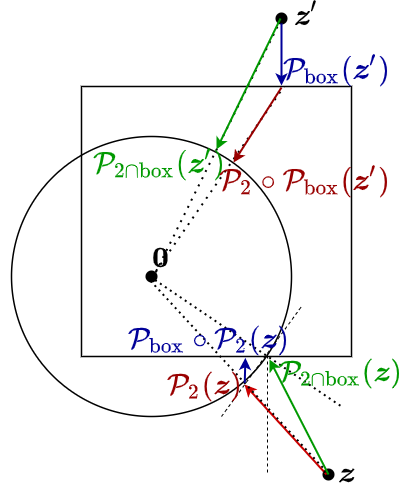}
    \caption{Illustration of the problem with the sequential projectors when $p = 2$. In general, neither of the sequential projectors produces the right projection. }
    \label{fig:l2_box_projection}
\end{figure}
For the non-equivalence, we present a couple of counter-examples in \cref{fig:l2_box_projection}. Note that the point $\mb z$ is inside the normal cone of the bottom right corner point of the intersection: $\setJu{\mb \delta \in \RJU^2: \norm{\mb \delta}_2 \le \eps} \cap \setJu{\mb \delta \in \RJU^2: \mb x + \mb \delta \in [0, 1]^2}$. 

For the feasibility claim, note that for any $\mb y \in \RJU^n$
\begin{align} 
    \mc P_2\paren{\mb y} = 
    \begin{cases}
        \eps \frac{\mb y}{\norm{\mb y}_2} &  \norm{\mb y}_2 \ge \eps \\
        \mb y   &  \text{otherwise}
    \end{cases}
\end{align} 
and for any $y \in \RJU$, 
\begin{align} 
    \mc P_{\mathrm{box}}\paren{y} = 
    \begin{cases}
        1-x &  y \ge 1-x \\
        y  &   -x < y < 1-x \\ 
        -x  & y \le -x 
    \end{cases}
\end{align} 
and $\mc P_{\mathrm{box}}(\mb y)$ acts on any $\mb y \in \RJU^n$ element-wise. For any $\mb y$ inside the $\ell_2$ ball, 
\begin{multline} 
    \norm{\mc P_{\mathrm{box}}\paren{\mb y}}_2 
    = \norm{\mc P_{\mathrm{box}}\paren{\mb y} - \mc P_{\mathrm{box}}\paren{\mb 0}}_2 \\
     \le \norm{\mb y - \mb 0}_2 = \norm{\mb y}_2 \le \eps
\end{multline} 
due to the contraction property of projecting onto convex sets. Therefore, $\mc P_{\mathrm{box}} \mc P_{2}(\mb y)$ is feasible for any $\mb y \in \RJU^n$. Now for any $\mb y$ inside the box: 
\begin{itemize} 
   \item if $\norm{\mb y}_2 < \eps$, $\mc P_2(\mb y) = \mb y$ and $\mc P_2(\mb y)$ remains in the box; 
   \item if $\norm{\mb y}_2 \ge \eps$, $\mc P_2(\mb y) = \eps \frac{\mb y}{\norm{\mb y}_2}$. Since $\eps/\norm{\mb y}_2 \in [0, 1]$, $\mb P_2(\mb y)$ shrinks each component of $\mb y$, but retains their original signs. Thus, $\mb P_2(\mb y)$ remains in the box if $\mb y$ is in the box. 
\end{itemize} 
We conclude that $\mc P_{2} \mc P_{\mathrm{box}}(\mb y)$ is feasible for any $\mb y$, completing the proof. 
\end{proof}

\subsection{Sketch of the BFGS-SQP algorithm in GRANSO}
\label{Sec:granso_summary} 
GRANSO is among the first optimization solvers targeting general non-smooth, non-convex problems with non-smooth constraints \cite{curtis2017bfgs}.

The key algorithm of the GRANSO package is a sequential quadratic programming (SQP) that employs a quasi-Newton algorithm, Broyden-Fletcher-Goldfarb-Shanno (BFGS) \cite{wright1999numerical}, and an exact penalty function (Penalty-SQP). 

The penalty-SQP calculates the search direction from the following quadratic programming (QP) problem:
\begin{align}
\label{penalty_sqp}
    \begin{split}
        \min_{\vd \in \RJU^n, \vs \in \RJU^p} \; & \mu \paren{f\paren{\vx_k} + \nabla f \paren{\vx_k}^{\TJU} \vd} \\
        + & \ve^{\TJU}\vs + \frac{1}{2} \vd^{\TJU} \mH_k \vd \\
        \st \; c \paren{\vx_k} + & \nabla c \paren{\vx_k}^{\TJU} \vd \leq \vs, \quad \vs \geq 0.
    \end{split}
\end{align}
Here, we abuse the notation $c(\cdot)$ to be the total constraints for simplicity (i.e., representing all $c$'s and $h$'s in \cref{eq:NO_form}).
The dual of problem (\cref{penalty_sqp}) is used in the GRANSO package:
\begin{align}
\label{penalty_sqp_dual}
    \begin{split}
        \max_{\vlambda\in\RJU^p} \; & \mu f (\vx_k) + c(\vx_k)^{\TJU}\vlambda \\
        & - \frac{1}{2} \paren{ \mu \nabla f (\vx_k) + \nabla c(\vx_k) \vlambda }^{\TJU} \mH_k^{-1} \\
        & \cdot \; \paren{ \mu \nabla f (\vx_k) + \nabla c(\vx_k) \vlambda } \\
        \st & \quad 0 \leq \vlambda \leq \ve,
    \end{split}
\end{align}
which has only simple box constraints that can be easily handled by many popular QP solvers such as OSQP (ADMM-based algorithm) \cite{osqp}.
Then the primal solution $\vd$ can be recovered from the dual solution $\vlambda$ by solving (\ref{penalty_sqp_dual}):
\begin{align}\label{searching_direction}
    \vd = - \mH_k^{-1} \paren{ \mu \nabla f (\vx_k) + \nabla c(\vx_k) \vlambda }.
\end{align}

The search direction calculated at each step controls the trade-off between minimizing the objective and moving towards the feasible region. To measure how much violence the current search direction will give, a linear model of constraint violation is used:
\begin{align}\label{contr_viol}
    l(\vd;\vx_k) \defeq \norm{\max\Brac{c(\vx_k)+\nabla c(\vx_k)^{\TJU}\vd,\bm{0}}}_{1}.
\end{align}

To dynamically set the penalty parameter, a steering strategy as \cref{alg:steering} is used:

\begin{algorithm}[!tb]
\caption{\\ \text{ } \quad \quad $\brac{\vd_k,{\mu}_{\text{new}}}= \texttt{sqp\_steering}(\vx_k,\mH_k,{\mu})$}
\label{alg:steering}
\begin{algorithmic}[1]
\Require $\vx_k, \mH_k,\mu$ at current iteration 
\Require constants $c_v\in(0,1),c_{\mu} \in (0,1)$ 
\State Calculate $\vd_k$ from \cref{searching_direction} and \formulation (\ref{penalty_sqp_dual}) with $\mu_{\text{new}} = \mu$
\If{$l_{\delta}(\vd_k;\vx_k)< c_v v(\vx_k)$}
\State Calculate $\tilde{\vd_k}$ from \cref{searching_direction} and \formulation (\ref{penalty_sqp_dual}) with $\mu=0$
\While{$l_{\delta}(\vd_k;\vx_k)< c_v l_{\delta}(\tilde{\vd_k};\vx_k)$}
\State $\mu_{new} \defeq c_{\mu} \mu_{\text{new} }$
\State Calculate $\vd_k$ from \cref{searching_direction} and \formulation (\ref{penalty_sqp_dual}) with $\mu=\mu_{\text{new}}$
\EndWhile
\EndIf 
\State \Return{$\vd_k,{\mu}_{\text{new}}$}
\end{algorithmic}
\end{algorithm}

For non-smooth problems, it is usually hard to find a reliable stopping criterion, as the norm of the gradient will not decrease when approaching a minimizer. GRANSO uses an alternative stopping strategy, which is based on the idea of gradient sampling \cite{lewis2013nonsmooth} \cite{burke2020gradient}.

Define the neighboring gradient information (from the $p$ most recent iterates) as:
\begin{align}
    \begin{split}
        & \mG \defeq \brac{\nabla f(\vx_{x_{k+1-l}})\hdots \nabla f(\vx_k)} \text{,}\\
    & \mJ_i \defeq \brac{\nabla c_i(\vx_{x_{k+1-l}})\hdots \nabla c_i (\vx_k)}\text{,} \\ & \; i \in \Brac{1,\hdots ,p}
    \end{split}
\end{align}

Augment \formulation (\ref{penalty_sqp}) and its dual \formulation (\ref{penalty_sqp_dual}) in the steering strategy, we can obtain the augmented dual problem:
\begin{align}\label{QP_termination}
    \begin{split}
        \max_{\vsigma\in \RJU^l, \vlambda\in \RJU^{pl}} 
        \sum_{i=1}^p & c_i(\vx_k)\ve^{\TJU} \vlambda_i  \\
        - \frac{1}{2} \begin{bmatrix} \vsigma \\ \vlambda \end{bmatrix}^{\TJU} \begin{bmatrix} \mG,\mJ_1,\hdots,\mJ_p \end{bmatrix}^{\TJU} & \mH_k^{-1} \begin{bmatrix} \mG,\mJ_1,\hdots,\mJ_p \end{bmatrix}\begin{bmatrix} \vsigma \\ \vlambda \end{bmatrix} \\
         \st \; \bm{0} \leq \vlambda_i \leq \ve, \quad & \ve^{\TJU}\vsigma = \mu, \quad \vsigma \geq \bm{0}
    \end{split}
\end{align}

By solving \cref{QP_termination}, we can obtain $\vd_{\diamond}$:
\begin{align}\label{d_diamond}
    \begin{split}
    &    \vd_{\diamond} = \mH_k^{-1} \begin{bmatrix} \mG,\mJ_1,\hdots,\mJ_p \end{bmatrix} \begin{bmatrix} \vsigma \\ \vlambda \end{bmatrix}
    \end{split}
\end{align}
If the norm of $\vd_{\diamond}$ is sufficiently small, the current iteration can be viewed as near a small neighborhood of a stationary point.

\begin{algorithm}[!tb]
\caption{$ \\ \text{ } \quad \quad \brac{ \vx_*,f_*,\vv_* } = \texttt{bfgs\_sqp}\paren{f(\cdot),\vc(\cdot), \vx_0, \mu_0})$ }
\label{alg:bfgs_sqp}
\begin{algorithmic}[1]
\Require $f,\vc,\vx_0,\mu_0$
\Require  constants $\tau_{\diamond},\tau_{v}$
\State $\mH_0 \defeq \mI,\ \mu \defeq \mu_0$
\State $\phi(\cdot) = \mu f(\cdot) + v(\cdot)$
\State $\nabla \phi(\cdot) = \mu \nabla f(\cdot) + \sum_{i \in \mathcal{P} } \nabla c_i(\cdot) $
\State $v(\cdot) = \norm{\max \Brac{c(\cdot),0}}_{1}$
\State  $\phi_0 \defeq \phi(\vx_0;\mu),\nabla \phi_0 \defeq \nabla \phi(\vx_0;\mu),v_0 \defeq v(\vx_0)$
\For {$k = 0,1,2,\hdots$}
\State $\brac{\vd_k,\hat{\mu}} \defeq \texttt{sqp\_steering}(\vx_k,\mH_k,\bm{\mu})$
\If{$\hat{\mu}< \mu$}
\State $\mu \defeq \hat{\mu}$
\State  $\phi_k \defeq \phi(\vx_k;\mu),\nabla \phi_k  \defeq \nabla \phi(\vx_k;\mu),v_k \defeq v(\vx_k)$
\EndIf
\State{$\brac{\vx_{k+1},\phi_{k+1}, \nabla \phi_{k+1}, v_{k+1} } \defeq \texttt{Armijo\_Wolfe}\paren{\vx_k,\phi_k,\nabla \phi_k, \phi(\cdot) , \nabla \phi(\cdot) } $}
\State Get $\vd_{\diamond}$ from \cref{d_diamond} and \formulation (\ref{QP_termination})
\If{$\norm{\vd_{\diamond}}_{2}< \tau_{\diamond}$ and $v_{k+1}<\tau_v$}
\State break
\EndIf
\State BFGS update $\mH_{k+1}$
\EndFor
\State \Return{$\vx_*,f_*,\vv_*$}
\end{algorithmic}
\end{algorithm}

\subsection{Danskin's theorem and min-max optimization}
\label{sec:danskin_minmax}
In this section, we discuss the importance of computing a good solution to the inner maximization problem when applying first-order methods for AT, i.e., solving \formulation (\ref{eq:minmax_obj}). 

Consider the following minimax problem: 
\begin{equation}\label{app:danskin}
\min_{\mb \theta} g(\mb \theta) \doteq  \left[ \max_{\mb x' \in \Delta} \;  h(\mb \theta, \mb x')\right], 
\end{equation}
where we assume that the function $h$ is locally Lipschitz continuous. To apply first-order methods to solve \cref{app:danskin}, one needs to evaluate a (sub)gradient of $g$ at any given $\mb \theta$. If $h(\mb \theta, \mb x')$ is smooth in $\mb \theta$, one can invoke Danskin's theorem for such an evaluation (see, for example, \cite[Appendix A]{madry2017towards}). However, in DL applications with non-smooth activations or losses, $h(\mb \theta, \delta)$ is not differentiable in $\mb \theta$, and hence a general version of Danskin's theorem is needed.

To proceed, we first introduce a few basic concepts in nonsmooth analysis; general background can be found in~\cite{Clarke1990Optimization,BagirovEtAl2014Introduction,CuiPang2021Modern}.  For a locally Lipschitz continuous function $\varphi:\mathbb{R}^n \to \mathbb{R}$, define its Clarke directional derivative at $\bar{\mb z}\in \mathbb{R}^n$ in any direction $\mb d\in \mathbb{R}^n$ as
\[
\varphi^\circ(\bar{\mb z}; \mb d)\doteq \limsup_{t\downarrow 0, \mb z \to \bar{\mb z}} \frac{\varphi(\mb z+t \mb d) - \varphi(\mb z)}{t}
\]
We say $\varphi$ is Clarke regular at $\bar{\mb z}$ if $\varphi^\circ(\bar{\mb z};\mb d) = \varphi^\prime(\bar{\mb z};\mb d)$ for any $\mb d\in \mathbb{R}^n$, where $\varphi^\prime(\bar{\mb z};\mb d) \doteq \displaystyle\lim_{t\downarrow 0} \frac{1}{t} \paren{\varphi(\bar{\mb z} +t \mb d) - \varphi(\bar{\mb z})}$ is the usual one-sided directional derivative. The Clarke subdifferential of $\varphi$ at $\bar{\mb z}$ is defined as
\[
\partial \varphi (\bar{\mb z}) \doteq \left\{\mb v \in \mathbb{R}^n: \varphi^\circ(\bar{\mb z};\mb d) \geq \mb v^\TJU \mb d \right\}
\]
The following result has its source in \cite[Theorem 2.1]{clarke1975generalized}; see also \cite[Section 5.5]{CuiPang2021Modern}.
\begin{theorem}
Assume that $\Delta$ in \cref{app:danskin} is a compact set, and the function $h$ satisfies
\begin{enumerate}
    \item $h$ is jointly upper semicontinuous in $(\mb \theta, \mb x')$;
    \item $h$ is locally Lipschitz continuous in $\mb \theta$, and the Lipschitz constant is uniform in $\mb x' \in \Delta$;
    \item $h$ is directionally differentiable in $\mb \theta$ for all $\mb x' \in \Delta$; 
\end{enumerate}
If $h$ is Clarke regular in $\mb \theta$ for all $\mb \theta$, and $\partial_{\mb \theta} h$ is upper semicontinuous in $(\mb \theta, \mb x')$, we have that for any $\bar{\mb \theta}$ 
\begin{align} 
\partial g(\bar{\mb \theta}) = \mathrm{conv}\{\partial h(\bar{\mb \theta},\mb x'): \mb x' \in \Delta^*(\bar{\mb \theta})\}
\end{align} 
where $\mathrm{conv}(\cdot)$ denotes the convex hull of a set, and $\Delta^*(\bar{\mb \theta})$ is the set of all optimal solutions of the inner maximization problem at $\bar{\mb \theta}$.
\end{theorem}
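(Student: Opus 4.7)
The plan is to follow the classical route for Danskin-type results under nonsmoothness: first upgrade the regularity of $g$ step by step (local Lipschitzness, then directional differentiability with an explicit maximization formula, then Clarke regularity), and finally convert this into a statement about subdifferentials via support functions.

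First, I would show that $g$ is locally Lipschitz near $\bar{\mb\theta}$. This follows from assumption (2) (uniform local Lipschitzness of $h(\cdot,\mb x')$ in $\mb\theta$) and compactness of $\Delta$: for any $\mb\theta_1,\mb\theta_2$ in a neighborhood of $\bar{\mb\theta}$, pick $\mb x'_i$ attaining $g(\mb\theta_i)$ (which exist by assumption (1) and compactness), and compare $g(\mb\theta_1)-g(\mb\theta_2)$ against $h(\mb\theta_1,\mb x'_2)-h(\mb\theta_2,\mb x'_2)$ and symmetrically. In particular, $g$ admits a Clarke directional derivative and a Clarke subdifferential at $\bar{\mb\theta}$.

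Next, I would establish the one-sided directional derivative formula
\[
g'(\bar{\mb\theta};\mb d) \;=\; \max_{\mb x'\in\Delta^*(\bar{\mb\theta})} h'(\bar{\mb\theta},\mb x';\mb d).
\]
The $\ge$ direction is immediate: for any $\mb x'^*\in\Delta^*(\bar{\mb\theta})$, the inequality $g(\bar{\mb\theta}+t\mb d)\ge h(\bar{\mb\theta}+t\mb d,\mb x'^*)$ and $g(\bar{\mb\theta})=h(\bar{\mb\theta},\mb x'^*)$, divided by $t\downarrow 0$, give the bound using assumption (3). For the $\le$ direction, pick $\mb x'(t)$ attaining $g(\bar{\mb\theta}+t\mb d)$; by compactness extract a cluster point $\mb x'^*$ along $t_k\downarrow 0$, and use assumption (1) together with $g(\bar{\mb\theta}+t_k\mb d)\to g(\bar{\mb\theta})$ (by continuity of $g$) to conclude $\mb x'^*\in\Delta^*(\bar{\mb\theta})$; uniform Lipschitzness in $\mb\theta$ then lets me replace $h(\bar{\mb\theta}+t_k\mb d,\mb x'(t_k))-h(\bar{\mb\theta},\mb x'(t_k))$ by the quantity at $\mb x'^*$ up to controllable error and pass to the limit.

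The third step, which I expect to be the main obstacle, is upgrading this identity from the ordinary directional derivative to the Clarke generalized directional derivative, i.e., proving that $g$ is Clarke regular at $\bar{\mb\theta}$ and
\[
g^{\circ}(\bar{\mb\theta};\mb d) \;=\; \max_{\mb x'\in\Delta^*(\bar{\mb\theta})} h^{\circ}(\bar{\mb\theta},\mb x';\mb d).
\]
The inequality $g'(\bar{\mb\theta};\mb d)\le g^{\circ}(\bar{\mb\theta};\mb d)$ is automatic. For the reverse, take sequences $\mb\theta_k\to\bar{\mb\theta}$ and $t_k\downarrow 0$ realizing the $\limsup$ in $g^{\circ}(\bar{\mb\theta};\mb d)$, and at each $\mb\theta_k+t_k\mb d$ pick a maximizer $\mb x'_k$. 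The upper semicontinuity of $\partial_{\mb\theta}h$ together with joint upper semicontinuity of $h$ and the mean-value representation via $\partial_{\mb\theta}h$ along $[\mb\theta_k,\mb\theta_k+t_k\mb d]$ lets me bound the difference quotient by $\langle \mb v_k,\mb d\rangle$ for some $\mb v_k\in\partial_{\mb\theta}h(\mb\xi_k,\mb x'_k)$ with $\mb\xi_k\to\bar{\mb\theta}$; extracting subsequences with $\mb x'_k\to\mb x'^*\in\Delta^*(\bar{\mb\theta})$ and $\mb v_k\to\mb v\in\partial_{\mb\theta}h(\bar{\mb\theta},\mb x'^*)$ by the outer semicontinuity assumption, together with the Clarke regularity of $h(\cdot,\mb x'^*)$, caps the limit by $h^{\circ}(\bar{\mb\theta},\mb x'^*;\mb d)=h'(\bar{\mb\theta},\mb x'^*;\mb d)$.

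Finally, both sides are positively homogeneous, sublinear functions of $\mb d$, so I would conclude by the standard support-function dictionary. We have
\[
\sigma_{\partial g(\bar{\mb\theta})}(\mb d) \;=\; g^{\circ}(\bar{\mb\theta};\mb d) \;=\; \max_{\mb x'\in\Delta^*(\bar{\mb\theta})} \sigma_{\partial h(\bar{\mb\theta},\mb x')}(\mb d) \;=\; \sigma_{\operatorname{conv}\bigcup_{\mb x'\in\Delta^*(\bar{\mb\theta})}\partial h(\bar{\mb\theta},\mb x')}(\mb d),
\]
and since $\Delta^*(\bar{\mb\theta})$ is compact (closed subset of $\Delta$) and $(\mb\theta,\mb x')\mapsto\partial_{\mb\theta}h(\mb\theta,\mb x')$ is outer semicontinuous with bounded values near $\bar{\mb\theta}$, the union $\bigcup_{\mb x'\in\Delta^*(\bar{\mb\theta})}\partial h(\bar{\mb\theta},\mb x')$ is itself compact; thus its convex hull is already closed, no closure is needed, and the equality of support functions yields the claimed identity for $\partial g(\bar{\mb\theta})$.
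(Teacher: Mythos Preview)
The paper does not actually prove this theorem: it is stated with attribution to Clarke's 1975 paper and to Section~5.5 of Cui--Pang, with no argument given in the text. Your proposal follows precisely the classical route from those sources (local Lipschitzness of $g$, a directional-derivative maximization formula via compactness and upper semicontinuity of maximizers, upgrading to the Clarke directional derivative using Lebourg's mean-value inequality together with the assumed outer semicontinuity of $\partial_{\mb\theta}h$, and finally the support-function duality), so there is nothing to compare against and your outline is the standard one.

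One small remark: in your step~2, the phrase ``uniform Lipschitzness lets me replace $h(\bar{\mb\theta}+t_k\mb d,\mb x'(t_k))-h(\bar{\mb\theta},\mb x'(t_k))$ by the quantity at $\mb x'^*$'' is not quite right as stated, since Lipschitzness in $\mb\theta$ alone gives no control over the $\mb x'$-dependence of the difference quotient. The clean fix is exactly the mechanism you already use in step~3: apply Lebourg's mean-value theorem to get $\langle \mb v_k,\mb d\rangle$ with $\mb v_k\in\partial_{\mb\theta}h(\mb\xi_k,\mb x'(t_k))$, then invoke outer semicontinuity of $\partial_{\mb\theta}h$ and Clarke regularity of $h(\cdot,\mb x'^*)$ to pass to the limit. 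In fact, once step~3 is done, step~2 is redundant, since $g'(\bar{\mb\theta};\mb d)\le g^{\circ}(\bar{\mb\theta};\mb d)$ and the easy inequality $g'(\bar{\mb\theta};\mb d)\ge \max_{\mb x'\in\Delta^*(\bar{\mb\theta})} h'(\bar{\mb\theta},\mb x';\mb d)$ together with $h'=h^{\circ}$ already sandwich everything.
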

The above theorem indicates that in order to get an element from the subdifferential set $\partial g(\bar{\mb \theta})$, we need to get at least one {\bf optimal} solution $\mb x' \in \Delta^*(\bar{\mb \theta})$. A suboptimal solution to the inner maximization problem may result in a useless direction for the algorithm to proceed. To illustrate this, let us consider a simple one-dimensional example
\[
\min_\theta  g(\theta) := \left[\, \max_{-1\leq x' \leq 1}  \;\; \max(\theta x', 0)^2\,\right]
\]
which corresponds to a one-layer neural network with one data point $(0,0)$, the ReLU activation function and squared loss. Starting at $\theta_0 = 1$, we get the first inner maximization problem $\max_{-1\leq x' \leq 1} \max(x',0)^2$. Although its global optimal solution is $x'_* = 1$, the point $x' = 0$ is a stationary point satisfying the first-order optimality condition. If the latter point is mistakenly adopted to compute an element in $\partial g(\theta^0)$, it would result in a zero direction so that the overall gradient descent algorithm cannot proceed. 






\bibliographystyle{IEEEtran}
\bibliography{reference}


\end{document}